\providecommand*{\boxast}{%
  \mathbin{
    \mathpalette\@boxit{*}%
  }%
}
\newcommand*{\@boxit}[2]{%
  \sbox0{$\m@th#1\Box$}%
  \ifx#1\displaystyle \ht0=\dimexpr\ht0+.05ex\relax \fi
  \ifx#1\textstyle \ht0=\dimexpr\ht0+.05ex\relax \fi
  \ifx#1\scriptstyle \ht0=\dimexpr\ht0+.04ex\relax \fi
  \ifx#1\scriptscriptstyle \ht0=\dimexpr\ht0+.065ex\relax \fi
  \sbox2{$#1\vcenter{}$}
  \rlap{%
    \hbox to \wd0{%
      \hfill
      \raisebox{%
        \dimexpr.5\dimexpr\ht0+\dp0\relax-\ht2\relax
      }{$\m@th#1#2$}%
      \hfill
    }%
  }%
  \Box
}
\def\BState{\State\hskip-\ALG@thistlm}
\newcommand{\tsn}[1]{{\left\vert\kern-0.25ex\left\vert\kern-0.25ex\left\vert #1 
    \right\vert\kern-0.25ex\right\vert\kern-0.25ex\right\vert}}
\definecolor{darkred}{RGB}{150,0,0}
\definecolor{darkgreen}{RGB}{0,150,0}
\definecolor{darkblue}{RGB}{0,0,200}
\newtheorem{theorem}{Theorem}[section]
\newtheorem{assumption}{Assumption}
\newtheorem{lemma}[theorem]{Lemma}
\newtheorem{corollary}[theorem]{Corollary}
\newtheorem{definition}[theorem]{Definition}
\newcommand{\eps}{\varepsilon}
\newcommand{\ind}[1]{{\mathcal{I}}(#1)}
\newcommand{\bp}{\beta}
\newcommand{\bn}{\alpha}
\newcommand{\beq}{\begin{equation}}
\newcommand{\eeq}{\end{equation}}
\newcommand{\nn}{\nonumber}
\newcommand{\la}{\lambda}
\newcommand{\A}{{\mtx{A}}}
\newcommand{\B}{{{\mtx{B}}}}
\newcommand{\Sb}{{{\mtx{S}}}}
\newcommand{\Lc}{{\cal{L}}}
\newcommand{\Jc}{{\cal{J}}}
\newcommand{\Qb}{{\mtx{Q}}}
\newcommand{\rng}{\gamma}
\newcommand{\Cb}{{\mtx{C}}}
\newcommand{\Iden}{{\mtx{I}}}
\newcommand{\order}[1]{{\cal{O}}(#1)}
\newcommand{\smn}[1]{{\sigma_{\min}(#1)}}
\newcommand{\smx}[1]{{\sigma_{\max}(#1)}}
\newcommand{\z}{{\vct{z}}}
\newcommand{\tn}[1]{\|{#1}\|_{\ell_2}}
\newcommand{\tin}[1]{\|{#1}\|_{\ell_\infty}}
\newcommand{\bteta}{\boldsymbol{\theta}}
\newcommand{\Nn}{\mathcal{N}}
\newcommand{\vb}{\vct{v}}
\newcommand{\w}{\vct{w}}
\newcommand{\li}{\left<}
\newcommand{\ri}{\right>}
\newcommand{\ab}{\vct{a}}
\newcommand{\g}{{\vct{g}}}
\newcommand{\mat}[1]{{\text{mat}\left(#1\right)}}
\newcommand{\opnorm}[1]{\left\|#1\right\|}
\newcommand{\fronorm}[1]{\left\|#1\right\|_{F}}
\newcommand{\twonorm}[1]{\left\|#1\right\|_{\ell_2}}
\newcommand{\infnorm}[1]{\left\|#1\right\|_{\ell_\infty}}
\newcommand{\abs}[1]{\left|#1\right|}
\newcommand{\x}{\vct{x}}
\newcommand{\rb}{\vct{r}}
\newcommand{\y}{\vct{y}}
\newcommand{\W}{\mtx{W}}
\newcommand{\bgl}{{~\big |~}}
\definecolor{emmanuel}{RGB}{255,127,0}
\newcommand{\R}{\mathbb{R}}
\newcommand{\Pro}{\mathbb{P}}
\newcommand{\E}{\operatorname{\mathbb{E}}}
\newcommand{\grad}[1]{{\nabla\Lc(#1)}}
\newcommand{\vct}[1]{\bm{#1}}
\newcommand{\mtx}[1]{\bm{#1}}
\newcommand{\X}{{\mtx{X}}}
\numberwithin{equation}{section} 
\def \endprf{\hfill {\vrule height6pt width6pt depth0pt}\medskip}
\newenvironment{proof}{\noindent {\bf Proof} }{\endprf\par}
\title{Towards moderate overparameterization:\\global convergence guarantees for training shallow neural networks}
\author{Samet Oymak\thanks{{Department of Electrical and Computer Engineering, University of California, Riverside, CA}}\quad and\quad Mahdi Soltanolkotabi\thanks{Ming Hsieh Department of Electrical Engineering, University of Southern California, Los Angeles, CA}}
\begin{document}
\maketitle

\begin{abstract} Many modern neural network architectures are trained in an overparameterized regime where the parameters of the model exceed the size of the training dataset. Sufficiently overparameterized neural network architectures in principle have the capacity to fit any set of labels including random noise. However, given the highly nonconvex nature of the training landscape it is not clear what level and kind of overparameterization is required for first order methods to converge to a global optima that perfectly interpolate any labels. A number of recent theoretical works have shown that for very wide neural networks where the number of hidden units is polynomially large in the size of the training data gradient descent starting from a random initialization does indeed converge to a global optima. However, in practice much more moderate levels of overparameterization seems to be sufficient and in many cases overparameterized models seem to perfectly interpolate the training data as soon as the number of parameters exceed the size of the training data by a constant factor. Thus there is a huge gap between the existing theoretical literature and practical experiments. In this paper we take a step towards closing this gap. Focusing on shallow neural nets and smooth activations, we show that (stochastic) gradient descent when initialized at random converges at a geometric rate to a nearby global optima as soon as the square-root of the number of network parameters exceeds the size of the training data. Our results also benefit from a fast convergence rate and continue to hold for non-differentiable activations such as Rectified Linear Units (ReLUs).
\end{abstract}
\section{Introduction}
\subsection{Motivation}
Modern neural networks typically have more parameters than the number of data points used to train them. This property allows neural nets to fit to any labels even those that are randomly generated \cite{zhang2016understanding}. Despite many empirical evidence of this capability the conditions under which this occurs is far from clear. In particular, due to this overparameterization, it is natural to expect the training loss to have numerous global optima that perfectly interpolate the training data. However, given the highly nonconvex nature of the training landscape it is far less clear why (stochastic) gradient descent can converge to such a globally optimal model without getting stock in subpar local optima or stationary points. Furthermore, what is the exact amount and kind of overpametrization that enables such global convergence? Yet another challenge is that due to
overparameterization, the training loss may have infinitely many global minima and it is critical to understand the properties of the solutions found by first-order optimization schemes such as (stochastic) gradient descent starting from different initializations. 

Recently there has been interesting progress aimed at demystifying the global convergence of gradient descent for overparameterized networks. However, most existing results focus on either quadratric activations \cite{soltanolkotabi2018theoretical, venturi2018spurious} or apply to very specialized forms of overparameterization \cite{li2018learning,allen2018learning,allen2018convergence,du2018gradient,zou2018stochastic, du2018gradient2} involving unrealistically wide neural networks where the number of hidden nodes are polynomially large in the size of the dataset. In contrast to this theoretical literature popular neural networks require much more modest amounts of overparameterization and do not typically involve extremely wide architectures. In particular (stochastic) gradient descent starting from a random initialization seems to find globally optimal network parameters that perfectly interpolate the training data as soon as the number of parameters exceed the size of the training data by a constant factor. See Section \ref{numeric} for some numerical experiments corroborating this claim. Also in such overparameterized regimes gradient descent seems to converge much faster than existing results suggest.

In this paper we take a step towards closing the significant gap between the theory and practice of overparameterized neural network training. We show that for training neural networks with one hidden layer, (stochastic) gradient descent starting from a random initialization finds globally optimal weights that perfectly fit any labels as soon as the number of parameters in the model exceed the square of the size of the training data by numerical constants only depending on the input training data. This result holds for networks with differentiable activations. We also develop results of a similar flavor, albeit with slightly worse levels of overparameterization, for neural networks involving Rectified Linear Units (ReLU) activations. Our results also show that gradient descent converges at a much faster rate than existing gurantees. Our theory is based on combining recent results on overparameterized nonlinear learning \cite{Oymak:2018aa} with more intricate tools from random matrix theory and bounds on the spectrum of Hadamard matrices. While in this paper we have focused on shallow neural networks with a quadratic loss, the mathematical techniques we develop are quite general and may apply more broadly. For instance, our techniques may help improve the existing guarantees for overparameterized deep networks (\cite{allen2018convergence,du2018gradient2}) or allow guarantees for other loss functions. We leave a detailed study of these cases to future work.



\subsection{Model}
\begin{figure}
\centering
\begin{tikzpicture}
\node at (0,0) {\includegraphics[scale=0.15]{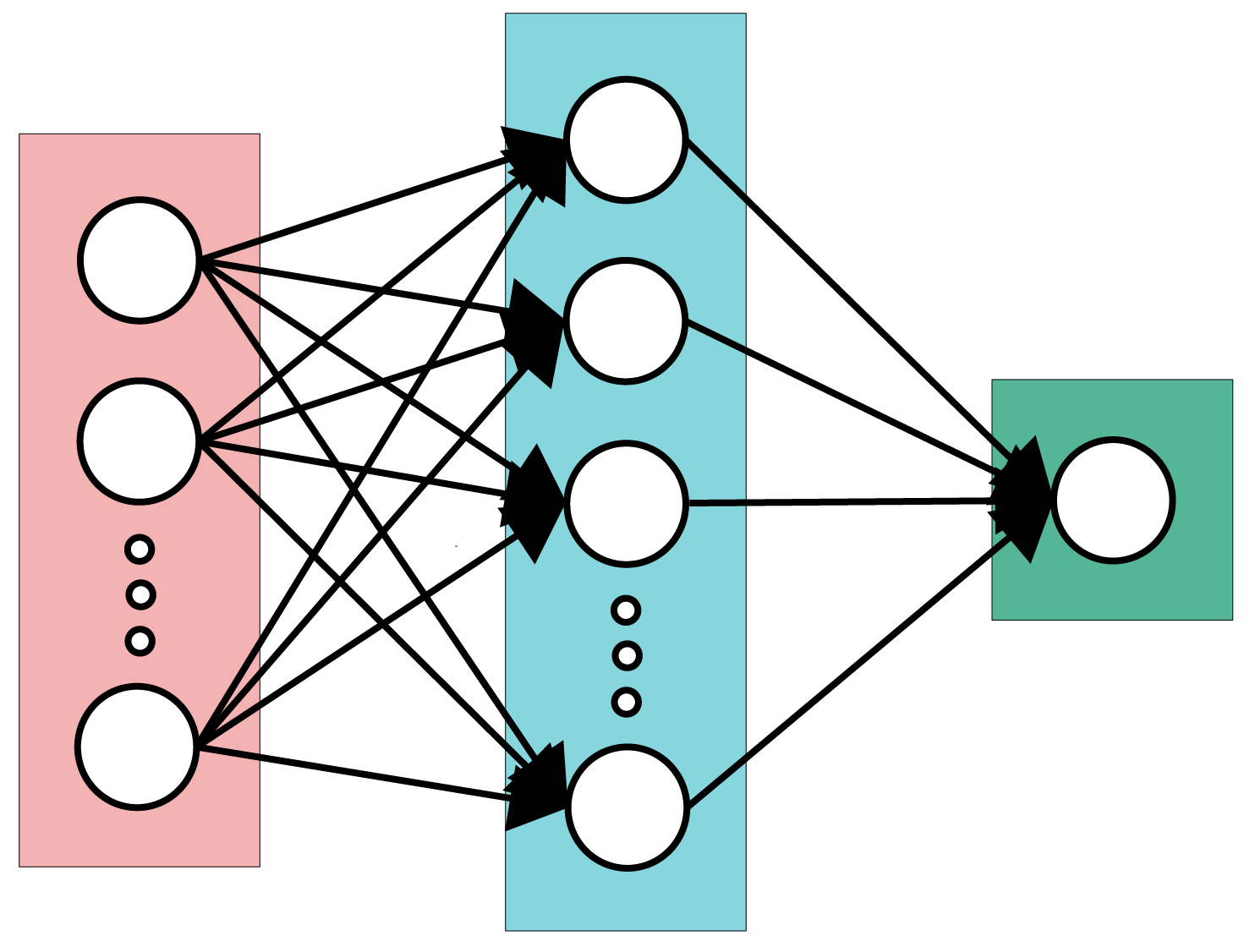}};
\node[black] at (-3,1.3) {$x_1$};
\node[black] at (-3.05,0.17) {$x_2$};
\node[black] at (-3.1,-1.7) {$x_d$};
\node[black] at (-3.1,-2.7) {$\vct{x}$};
\node[red] at (-3.05,-3) {input layer};
\node[black] at (0,2.1) {$h_1$};
\node[black] at (0,0.95) {$h_2$};
\node[black] at (0,-0.1) {$h_3$};
\node[black] at (0,-2) {$h_k$};
\node[black] at (0,-3.1) {{\small$\vct{h}=\phi(\mtx{W}\vct{x})$}};
\node[blue] at (0,-3.5) {hidden layer};
\node[black] at (3.05,-0.15) {$y$};
\node[black] at (1.45,1.3) {$\vct{v}$};
\node[black] at (-1.5,2) {$\mtx{W}$};
\node[black] at (3.1,-1.3) {{\small$y=\vct{v}^T\phi(\mtx{W}\vct{x})$}};
\node[teal] at (3.1,-1.7) {output layer};
\end{tikzpicture}
\caption{Illustration of a one-hidden layer neural net with $d$ inputs, $k$ hidden units and a single output.}
\label{neuralnet}
\vspace{0.5cm}
\end{figure}
We shall focus on neural networks with only one hidden layer with $d$ inputs, $k$ hidden neurons and a single output as depicted in Figure \ref{neuralnet}. The overall input-output relationship of the neural network in this case is a function $f(\cdot;\W):\R^d\rightarrow\R$ that maps the input vector $\vct{x}\in\R^d$ into a scalar output via the following equation
\begin{align*}
\vct{x}\mapsto f(\vct{x};\mtx{W})=\sum_{\ell=1}^k\vct{v}_\ell \phi\left(\langle \vct{w}_\ell,\vct{x}\rangle\right).
\end{align*}
In the above the vectors $\vct{w}_\ell\in\R^d$ contains the weights of the edges connecting the input to the $\ell$th hidden node and $\vct{v}_\ell\in\R$ is the weight of the edge connecting the $\ell$th hidden node to the output. Finally, $\phi:\R\rightarrow\R$ denotes the activation function applied to each hidden node. For more compact notation we gather the weights $\vct{w}_\ell/\vct{v}_\ell$ into larger matrices $\mtx{W}\in\R^{k\times d}$ and $\vct{v}\in\R^k$ of the form
\begin{align*}
\mtx{W}=\begin{bmatrix}\vct{w}_1^T\\\vct{w}_2^T\\\vdots\\\vct{w}_k^T\end{bmatrix}\quad\text{and}\quad\vct{v}=\begin{bmatrix}{v}_1\\{v}_2\\\vdots\\ {v}_k\end{bmatrix}.
\end{align*}
We can now rewrite our input-output model in the more succinct form
\begin{align}
\label{model2}
\vct{x}\mapsto f(\vct{x};\mtx{W}):=\vct{v}^T\phi(\mtx{W}\vct{x}).
\end{align}
Here, we have used the convention that when $\phi$ is applied to a vector it corresponds to applying $\phi$ to each entry of that vector. 
\subsection{Notations}
Before we begin discussing our main results we discuss some notation used throughout the paper. For a matrix $\mtx{X}\in\R^{n\times d}$ we use $\smn{\mtx{X}}$ and $\smx{\mtx{X}}=\opnorm{\mtx{X}}$ to denote the minimum and maximum singular value of $\mtx{X}$. For two matrices 
\begin{align*}
\mtx{A}=\begin{bmatrix}\mtx{A}_1\\\mtx{A}_2\\\vdots\\\mtx{A}_p\end{bmatrix}\in\R^{p\times m}\quad\text{and}\quad \mtx{B}=\begin{bmatrix}\mtx{B}_1\\\mtx{B}_2\\\vdots\\\mtx{B}_p\end{bmatrix}\in\R^{p\times n},
\end{align*}
we define their Khatri-Rao product as $\mtx{A} * \mtx{B}  = [\mtx{A}_1\otimes \mtx{B}_1,\dotsc, \mtx{A}_p\otimes \mtx{B}_p]\in\R^{p\times mn}$, where $\otimes$ denotes the Kronecher product. For two matrices $\A$ and $\mtx{B}$,
we denote their Hadamard (entrywise) product by $\A\odot\mtx{B}$. For a matrix $\mtx{A}\in\R^{n\times n}$, $\mtx{A}^{\odot r}\in\R^{n\times n}$ is defined inductively via $\mtx{A}^{\odot r}=\mtx{A}\odot \left(\mtx{A}^{\odot (r-1)}\right)$ with $\mtx{A}^{\odot 0}=\vct{1}\vct{1}^T$. Similarly, for a matrix $\X\in\R^{n\times d}$ with rows given by $\vct{x}_i\in\R^d$ we define the $r$-way Khatrio-Rao matrix $\X^{*r}\in\R^{n\times d^r}$ as a matrix with rows given by
\begin{align*}
\big[\X^{*r}\big]_i=\left(\underbrace{\vct{x}_i\otimes \vct{x}_i\otimes \ldots \otimes \vct{x}_i}_{r} \right)^T. 
\end{align*}
For a matrix $\mtx{W}\in\R^{k\times d}$ we use vect$(\mtx{W})\in\R^{kd}$ to denote a column vector obtained by concatenating the rows $\vct{w}_1,\vct{w}_2,\ldots,\vct{w}_k\in\R^d$ of $\mtx{W}$. That is,
$\text{vect}(\W)=\begin{bmatrix}\vct{w}_1^T&\vct{w}_2^T &\ldots &\vct{w}_k^T\end{bmatrix}^T$. Similarly, we use $\mat{\vct{w}}\in\R^{k\times d}$ to denote a $k\times d$ matrix obtained by reshaping the vector $\vct{w}\in\R^{kd}$ across its rows. Throughout, for a differentiable function $\phi:\R\mapsto\R$ we use $\phi'$ and $\phi''$ to denote the first and second derivative. If the function is not differentiable but only has isolated non-differentiable points we use $\phi'$ to denote a generalized derivative \cite{Clarke1975}. For instance, for $\phi(z)=ReLU(z)=\max(0,z)$ we have $\phi'(z)=\mathbb{I}_{\{z\ge 0\}}$ with $\mathbb{I}$ denoting the indicator mapping. We use $c$ and $C$ to denote numerical constants whose values may change from line to line. We also use the notation $c_z$ to denote a numerical constant only depending on the variable or function $z$. 
\section{Main results}\label{sec main resu}
When training a neural network, one typically has access to a data set consisting of $n$ feature/label pairs $(\vct{x}_i,y_i)$ with $\vct{x}_i\in\R^d$ representing the features and $y_i$ the associated labels. We wish to infer the best weights $\vct{v},\mtx{W}$ such that the mapping $f(\vct{x};\mtx{W}):=\vct{v}^T\phi(\mtx{W}\vct{x})$ best fits the training data. In this paper we assume $\vct{v}\in\R^k$ is fixed and we train for the input-to-hidden weights $\mtx{W}$ via a quadratic loss. The training optimization problem then takes the form
\begin{align}
\label{neuralopt}
\underset{\mtx{W}\in\R^{k\times d}}{\min}\text{ }\mathcal{L}(\mtx{W}):=\frac{1}{2}\sum_{i=1}^n \left(\vct{v}^T\phi\left(\mtx{W}\vct{x}_i\right)-y_i\right)^2.
\end{align}
To optimize this loss we run (stochastic) gradient descent starting from a random initialization $\W_0$. We wish to understand: (1) when such iterative updates lead to a globally optimal solution that perfectly interpolates the training data, (2) what are the properties of the solutions these algorithms converge to, and (3) what is the required amount of overparameterization necessary for such events to occur. We begin by stating results for training via gradient descent for smooth activations in Section \ref{smooth} followed by ReLU activations in Section \ref{ReLU}. Finally, we discuss results for training via Stochastic Gradient Descent (SGD) in Section \ref{SGD}.


\subsection{Training networks with smooth activations via gradient descent}
\label{smooth}
In our first result we consider a one-hidden layer neural network with smooth activations and study the behavior of gradient descent in an over-parameterized regime where the number of parameters is sufficiently large. 
\begin{theorem}\label{thmshallowsmooth} Consider a data set of input/label pairs $\vct{x}_i\in\R^d$ and $y_i\in\R$ for $i=1,2,\ldots,n$ aggregated as rows/entries of a data matrix $\mtx{X}\in\R^{n\times d}$ and a label vector $\vct{y}\in\R^n$. Without loss of generality we assume the dataset is normalized so that $\twonorm{\vct{x}_i}=1$. Also consider a one-hidden layer neural network with $k$ hidden units and one output of the form $\vct{x}\mapsto \vct{v}^T\phi\left(\mtx{W}\vct{x}\right)$ with $\mtx{W}\in\R^{k\times d}$ and $\vct{v}\in\R^k$ the input-to-hidden and hidden-to-output weights. We assume the activation $\phi$ has bounded derivatives i.e.~$\abs{\phi'(z)}\le B$ and $\abs{\phi''(z)}\le B$ for all $z$ and set $\mu_\phi=\E_{g\sim\mathcal{N}(0,1)}[g\phi'(g)]$. Furthermore, we set half of the entries of $\vct{v}$ to $\frac{\twonorm{\y}}{\sqrt{kn}}$ and the other half to $-\frac{\twonorm{\y}}{\sqrt{kn}}$\footnote{If $k$ is odd we set one entry to zero $\lfloor \frac{k-1}{2}\rfloor$ to $\frac{\twonorm{\y}}{\sqrt{kn}}$ and $\lfloor \frac{k-1}{2}\rfloor$ entries to $-\frac{\twonorm{\y}}{\sqrt{kn}}$.} and train only over $\mtx{W}$. Starting from an initial weight matrix $\mtx{W}_0$ selected at random with i.i.d.~$\mathcal{N}(0,1)$ entries we run Gradient Descent (GD) updates of the form $\mtx{W}_{\tau+1}=\mtx{W}_\tau-\eta\nabla \mathcal{L}(\mtx{W}_\tau)$ on the loss \eqref{neuralopt} with step size $\eta= \frac{n\bar{\eta}}{2B^2\tn{\vct{y}}^2\opnorm{\X}^2}$ where $\bar{\eta}\le 1$. Then, as long as
\begin{align}
\label{overparam}
\sqrt{kd}\ge c \frac{B^2}{\mu_\phi^2}(1+\delta) \kappa(\mtx{X}) n\quad\text{holds with}\quad \kappa(\mtx{X}):=\frac{\sqrt{\frac{d}{n}}\opnorm{\mtx{X}}}{\sigma_{\min}^2\left(\mtx{X}*\mtx{X}\right)},
\end{align}
and $c$ is a fixed numerical constant, then with probability at least $1-\frac{1}{n}-e^{-\delta^2\frac{n}{2\opnorm{\X}^2}}$ all GD iterates obey
\begin{align*}
&\tn{f(\mtx{W}_\tau)-\y}\leq \left(1-\frac{\bar{\eta}}{32} \frac{\mu_{\phi}^2}{B^2}\frac{\sigma_{\min}^2\left(\X*\X\right)}{\opnorm{\X}^2}\right)^\tau\tn{f(\mtx{W}_0)-\y},\\
&\frac{\mu_{\phi}}{\sqrt{32}}\frac{\twonorm{\y}}{\sqrt{n}}\smn{\X*\X}\fronorm{\mtx{W}_\tau-\mtx{W}_0}+\tn{f(\mtx{W}_\tau)-\y}\leq \tn{f(\mtx{W}_0)-\y}.
\end{align*}
\noindent Furthermore, the total gradient path obeys
\begin{align*}
\sum_{\tau=0}^{\infty} \fronorm{\W_{\tau+1}-\W_\tau}\le \frac{\sqrt{32}}{\mu_{\phi}}\frac{\sqrt{n}}{\twonorm{\y}}\frac{\tn{f(\mtx{W}_0)-\y}}{\smn{\X*\X}}.
\end{align*}
\end{theorem}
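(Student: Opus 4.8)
The plan is to view \eqref{neuralopt} as an overparameterized nonlinear least squares problem and apply the gradient-descent analysis of \cite{Oymak:2018aa}. Set $f:\R^{kd}\to\R^n$ with $f(\W)_i=\vct{v}^T\phi(\W\vct{x}_i)$, so that $\Lc(\W)=\tfrac12\tn{f(\W)-\y}^2$ and GD on \eqref{neuralopt} is exactly GD on this objective. A direct differentiation gives the Jacobian of $f$ at $\W$ with respect to $\vc{\W}$ as $\Jc(\W)=\big(\phi'(\X\W^T)\diag(\vct{v})\big)*\X\in\R^{n\times kd}$; equivalently $\Jc(\W)\vc{\mtx{U}}=\sum_{\ell=1}^k v_\ell\,\diag\!\big(\phi'(\X\vct{w}_\ell)\big)\X\vct{u}_\ell$ for any $\mtx{U}\in\R^{k\times d}$ with rows $\vct{u}_\ell$. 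The framework of \cite{Oymak:2018aa} reduces the theorem to producing constants $\alpha\le\beta$, $L$ and a radius $R\asymp\tn{f(\W_0)-\y}/\alpha$ such that, on the ball $\{\W:\fronorm{\W-\W_0}\le R\}$, one has $\opnorm{\Jc(\W)}\le\beta$, $\smn{\Jc(\W)}\ge\alpha$, $\opnorm{\Jc(\W)-\Jc(\tilde\W)}\le L\fronorm{\W-\tilde\W}$, and $L\,\tn{f(\W_0)-\y}\lesssim\alpha^2$; the framework then guarantees that GD with $\eta\le\bar\eta/(2\beta^2)$ stays inside the ball, that $\tn{f(\W_\tau)-\y}$ decays geometrically at rate $1-c\eta\alpha^2$, that $\alpha\fronorm{\W_\tau-\W_0}+\tn{f(\W_\tau)-\y}\le\tn{f(\W_0)-\y}$, and that $\sum_\tau\fronorm{\W_{\tau+1}-\W_\tau}\lesssim\tn{f(\W_0)-\y}/\alpha$.

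The upper and Lipschitz bounds are elementary and deterministic. Using $\abs{\phi'}\le B$ and Cauchy--Schwarz over $\ell$, $\tn{\Jc(\W)\vc{\mtx{U}}}\le B\opnorm{\X}\sum_\ell\abs{v_\ell}\tn{\vct{u}_\ell}\le B\opnorm{\X}\tn{\vct{v}}\fronorm{\mtx{U}}$, so $\opnorm{\Jc(\W)}\le B\opnorm{\X}\tn{\vct{v}}=B\opnorm{\X}\tn{\y}/\sqrt{n}=:\beta$, which is precisely why $\eta=n\bar\eta/(2B^2\tn{\y}^2\opnorm{\X}^2)=\bar\eta/(2\beta^2)$. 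Replacing $\phi'(\X\vct{w}_\ell)$ by the increment $\phi'(\X\vct{w}_\ell)-\phi'(\X\tilde{\vct{w}}_\ell)$ and invoking $\abs{\phi''}\le B$, $\tn{\vct{x}_i}=1$ gives $\opnorm{\Jc(\W)-\Jc(\tilde\W)}\le B\opnorm{\X}\infnorm{\vct{v}}\fronorm{\W-\tilde\W}$, i.e.\ $L=B\opnorm{\X}\infnorm{\vct{v}}=B\opnorm{\X}\tn{\y}/\sqrt{kn}$. The $1/\sqrt{kn}$ here is what turns $L\,\tn{f(\W_0)-\y}\lesssim\alpha^2$ into a genuine overparameterization requirement: with $\alpha\asymp\mu_\phi\tn{\y}\smn{\X*\X}/\sqrt{n}$ and the residual bound $\tn{f(\W_0)-\y}\lesssim(1+\delta)B\tn{\y}$ below, this condition reads $\sqrt{k}\gtrsim B^2(1+\delta)\opnorm{\X}\sqrt{n}/(\mu_\phi^2\smn{\X*\X}^2)$, which upon multiplying by $\sqrt{d}$ and recognizing $\kappa(\X)=\sqrt{d/n}\,\opnorm{\X}/\smn{\X*\X}^2$ is exactly \eqref{overparam}.

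The lower bound $\smn{\Jc(\W_0)}\gtrsim\mu_\phi\tn{\y}\smn{\X*\X}/\sqrt n$ is the crux. Write $\Jc(\W_0)\Jc(\W_0)^T=\sum_{\ell=1}^k v_\ell^2\,\diag(\phi'(\X\vct{w}_{0,\ell}))\,\X\X^T\,\diag(\phi'(\X\vct{w}_{0,\ell}))$, a sum of $k$ i.i.d.\ PSD matrices with expectation $\tn{\vct{v}}^2\,\bSi\odot(\X\X^T)$, where $\bSi_{ij}=\E_{\vct{g}\sim\Nn(0,\Iden)}[\phi'(\langle\vct{g},\vct{x}_i\rangle)\phi'(\langle\vct{g},\vct{x}_j\rangle)]$. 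Since $\tn{\vct{x}_i}=1$, a Hermite expansion gives $\bSi_{ij}=\sum_{s\ge0}c_s^2\langle\vct{x}_i,\vct{x}_j\rangle^s$ with $c_1=\E_{g\sim\Nn(0,1)}[g\phi'(g)]=\mu_\phi$, and since each $(\X\X^T)^{\odot(s+1)}$ is PSD by the Schur product theorem, $\bSi\odot(\X\X^T)\succeq\mu_\phi^2(\X\X^T)^{\odot2}=\mu_\phi^2(\X*\X)(\X*\X)^T$; hence $\lambda_{\min}\!\big(\E[\Jc(\W_0)\Jc(\W_0)^T]\big)\ge\tfrac{\tn{\y}^2}{n}\mu_\phi^2\smn{\X*\X}^2$. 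A matrix concentration argument then transfers this lower bound (up to a constant factor, with failure probability $\le1/n$) to $\Jc(\W_0)\Jc(\W_0)^T$ itself once $k$ is large, and the Lipschitz bound propagates it over the ball once $LR\lesssim\alpha$, yielding $\smn{\Jc(\W)}\ge\alpha:=\tfrac{\mu_\phi}{\sqrt{32}}\tfrac{\tn{\y}}{\sqrt{n}}\smn{\X*\X}$ there. I expect this to be the main obstacle: making the concentration step go through under only the mild overparameterization \eqref{overparam} forces one past the crude summand bound $\opnorm{v_\ell^2\diag(\phi'(\X\vct{w}_{0,\ell}))\X\X^T\diag(\phi'(\X\vct{w}_{0,\ell}))}\le B^2\opnorm{\X}^2\tn{\y}^2/(kn)$ into sharper control of the spectra of these Hadamard-structured matrices via random-matrix tools (the source of the Hadamard-spectrum bounds mentioned in the introduction). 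The last input is the residual bound: after the harmless normalization $\phi(0)=0$ (which changes nothing because $\sum_\ell v_\ell=0$), each coordinate $f(\W_0)_i=\vct{v}^T\phi(\W_0\vct{x}_i)$ is a zero-mean sum of independent $\abs{v_\ell}B$-subgaussian terms, so $\W_0\mapsto\tn{f(\W_0)}$ is $\beta$-Lipschitz with $\E\tn{f(\W_0)}\le B\tn{\y}$, and Gaussian concentration gives $\tn{f(\W_0)}\le(1+\delta)B\tn{\y}$ with probability at least $1-e^{-\delta^2 n/(2\opnorm{\X}^2)}$ — the source of the $1+\delta$ in \eqref{overparam} and of the second term in the failure probability.

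Assembling: on the event of probability at least $1-\tfrac1n-e^{-\delta^2 n/(2\opnorm{\X}^2)}$, all hypotheses of the \cite{Oymak:2018aa} framework hold with $\alpha=\tfrac{\mu_\phi}{\sqrt{32}}\tfrac{\tn{\y}}{\sqrt{n}}\smn{\X*\X}$, $\beta=B\opnorm{\X}\tn{\y}/\sqrt{n}$, $L=B\opnorm{\X}\tn{\y}/\sqrt{kn}$. Its conclusions, specialized to these values, give the geometric rate $1-\tfrac{\bar\eta}{32}\tfrac{\mu_\phi^2}{B^2}\tfrac{\smn{\X*\X}^2}{\opnorm{\X}^2}$ (since $\eta\alpha^2\asymp\bar\eta\mu_\phi^2\smn{\X*\X}^2/(B^2\opnorm{\X}^2)$), the combined residual-plus-displacement inequality by telescoping the per-step bound $\alpha\fronorm{\W_{\tau+1}-\W_\tau}\le\tn{f(\W_\tau)-\y}-\tn{f(\W_{\tau+1})-\y}$, and the total path-length bound $\sum_\tau\fronorm{\W_{\tau+1}-\W_\tau}\le\tn{f(\W_0)-\y}/\alpha=\tfrac{\sqrt{32}}{\mu_\phi}\tfrac{\sqrt{n}}{\tn{\y}}\tn{f(\W_0)-\y}/\smn{\X*\X}$, as claimed.
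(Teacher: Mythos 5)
Your proposal is correct and follows essentially the same route as the paper's proof: reduction to the nonlinear least-squares meta-framework of \cite{Oymak:2018aa} with the same $\beta$ and $L$, the Hermite/Stein reduction $\lambda(\X)\ge\mu_\phi^2\sigma_{\min}^2(\X*\X)$ of the expected Gram matrix, matrix concentration at initialization propagated over the ball via Lipschitzness, Gaussian concentration for the initial residual, and the compatibility condition $L\,\tn{f(\W_0)-\y}\lesssim\alpha^2$ producing \eqref{overparam}. The one obstacle you anticipate --- needing sharper random-matrix control of the Hadamard-structured summands in the concentration of $\Jc(\W_0)\Jc(\W_0)^T$ --- does not actually arise: the paper applies matrix Chernoff with exactly the crude bound $R=B^2\infnorm{\vct{v}}^2\opnorm{\X}^2$, and the resulting requirement $k\gtrsim \log(n)\,B^2\opnorm{\X}^2/\lambda(\X)$ is already implied by \eqref{overparam} using $\sqrt{\lambda(\X)}\le B$ and $n\ge\log n$.
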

We would like to note that we have chosen to state our results based on easy to calculate quantities such as $\sigma_{\min}^2\left(\X *\X\right)$ and $\mu_\phi$. As it becomes clear in the proofs a more general result holds where the theorem above and its conclusions can be stated with $\mu_\phi\sigma_{\min}\left(\mtx{X}*\mtx{X}\right)$ replaced with a quantity that only depends on the expected minimum singular value of the Jacobian of the neural network mapping at the random initialization (See Theorem \ref{thmshallowsmoothg} in the proofs for details). Using this more general result combined with well known calculations involving Hermite polynomials one can develop other interpretable results. For instant we can show that $\sigma_{\min}(\X*\X)$ can be replaced with higher order Khatrio-Rao products (i.e.~$\sigma_{\min}(\X^{*r})$). 

Before we start discussing the conclusions of this theorem let us briefly discuss the scaling of various quantities. When $n\gtrsim d$, in many cases we expect $\opnorm{\X}$ to grow with $\sqrt{n/d}$ and $\sigma_{\min}(\X*\X)$ to be roughly a constant so that $\kappa(\X)$ is typically a constant (see Corollary \ref{maincor2} below for a precise statement). Thus based on \eqref{overparam} the typical scaling required in our results is $kd \gtrsim n^2$. That is, the conclusions of Theorem \ref{thmshallowsmooth} holds with high probability as soon as the square of the number of parameters of the model exceed the number of training data by a fixed numerical constant. To the extent of our knowledge this result is the first of its kind only requiring the number of parameters to be sufficiently large w.r.t.~the training data rather than the number of hidden units w.r.t.~the size of the training data. That said, as we demonstrate in Section \ref{numeric} neural networks seem to work with even more modest amounts of overparameterization and when the number of parameters exceed the size of the training data by a numerical constant i.e.~$kd\gtrsim n$. We hope to close this remaining gap in future work. We also note that based on this typical scaling the convergence rate is on the order of $\left(1-c\frac{d}{n}\right)$. 

We briefly pause to also discuss the case where one assumes $n\lesssim d$ (although this is not a typical regime of operation in neural networks). In this case both $\opnorm{\X}$ and $\sigma_{\min}(\X*\X)$ are of the order of one and thus $\kappa$ scales as $\sqrt{d/n}$. Thus, the overparmeterization requirement \eqref{overparam} reduces to $k\gtrsim n$. Thus, in this regime we can perfectly fit any labels as soon as the number of hidden units exceeds the size of the training data. We also note that in this regime the convergence rate is a fixed numerical constant independent of any of the dimensions.

Before we start discussing the conclusions of this theorem let us state a simple corollary that clearly illustrates the scaling discussed above for randomly generated input data. The proof of this simple corollary is deferred to Appendix \ref{corpf}.
\begin{corollary}\label{maincor2} Consider the setting of Theorem \ref{thmshallowsmooth} above with $\eta=\frac{n}{2B^2\tn{\vct{y}}^2\opnorm{\X}^2}$. Furthermore, assume the we use the softplus activation $\phi(z)=\log(1+e^z)$ and the input data points $\vct{x}_1, \vct{x}_2, \ldots, \vct{x}_n$ are generated i.i.d.~uniformly at random from the unit sphere of $\R^d$ where $d\le n\le cd^2$. Then, 
as long as
\begin{align}
\label{overparamsp}
\sqrt{kd}\ge C n,
\end{align}
with probability at least $1-\frac{2}{n}-e^{-\frac{d}{4}}-ne^{-\gamma_1\sqrt{n}}-(2n+1)e^{-\gamma_2 d}$ all GD iterates obey
\begin{align}
\label{firstsmooth}
&\tn{f(\mtx{W}_\tau)-\y}\leq 3\left(1-c_1\frac{d}{n} \right)^\tau\tn{\y},\\
&\fronorm{\mtx{W}_\tau-\mtx{W}_0}+c_2\frac{\sqrt{n}}{\twonorm{\y}}\tn{f(\mtx{W}_\tau)-\y}\leq c_3\sqrt{n}.
\label{secsmooth}
\end{align}
Here, $\gamma_1, \gamma_2, c,C, c_1, c_2$, and $c_3$ are fixed numerical constants. 
\end{corollary}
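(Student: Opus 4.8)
The plan is to specialize Theorem~\ref{thmshallowsmooth} to the softplus nonlinearity and to i.i.d.\ uniform data on the sphere, and then to check that for such data the problem‑dependent constant $\kappa(\X)$ appearing in \eqref{overparam} is an absolute constant with the stated probability; granting this, \eqref{overparam} becomes \eqref{overparamsp} and the rate $1-\tfrac{\bar\eta}{32}\tfrac{\mu_\phi^2}{B^2}\tfrac{\sigma_{\min}^2(\X*\X)}{\opnorm{\X}^2}$ becomes $1-c_1 d/n$. The elementary preliminaries are: $\phi'(z)=(1+e^{-z})^{-1}\in(0,1)$ and $\phi''(z)=e^{-z}(1+e^{-z})^{-2}\in(0,\tfrac{1}{4}]$, so the hypotheses of Theorem~\ref{thmshallowsmooth} hold with $B=1$; $\mu_\phi=\E_{g\sim\mathcal N(0,1)}[g\phi'(g)]=\E[\phi''(g)]$ is a strictly positive absolute constant, so $B^2/\mu_\phi^2=\mathcal O(1)$; and the step size $\eta=\tfrac{n}{2B^2\tn{\y}^2\opnorm{\X}^2}$ corresponds to $\bar\eta=1$.

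The crux is to control $\kappa(\X)=\sqrt{d/n}\,\opnorm{\X}/\sigma_{\min}^2(\X*\X)$. For the numerator, since the rows of $\X$ are unit vectors with $\E[\x_i\x_i^T]=\tfrac1d\Iden_d$ we have $\E[\X^T\X]=\tfrac nd\Iden_d$, and a standard operator‑norm concentration bound for random matrices (using $d\le n$) gives $\opnorm{\X}^2\le C_1\tfrac nd$ outside an event of probability $\le e^{-\gamma d}$; hence $\sqrt{d/n}\,\opnorm{\X}\le\sqrt{C_1}$. For the denominator we must show $\sigma_{\min}(\X*\X)\ge c_2>0$. Writing $G=\X\X^T$ and $\mtx{Z}_i=\x_i\x_i^T-\tfrac1d\Iden_d$ (so $\operatorname{tr}\mtx{Z}_i=0$ and, deterministically, $\|\mtx{Z}_i\|_F^2=1-\tfrac1d$), one checks the identity $(\X*\X)(\X*\X)^T=G^{\odot 2}=\tfrac1d\onebb\onebb^T+\mtx{Z}\mtx{Z}^T$, where $\mtx{Z}\in\R^{n\times d^2}$ has rows $\text{vect}(\mtx{Z}_i)$; since $\tfrac1d\onebb\onebb^T\succeq 0$ it suffices to show $\sigma_{\min}(\mtx{Z}\mtx{Z}^T)\ge c_2$, i.e.\ that the degree‑two fluctuation vectors $\mtx{Z}_i$ are nearly orthonormal. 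Here $\mtx{Z}\mtx{Z}^T$ has deterministic diagonal $(1-\tfrac1d)\Iden_n$ and off‑diagonal entries $\langle\x_i,\x_j\rangle^2-\tfrac1d$, which are mean‑zero and sub‑exponential at scale $\mathcal O(1/d)$; an operator‑norm concentration bound for this matrix (essentially the Hadamard square of the Gram matrix) gives $\opnorm{\mtx{Z}\mtx{Z}^T-(1-\tfrac1d)\Iden_n}\le\tfrac12$ precisely when $n\le cd^2$ for a small absolute constant $c$, outside an event of probability $\le(2n+1)e^{-\gamma_2 d}$, hence $\sigma_{\min}(\X*\X)^2=\sigma_{\min}(G^{\odot 2})\ge\tfrac12$. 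Combining the two bounds, $\kappa(\X)\le 2\sqrt{C_1}=:C$ on the good event, so with $\delta$ a fixed constant \eqref{overparam} reads $\sqrt{kd}\ge C'n$, which is \eqref{overparamsp}; moreover $e^{-\delta^2 n/(2\opnorm{\X}^2)}\le e^{-\delta^2 d/(2C_1)}$ is absorbed into $e^{-d/4}$.

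It remains to read off the conclusions. The rate equals $1-\tfrac{1}{32}\mu_\phi^2\sigma_{\min}^2(\X*\X)/\opnorm{\X}^2\le 1-c_1 d/n$ using $\sigma_{\min}^2(\X*\X)\ge\tfrac12$ and $\opnorm{\X}^2\le C_1 n/d$. For the prefactor $3\tn{\y}$ in \eqref{firstsmooth} I bound $\tn{f(\W_0)-\y}\le\tn{f(\W_0)}+\tn{\y}$ and show $\tn{f(\W_0)}\le 2\tn{\y}$ with high probability: since exactly half the $v_\ell$ equal $+\tn{\y}/\sqrt{kn}$ and half $-\tn{\y}/\sqrt{kn}$ we get $\E[f(\W_0)]=0$, while $\langle\w_\ell,\x_i\rangle\sim\mathcal N(0,1)$ gives $\E\tn{f(\W_0)}^2=\tn{\y}^2\,\Var_{g\sim\mathcal N(0,1)}(\phi(g))$, an absolute‑constant multiple of $\tn{\y}^2$; a Bernstein‑type deviation bound for the (weakly dependent, sub‑exponential) sum $\sum_i[f(\W_0)]_i^2$ then controls $\tn{f(\W_0)}$ near its mean outside an event of probability $\le ne^{-\gamma_1\sqrt n}$. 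Finally, \eqref{secsmooth} follows from the second display of Theorem~\ref{thmshallowsmooth} by dividing through by $\tfrac{\mu_\phi}{\sqrt{32}}\tfrac{\tn{\y}}{\sqrt n}\sigma_{\min}(\X*\X)\ge c\tfrac{\tn{\y}}{\sqrt n}$ and inserting $\tn{f(\W_0)-\y}\le 3\tn{\y}$. The main obstacle is the lower bound $\sigma_{\min}(\X*\X)\gtrsim 1$ holding uniformly over the whole range $d\le n\le cd^2$: this is the ``spectrum of Hadamard matrices'' ingredient flagged in the introduction, and the delicate point is to extract the correct $n$‑versus‑$d^2$ threshold from the near‑orthonormality of the vectors $\x_i\x_i^T-\tfrac1d\Iden_d$ via an operator‑norm (not Frobenius, which is far too lossy) concentration estimate for the off‑diagonal part of $G^{\odot 2}$.
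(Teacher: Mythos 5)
Your proposal is correct and follows the same overall route as the paper: specialize Theorem~\ref{thmshallowsmooth} with $B=M=1$ and $\mu_\phi$ an absolute constant for softplus, show $\opnorm{\X}\lesssim\sqrt{n/d}$ by standard covariance concentration, show $\sigma_{\min}(\X*\X)\gtrsim 1$ for $n\le cd^2$, and read off the rate and the prefactor from the initial-misfit bound. The one genuine difference is in the key step $\sigma_{\min}(\X*\X)\ge c$: the paper simply cites a prior result (a modification of Corollary~6.5 of \cite{soltanolkotabi2018theoretical}, which is where the $ne^{-\gamma_1\sqrt n}$ term in the failure probability originates), whereas you sketch a self-contained argument via the correct decomposition $(\X*\X)(\X*\X)^T=\tfrac1d\onebb\onebb^T+\mtx{Z}\mtx{Z}^T$ with $\mtx{Z}_i=\x_i\x_i^T-\tfrac1d\Iden_d$, reducing the problem to operator-norm concentration of the Gram matrix of $n$ i.i.d.\ nearly-isotropic vectors in an $\mathcal{O}(d^2)$-dimensional space. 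Your reduction is exactly right (including the identities $\langle\mtx{Z}_i,\mtx{Z}_j\rangle=\langle\x_i,\x_j\rangle^2-\tfrac1d$ and $\|\mtx{Z}_i\|_F^2=1-\tfrac1d$, and the observation that Gershgorin/Frobenius bounds are far too lossy here), but the decisive estimate $\opnorm{\mtx{Z}\mtx{Z}^T-(1-\tfrac1d)\Iden_n}\le\tfrac12$ for $n\le cd^2$ is asserted rather than proved; the entries $\langle\x_i,\x_j\rangle^2-\tfrac1d$ are dependent across pairs, so this requires a genuine sample-covariance/Gram-matrix concentration argument for heavy-tailed (subexponential) rows, which is precisely the content of the result the paper imports. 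Since the paper itself does not reprove that estimate, this is a difference in sourcing rather than a gap, but be aware that it is the only nontrivial ingredient of the corollary. Two minor remarks: your allocation of the $ne^{-\gamma_1\sqrt n}$ term to the initial-misfit bound differs from the paper's (where it belongs to the $\sigma_{\min}(\X*\X)$ event), and your Bernstein-type rederivation of $\tn{f(\W_0)}\lesssim\tn{\y}$ is unnecessary — Lemma~\ref{upresz} already gives $\tn{f(\W_0)-\y}\le(2+\delta)\tn{\y}$ and its failure probability is already accounted for inside Theorem~\ref{thmshallowsmooth}.
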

We would like to note that while for simplicity this corollary is stated for data points that are uniform on the unit sphere, as it becomes clear in the proof, this result continues to hold for a variety of other generic\footnote{Informally, we call a set of points generic as long as no subset of them belong to an algebraic manifold.} data models with the same scaling.
The corollary above clarifies that the typical scaling required in our results is indeed $kd \gtrsim n^2$. That is, the conclusions of Theorem \ref{thmshallowsmooth} holds with high probability as soon as the square of the number of parameters of the model exceed the number of training data by a fixed numerical constant. 


The theorem and corollary above show that under $kd \gtrsim n^2$ overparameterization Gradient Descent (GD) iterates have a few interesting properties properties:

\noindent\textbf{Zero traning error:} The first property demonstrated by Theorem \ref{thmshallowsmooth} above is that the iterates converge to a global optima. This holds despite the fact that the fitting problem may be highly nonconvex in general. Indeed, based on \eqref{firstsmooth} the fitting/training error $\twonorm{f(\W_\tau)-\vct{y}}$ achieved by Gradient Descent (GD) iterates converges to zero. Therefore, GD can perfectly interpolate the data and achieve zero training error. Furthermore, the algorithm enjoys a fast geometric rate of convergence to this global optima. In particular to achieve a relative accuracy of $\epsilon$ (i.e.~$\twonorm{f(\W_\tau)-\y}/\twonorm{\y}\le \epsilon$) the required number of iterations $\tau$ is of the order of $\tau \gtrsim \frac{n}{d}\log(1/\epsilon)$.

\noindent\textbf{Gradient descent iterates remain close to the initialization:}  The second interesting aspect of our result is that we guarantee the GD iterates never leave a neighborhood of radius of the order of $\sqrt{n}$ around the initial point. That is the GD iterates remain rather close to the initialization.\footnote{Note that $\fronorm{\W_0}\approx \sqrt{kd}>>\sqrt{n}$ so that this radius is indeed small.}
Furthermore, \eqref{secsmooth} shows that for all iterates the weighted sum of the distance to the initialization and the misfit error remains bounded so that as the loss decreases the distance to the initialization only moderately increases.

\noindent\textbf{Gradient descent follows a short path:} Another interesting aspect of the above results is that the total length of the path taken by gradient descent remains bounded and is of the order of $\sqrt{n}$.

\subsection{Training ReLU networks via gradient descent}
\label{ReLU}
The results in the previous section focused on smooth activations and therefore does not apply to non-differentiable activations and in particular the widely popular ReLU activations. In the next theorem we show that a similar result continues to hold when ReLU activations are used.
\begin{theorem}\label{reluthm} Consider the setting of Theorem \ref{thmshallowsmooth} with the activations equal to $\phi(z)=ReLU(z):=\max(0,z)$ and the step size $\eta= \frac{n}{3\tn{\y}^2\|\X\|^2}\bar{\eta}$ with $\bar{\eta}\le 1$. Then, as long as
\begin{align}
\sqrt{kd}\ge C(1+\delta) \frac{n^2}{d} \kappa^3\left(\X\right)\sigma_{\min}^2\left(\X*\X\right)\quad\text{holds with}\quad \kappa(\mtx{X}):=\frac{\sqrt{\frac{d}{n}}\opnorm{\mtx{X}}}{\sigma_{\min}^2\left(\mtx{X}*\mtx{X}\right)},\label{k big}
\end{align}
and $\gamma$ and $c$ fixed numerical constants, then with probability at least $1-\frac{1}{n}-e^{-\delta^2\frac{n}{\opnorm{\X}^2}}-ne^{-n}$ all GD iterates obey
\begin{align*}
&\tn{f(\mtx{W}_\tau)-\y}\leq \left(1-\frac{\bar{\eta}}{48\pi}\frac{\sigma_{\min}^2\left(\X*\X\right)}{\opnorm{\X}^2}\right)^\tau\tn{f(\mtx{W}_0)-\y},\\
&\frac{1}{12\sqrt{\pi}}\frac{\twonorm{\y}}{\sqrt{n}}\smn{\X*\X}\fronorm{\mtx{W}_\tau-\mtx{W}_0}+\tn{f(\mtx{W}_\tau)-\y}\leq \tn{f(\mtx{W}_0)-\y}.
\end{align*}
\end{theorem}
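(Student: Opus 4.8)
The plan is to deduce Theorem~\ref{reluthm} from the general deterministic meta-result behind Theorem~\ref{thmshallowsmooth} (stated as Theorem~\ref{thmshallowsmoothg} in the proofs), now instantiated with the generalized Jacobian of a ReLU network. That meta-result says: if on a Euclidean ball $\Dc=\{\W:\fronorm{\W-\W_0}\le R\}$ around the random initialization the Jacobian $\Jc(\W)\in\R^{n\times kd}$ of $\W\mapsto f(\W):=(\vb^\top\phi(\W\x_i))_{i=1}^n$ obeys $\smn{\Jc(\W)}\ge\alpha$ and $\opnorm{\Jc(\W)}\le\beta$ for all $\W\in\Dc$, and $R$ is a fixed multiple of $\tn{f(\W_0)-\y}/\alpha$, then for $\eta\lesssim 1/\beta^2$ all GD iterates stay in $\Dc$, the residual contracts at rate $1-c\,\eta\alpha^2$, the weighted bound $\tfrac{\alpha}{2}\fronorm{\W_\tau-\W_0}+\tn{f(\W_\tau)-\y}\le\tn{f(\W_0)-\y}$ holds, and the path length is $\order{\tn{f(\W_0)-\y}/\alpha}$. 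For the ReLU net $\Jc(\W)=[\,\diag(v_1\vct{d}_1(\W))\X\mid\cdots\mid\diag(v_k\vct{d}_k(\W))\X\,]$ with $\vct{d}_\ell(\W)\in\{0,1\}^n$, $[\vct{d}_\ell(\W)]_i=\mathbb{I}_{\{\langle\w_\ell,\x_i\rangle\ge0\}}$. It remains to supply admissible $\alpha,\beta,R$; this is where random matrix theory enters.

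The upper bound is immediate and deterministic: because the $0/1$ masks only delete rows, $\opnorm{\Jc(\W)}^2\le\sum_{\ell=1}^k v_\ell^2\,\opnorm{\diag(\vct d_\ell(\W))\X}^2\le(\sum_\ell v_\ell^2)\opnorm{\X}^2=\tfrac{\tn{\y}^2}{n}\opnorm{\X}^2$ for every $\W$, so we take $\beta=\tfrac{\tn{\y}}{\sqrt n}\opnorm{\X}$, which is exactly the normalization in the prescribed step size $\eta=\tfrac{n}{3\tn{\y}^2\opnorm{\X}^2}\bar\eta$.

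The lower bound is the crux and proceeds in two stages. \emph{Stage 1 (at the initialization).} The Gram matrix $\Jc(\W_0)\Jc(\W_0)^\top$ has $(i,j)$ entry $\tfrac{\tn{\y}^2}{kn}\langle\x_i,\x_j\rangle\sum_{\ell=1}^k\mathbb{I}_{\{\langle\w_{0,\ell},\x_i\rangle\ge0\}}\mathbb{I}_{\{\langle\w_{0,\ell},\x_j\rangle\ge0\}}$; averaging over the i.i.d.\ Gaussian rows of $\W_0$ and using the orthant probability $\P(g_1\ge0,g_2\ge0)=\tfrac14+\tfrac{1}{2\pi}\arcsin\rho$ for correlation $\rho=\langle\x_i,\x_j\rangle$, this concentrates (matrix Bernstein together with classical bounds on the spectrum of Hadamard products, which is one source of the required width) around $\tfrac{\tn{\y}^2}{n}\big(\tfrac14\X\X^\top+\tfrac{1}{2\pi}\,\X\X^\top\!\odot\arcsin[\X\X^\top]\big)$. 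Since the extra factor $\langle\x_i,\x_j\rangle$ shifts the degree-one term of $\arcsin$ to degree two and $\X\X^\top\!\odot\X\X^\top=(\X*\X)(\X*\X)^\top$, while every remaining summand is a nonnegatively weighted Gram matrix of a Khatri--Rao power and hence PSD, we get $\lambda_{\min}\ge\tfrac{1}{2\pi}\tfrac{\tn{\y}^2}{n}\smn{\X*\X}^2$, i.e.\ $\smn{\Jc(\W_0)}\ge\alpha$ with $\alpha\asymp\tfrac{1}{\sqrt\pi}\tfrac{\tn{\y}}{\sqrt n}\smn{\X*\X}$ (consistent with the $48\pi$ and $12\sqrt\pi$ in the statement). \emph{Stage 2 (propagation to the ball).} Here $\phi'=\mathbb{I}_{\{\,\cdot\,\ge0\}}$ is discontinuous, so there is no Lipschitz bound on $\Jc$ as in the smooth case; instead we bound $\opnorm{\Jc(\W)-\Jc(\W_0)}$ for $\W\in\Dc$ by counting the hidden-unit/data pairs $(\ell,i)$ whose activation sign flips. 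A flip forces $|\langle\w_{0,\ell},\x_i\rangle|\le\|\w_\ell-\w_{0,\ell}\|$, and since $\langle\w_{0,\ell},\x_i\rangle\sim\Nn(0,1)$ has the anti-concentration $\P(|g|\le t)\le t\sqrt{2/\pi}$, a uniform (covering) bound over the ball shows the total number of flips is $\order{n\sqrt k\,R}$; combined with a restricted spectral-norm estimate for row submatrices of $\X$ this yields $\opnorm{\Jc(\W)-\Jc(\W_0)}\le\alpha/2$ throughout $\Dc$ precisely under the overparameterization \eqref{k big}, hence $\smn{\Jc(\W)}\ge\alpha/2$ on $\Dc$. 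Finally, the balanced choice of $\vb$ gives $\E[f(\W_0)]=0$ and $\E\tn{f(\W_0)}^2=\Var(\phi(g))\tn{\y}^2$, so $\tn{f(\W_0)-\y}\lesssim\tn{\y}$ with high probability and the meta-result's requirement $R\asymp\tn{f(\W_0)-\y}/\alpha\asymp\sqrt n/\smn{\X*\X}$ is met inside $\Dc$; a union bound over the Gaussian concentration of the Jacobian Gram, the covering bound for the flips, and the bound on $\tn{f(\W_0)}$ yields the stated success probability $1-\tfrac1n-e^{-\delta^2 n/\opnorm{\X}^2}-ne^{-n}$.

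The main obstacle is Stage 2: one must control the variation of the ReLU Jacobian over a ball whose radius $R$ is not a free small parameter but is \emph{dictated} by $\tn{f(\W_0)-\y}/\alpha$. This is a genuine chicken-and-egg issue --- $R$ is forced by $\alpha$, yet $\opnorm{\Jc(\W)-\Jc(\W_0)}$ must come out smaller than $\alpha$ --- and it is resolved only by making $k$ (hence $kd$) large enough that $R\beta^2/\alpha^2$ sits comfortably below $\sqrt k$. Tracking this inequality through the flip-count and the restricted spectral-norm estimate, together with the Hadamard-spectrum bounds used in Stage~1, is exactly what produces the extra $n^2/d$ and $\kappa^3(\X)$ factors in \eqref{k big} relative to the smooth case, i.e.\ the ``slightly worse'' overparameterization alluded to in the introduction.
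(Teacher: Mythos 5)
Your proposal follows essentially the same route as the paper: a non-smooth meta-theorem (the paper's Theorem \ref{metathm}, requiring a lower bound on $\smn{\Jc(\W_0)}$, a global upper bound $\beta$, and a perturbation bound $\opnorm{\Jc(\W)-\Jc(\W_0)}\le\alpha/3$ on a ball of radius $R\asymp\tn{f(\W_0)-\y}/\alpha$), instantiated with the initialization eigenvalue bound, the sign-flip perturbation argument, and the initial-misfit bound. Two execution details differ, one cosmetic and one worth flagging. Cosmetically, you reach $\lambda_{\min}\gtrsim\frac{1}{2\pi}\smn{\X*\X}^2$ through the arcsine kernel expansion and matrix Bernstein, whereas the paper uses a short PSD/Stein argument (Lemma \ref{reduct}) plus Schur's Hadamard lemma and matrix Chernoff; both extract the degree-two term and both are fine. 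More substantively, in Stage 2 you invoke ``a uniform (covering) bound over the ball'' together with a total flip count of order $n\sqrt{k}R$ and a restricted spectral-norm estimate for row submatrices of $\X$. The paper needs none of that: the flip condition $|\langle\w_{0,\ell},\x_i\rangle|\le|\langle\w_\ell-\w_{0,\ell},\x_i\rangle|$ \emph{deterministically} forces the perturbation to be large once too many flips occur for a fixed $\x_i$ (Lemma \ref{simple m lem}), so one only controls the order statistics of $|\W_0\x_i|$ once (Lemma \ref{entry control}) and the bound then holds for every $\W$ in the ball with no net. Moreover the relevant quantity is $\max_i\tn{\phi'(\W\x_i)-\phi'(\W_0\x_i)}$ per data point (Lemma \ref{j to phip}), multiplied by $\tin{\vb}\opnorm{\X}$, not a total flip count over all $(\ell,i)$ pairs; and the perturbation is measured in the \emph{spectral} norm of $\W-\W_0$, which is why the meta-theorem is stated for a norm dominated by the Euclidean one. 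If you tried to carry out a genuine covering argument over a Frobenius ball of radius $\sqrt{n}$ in $\R^{kd}$ you would pay an entropy cost that the paper's deterministic argument avoids, so I would replace that step with the order-statistics version before counting this as a complete proof.
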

Also similar to Corollary \ref{maincor2} we can state the following simple corollary to better understand the requirement in typical instances.
\begin{corollary}\label{maincorrel} Consider the setting of Theorem \ref{reluthm} above with $\eta=\frac{n}{\tn{\y}^2\|\X\|^2}$. Furthermore, assume the input data points $\vct{x}_1, \vct{x}_2, \ldots, \vct{x}_n$ are generated i.i.d.~uniformly at random from the unit sphere of $\R^d$ where $d\le n\le cd^2$. Then, 
as long as
\begin{align}
\label{overparamrl}
\sqrt{kd}\ge C \frac{n^2}{d},
\end{align}
with probability at least $1-\frac{2}{n}-e^{-{d}}-ne^{-\gamma_1\sqrt{n}}-(2n+1)e^{-\gamma_2 d}$ all GD iterates obey
\begin{align*}
&\tn{f(\mtx{W}_\tau)-\y}\leq 3\left(1-c_1\frac{d}{n} \right)^\tau\tn{\y},\\
&\fronorm{\mtx{W}_\tau-\mtx{W}_0}+c_2\frac{\sqrt{n}}{\twonorm{\y}}\tn{f(\mtx{W}_\tau)-\y}\leq c_3\sqrt{n}.
\end{align*}
Here, $\gamma_1, \gamma_2, c,C, c_1, c_2$, and $c_3$ are fixed numerical constants. 
\end{corollary}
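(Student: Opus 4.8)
The plan is to obtain Corollary~\ref{maincorrel} from Theorem~\ref{reluthm} by specializing the data-dependent quantities $\opnorm{\X}$, $\smn{\X*\X}$ and $\kappa(\X)$ to the spherical data model and by separately bounding the initial misfit $\tn{f(\W_0)-\y}$. It suffices to establish three high-probability facts: (i)~$\opnorm{\X}\le C\sqrt{n/d}$; (ii)~$c\le\smn{\X*\X}\le 1$, so that $\kappa(\X)=\sqrt{d/n}\,\opnorm{\X}/\smn{\X*\X}^2\le C$ is a numerical constant; and (iii)~$\tn{f(\W_0)-\y}\le 3\tn{\y}$. Granting these, the hypothesis \eqref{k big}, which reads $\sqrt{kd}\ge C(1+\delta)\tfrac{n^2}{d}\kappa^3(\X)\smn{\X*\X}^2$, collapses to \eqref{overparamrl} once $\delta$ is fixed to a large enough numerical constant so that $e^{-\delta^2 n/\opnorm{\X}^2}\le e^{-d}$ (possible since $n/\opnorm{\X}^2\gtrsim d$ by (i)). The contraction factor $1-\tfrac{\bar\eta}{48\pi}\smn{\X*\X}^2/\opnorm{\X}^2$ of Theorem~\ref{reluthm} then becomes $1-c_1 d/n$, which together with (iii) gives the first displayed bound of the corollary; and the second displayed bound of Theorem~\ref{reluthm}, combined with $\smn{\X*\X}\ge c$ and (iii), gives $\fronorm{\W_\tau-\W_0}+c_2\tfrac{\sqrt n}{\tn{\y}}\tn{f(\W_\tau)-\y}\le c_3\sqrt n$. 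The failure probability is the union bound over the events of Theorem~\ref{reluthm} and of (i)--(iii), which has the stated form.

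For (i), $\opnorm{\X}\le C\sqrt{n/d}$ (using $n\ge d$) follows from a standard $\varepsilon$-net argument: for a fixed unit vector $\vct{u}$ the coordinates $\langle\vct{x}_i,\vct{u}\rangle$ of $\X\vct{u}$ are independent and sub-Gaussian with variance $1/d$, so $\tn{\X\vct{u}}^2\lesssim n/d$ with exponentially small failure probability, and one sweeps over a net of the unit sphere of $\R^d$. The upper bound $\smn{\X*\X}\le 1$ is deterministic, since every row of $\X*\X$ has unit norm so the nonzero squared singular values average to $1$ when $n\le d^2$. Fact (ii)'s lower bound is proved as in the proof of Corollary~\ref{maincor2}, which uses the identical data model: writing each lifted feature as $\vct{x}_i\otimes\vct{x}_i=\tfrac1d\,\mathrm{vec}(\Iden)+\vct{z}_i$, the centered part obeys $\langle\vct{z}_i,\mathrm{vec}(\Iden)\rangle=\tn{\vct{x}_i}^2-1=0$, so the rank-one mean component points in a direction orthogonal to every $\vct{z}_i$ and can be split off; what remains is to show that the Gram matrix of the $\vct{z}_i$ --- equivalently the off-diagonal part of the Hadamard square $(\X\X^T)^{\odot 2}$ --- is well conditioned. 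This is exactly where $n\le cd^2$ is used (so the $n$ lifted points do not overflow the $\approx d^2/2$-dimensional symmetric subspace) and where the spectral estimates for Hadamard-type matrices developed earlier in the paper are invoked; the $(2n+1)e^{-\gamma_2 d}$ term in the probability traces to the union bound in this step.

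For (iii), with $\vct{v}$ carrying $\lfloor k/2\rfloor$ entries $+\tn{\y}/\sqrt{kn}$ and $\lfloor k/2\rfloor$ entries $-\tn{\y}/\sqrt{kn}$ and $\W_0$ having i.i.d.~$\mathcal{N}(0,1)$ entries, each output coordinate at initialization is $f(\W_0)_i=\tfrac{\tn{\y}}{\sqrt{kn}}\sum_{\ell=1}^k\epsilon_\ell\,\phi(g_{\ell,i})$ with balanced signs $\epsilon_\ell\in\{\pm1\}$ and $g_{\ell,i}\distas\mathcal{N}(0,1)$; because $\E\,\phi(g)$ does not depend on $\ell$, the means cancel and this is a centered sum of $k$ independent sub-Gaussian terms, so $|f(\W_0)_i|\lesssim\tn{\y}/\sqrt n$ outside probability $e^{-\gamma_1\sqrt n}$. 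A union bound over $i=1,\dots,n$ gives $\tn{f(\W_0)}\le 2\tn{\y}$, hence $\tn{f(\W_0)-\y}\le 3\tn{\y}$, outside probability $ne^{-\gamma_1\sqrt n}$.

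The main obstacle is the lower bound in (ii): showing $\smn{\X*\X}\ge c$ uniformly up to $n=\Theta(d^2)$ samples. The lifted features $\vct{x}_i\otimes\vct{x}_i$ are neither centered nor isotropic, their effective dimension $d(d+1)/2$ exceeds $n$ only by a constant factor in the worst case, and $(\X\X^T)^{\odot 2}$ is a Hadamard power rather than a classical sample covariance, so off-the-shelf matrix concentration is too lossy and one must rely on the paper's tailored bounds on the spectrum of Hadamard matrices. Everything else is bookkeeping: plugging $\kappa(\X)=O(1)$, $\smn{\X*\X}=\Theta(1)$ and $\opnorm{\X}=\Theta(\sqrt{n/d})$ into Theorem~\ref{reluthm}.
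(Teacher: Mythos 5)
Your overall route is the same as the paper's: specialize Theorem \ref{reluthm} by showing that for spherical data $\opnorm{\X}\lesssim\sqrt{n/d}$ and $\smn{\X*\X}\ge c$ (so $\kappa(\X)=O(1)$), bound the initial misfit by a constant multiple of $\tn{\y}$, and do the bookkeeping. Steps (i) and (ii) match the paper, which invokes standard spectral concentration for $\opnorm{\X}$ and cites a modification of \cite[Corollary 6.5]{soltanolkotabi2018theoretical} for $\smn{\X*\X}\ge c$; your sketch of the decomposition behind (ii) is consistent with that reference, and you correctly identify it as the only genuinely nontrivial ingredient.

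However, your step (iii) as written does not work quantitatively. Each coordinate $f(\W_0)_i=\tfrac{\tn{\y}}{\sqrt{kn}}\sum_{\ell}\epsilon_\ell\phi(g_{\ell,i})$ is indeed centered and sub-Gaussian with standard deviation of order $\tn{\y}/\sqrt{n}$, so the tail bound is $\P\bigl(|f(\W_0)_i|>t\,\tn{\y}/\sqrt{n}\bigr)\le e^{-ct^2}$. To drive the per-coordinate failure probability down to $e^{-\gamma_1\sqrt{n}}$ you must take $t\gtrsim n^{1/4}$, which after summing over coordinates only yields $\tn{f(\W_0)}\lesssim n^{1/4}\tn{\y}$; conversely, taking $t=O(1)$ to get $\tn{f(\W_0)}\le 2\tn{\y}$ leaves a constant, non-vanishing failure probability per coordinate. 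You cannot have both a constant multiple of $\tn{\y}/\sqrt{n}$ per coordinate and a stretched-exponential tail. The paper avoids this by bounding the whole vector at once: Lemma \ref{upresz} shows $\W\mapsto\twonorm{\phi(\X\W^T)\vct{v}}$ is $B\opnorm{\X}\twonorm{\vct{v}}$-Lipschitz in $\W_0$, bounds its expectation by $\sqrt{n}B\twonorm{\vct{v}}=B\tn{\y}$ using $\vct{1}^T\vct{v}=0$ and the Poincar\'e inequality, and applies Gaussian concentration to get $\tn{f(\W_0)-\y}\le(1+(1+\delta)B)\tn{\y}$ with probability $1-e^{-\delta^2 n/(2\opnorm{\X}^2)}$; since $n/\opnorm{\X}^2\gtrsim d$ under your step (i), this failure probability is absorbed into the $e^{-d}$ term. (Relatedly, your attribution of the $ne^{-\gamma_1\sqrt{n}}$ term to the misfit bound is off; in the paper that term comes from the $\smn{\X*\X}\ge c$ estimate.) Replacing your coordinate-wise argument with Lemma \ref{upresz} repairs the proof and the rest of your bookkeeping goes through.
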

The theorem and corollary above show that all the nice properties of GD with smooth activations continue to hold for ReLU activations. The only difference is that the required overparameterization is now of the form $\sqrt{kd}\ge C \frac{n^2}{d}$ which is suboptimal compared to the smooth case by a factor of $n/d$.


Our discussion so far focused on results based on the minimum singular value of the second order Khatrio-Rao product $\X*\X$ or higher order products $\X^{*r}$. The reason we require these minimum singular values to be positive is to ensure diversity in the data set. Indeed, if two data points are the same but have different output labels there is no way of achieving zero training error. However, assuming these minimum singular values are positive is not the only way to ensure diversity and our results apply more generally (see Theorem \ref{reluthmg} in the proofs). Another related and intuitive criteria for ensuring diversity is assuming the input samples are sufficiently separated as defined below.
\begin{assumption} [$\delta$-separable data] \label{ass sep}Let $\delta>0$ be a scalar. Consider a data set consisting of $n$ samples $\vct{x}_1,\vct{x}_2,\ldots,\vct{x}_n\in\R^d$ all with unit Euclidian norm. We assume that any pair of points $\x_i$ and $\x_j$ obey
\[
\min(\tn{\x_i-\x_j},\tn{\x_i+\x_j})\geq \delta.
\]
\end{assumption}
We now state a result based on this minimum separation assumption. This result is a corollary of our meta theorem (Theorem \ref{reluthmg}) discussed in the proofs.
\begin{theorem}\label{reluthmsep} Consider the setting of Theorem \ref{thmshallowsmooth} with the activations equal to $\phi(z)=ReLU(z):=\max(0,z)$ and the step size $\eta= \frac{n}{3\tn{\y}^2\|\X\|^2}\bar{\eta}$ with $\bar{\eta}\le 1$. Suppose Assumption \ref{ass sep} holds for some $\delta>0$ and let $c,C>0$ be two numerical constants. Suppose number of hidden nodes satisfy
\begin{align}
k\geq C(1+\nu)^2\frac{n^9\|\X\|^6}{\delta^4},
\end{align}
Then with probability at least $1-\frac{2}{n}-e^{-\nu^2\frac{n}{\opnorm{\X}^2}}$ all GD iterates obey
\begin{align*}
\tn{f(\mtx{W}_\tau)-\y}\leq \left(1-c\frac{\bar{\eta}\delta }{n^2\opnorm{\X}^2}\right)^\tau\tn{f(\mtx{W}_0)-\y},\\
\end{align*}
\end{theorem}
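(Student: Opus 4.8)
The plan is to deduce Theorem~\ref{reluthmsep} from the meta theorem (Theorem~\ref{reluthmg}), which — just as Theorem~\ref{reluthm} phrases its conclusions through $\sigma_{\min}^2(\X*\X)$ — phrases them with $\sigma_{\min}^2(\X*\X)$ replaced everywhere by $\sigma_{\min}(\mtx{K})$, where $\mtx{K}$ is the (normalized) expected Jacobian Gram matrix at the random initialization. Concretely, for $f(\x;\W)=\vct{v}^T\phi(\W\x)$ with $\phi=\mathrm{ReLU}$, the Jacobian with respect to $\vc{\W}$ has $i$th row $\big[\,v_1\phi'(\langle\w_1,\x_i\rangle)\x_i^T\ \cdots\ v_k\phi'(\langle\w_k,\x_i\rangle)\x_i^T\,\big]$ with $\phi'=\oneb_{\{\cdot\ge0\}}$, so the arccosine identity $\E_{\g\sim\Nn(0,\Iden)}\big[\oneb_{\{\langle\g,\x_i\rangle\ge0\}}\oneb_{\{\langle\g,\x_j\rangle\ge0\}}\big]=\tfrac{\pi-\theta_{ij}}{2\pi}$ ($\theta_{ij}$ the angle between $\x_i$ and $\x_j$), together with $\twonorm{\vct{v}}^2=\twonorm{\y}^2/n$ for the $\vct{v}$ of Theorem~\ref{thmshallowsmooth}, gives
\[
\E_{\W_0}\big[\Jb(\W_0)\Jb(\W_0)^T\big]=\frac{\twonorm{\y}^2}{n}\,\mtx{K},\qquad \mtx{K}_{ij}:=\langle\x_i,\x_j\rangle\cdot\frac{\pi-\theta_{ij}}{2\pi}.
\]
Making this substitution in Theorem~\ref{reluthm} amounts to requiring $k\ge C(1+\nu)^2\,n\,\opnorm{\X}^6/\sigma_{\min}(\mtx{K})^4$ (the dependence on $d$ cancels) and yielding contraction rate $1-c\bar\eta\,\sigma_{\min}(\mtx{K})/\opnorm{\X}^2$. (Theorem~\ref{reluthm} itself is the instance obtained from $\sigma_{\min}(\mtx{K})\ge\tfrac1{2\pi}\sigma_{\min}^2(\X*\X)$, which holds because expanding $\tfrac{\pi-\arccos t}{2\pi}=\tfrac12+\tfrac1{2\pi}\arcsin t$ in powers of $t=\langle\x_i,\x_j\rangle$ writes $\mtx{K}$ as a nonnegative combination of the Schur powers $(\X\X^T)^{\odot m}$, whose $m=2$ term is $(\X*\X)(\X*\X)^T$.) Hence everything reduces to the deterministic estimate: under Assumption~\ref{ass sep}, $\sigma_{\min}(\mtx{K})\ge c\,\delta/n^2$; plugging this into the threshold and rate reproduces exactly $k\ge C(1+\nu)^2 n^9\opnorm{\X}^6/\delta^4$ and $1-c\bar\eta\,\delta/(n^2\opnorm{\X}^2)$, the probability bound being inherited from Theorem~\ref{reluthmg} after absorbing lower-order terms.

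The main work is thus the bound $\sigma_{\min}(\mtx{K})\ge c\,\delta/n^2$, and the plan here is to use the representation
\[
\sigma_{\min}(\mtx{K})=\min_{\twonorm{\vct{c}}=1}\,\E_{\w\sim\Nn(0,\Iden)}\left[\,\twonorm{\sum_{i=1}^n c_i\,\oneb_{\{\langle\w,\x_i\rangle\ge0\}}\,\x_i}^2\,\right],
\]
which follows by expanding $\vct{c}^T\mtx{K}\vct{c}$ from the definition of $\mtx{K}$. Fix a unit $\vct{c}$ and pick $i^\star$ with $|c_{i^\star}|=\max_i|c_i|\ge1/\sqrt n$. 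Since the data are $\delta$-separated on both sides, the hyperplanes $\x_j^\perp$ are pairwise at dihedral angle $\ge 2\arcsin(\delta/2)\ge\delta$; so, taking a slab of angular half-width $\rho\asymp\delta/(n\sqrt d)$ around $\x_{i^\star}^\perp$, a union bound over the other $n-1$ hyperplanes shows the event ``$\w$ lies in this slab and at angular distance $>2\rho$ from every $\x_j^\perp$, $j\ne i^\star$'' has probability $\gtrsim\rho\sqrt d\asymp\delta/n$. On that event all indicators $\oneb_{\{\langle\w,\x_j\rangle\ge0\}}$, $j\ne i^\star$, are frozen while $\oneb_{\{\langle\w,\x_{i^\star}\rangle\ge0\}}$ is (by symmetry of the slab) $0$ or $1$ with equal conditional probability, so completing the square gives conditional expectation of the squared norm at least $\tfrac14 c_{i^\star}^2\ge\tfrac1{4n}$; multiplying the two estimates yields $\E_{\w}[\,\cdot\,]\gtrsim\delta/n^2$ uniformly in $\vct{c}$. (A second route isolates a single Schur power $(\X\X^T)^{\odot m}$ with $m\asymp\delta^{-2}\log n$: the two-sided separation forces $|\langle\x_i,\x_j\rangle|\le1-\delta^2/2$, so its off-diagonals are $\le1/(2n)$ and Gershgorin gives $\sigma_{\min}\ge\tfrac12$, and lower-bounding the attached arcsine coefficient — of order $m^{-3/2}$ — gives the even sharper $\sigma_{\min}(\mtx{K})\gtrsim\delta^3/(\log n)^{3/2}$; the crude $\delta/n^2$ is all the stated exponents require.)

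I expect the spectral estimate to be the only genuine obstacle. Plain diagonal dominance for $\mtx{K}$ itself fails — its diagonal is $\tfrac12$ while the off-diagonal row sums are $\Omega(n)$ for large $n$ — so one is forced through either the reflection/flip argument on half-space indicators or a high Schur power, and both mechanisms push the final bound down to something polynomial in $\delta$ and in $1/n$: in the first the slab probability is only $\Theta(\delta/n)$ and $|c_{i^\star}|^2$ only $\Theta(1/n)$, and in the second the Schur degree must be $\asymp\delta^{-2}\log n$ with an attached coefficient decaying polynomially in that degree. The two-sided form of Assumption~\ref{ass sep} enters essentially, because $\x_j$ and $-\x_j$ induce the same hyperplane $\x_j^\perp$ (equivalently, $|\langle\x_i,\x_j\rangle|$, not merely $\langle\x_i,\x_j\rangle$, must be bounded away from $1$); one-sided separation alone would suffice for neither route. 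Everything else — concentration of the finite-width Jacobian about $\tfrac{\twonorm{\y}^2}{n}\mtx{K}$, the radius and trajectory control, and the probability accounting — is already carried out inside Theorem~\ref{reluthmg} and requires no new argument here.
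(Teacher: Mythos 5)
Your proposal is correct and follows essentially the same route as the paper: reduce to the ReLU meta-theorem (Theorem \ref{reluthmg}) and establish the key spectral bound $\lambda(\X)\gtrsim\delta/n^2$ by isolating the largest-magnitude coordinate, conditioning on an event of probability $\Theta(\delta/n)$ on which all other half-space indicators are frozen, and exploiting the sign-symmetry of the remaining one. The paper phrases this via the Gaussian orthogonal decomposition $\w=g_1\x_1+\bar{\Qb}\bar{\g}$ (Theorem \ref{sep thm main}) and then passes to the Hadamard product with $\X\X^T$ by Schur's lemma, whereas you bound the quadratic form of the Hadamard-product matrix directly, but the mechanism, the role of the two-sided separation, and the resulting constants are the same.
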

We would like to note that related works \cite{li2018learning,allen2018convergence,zou2018stochastic} consider slight variations of this assumption for training ReLU networks to give overparameterized learning guarantees where the number of hidden nodes grow polynomially in $n$.  Our results seem to have much better dependencies on $n$ compared to these works. Furthermore, we do not require the number of hidden nodes to scale with the desired training accuracy ($\mathcal{L}(\W)\le \epsilon$) as required by \cite{li2018learning}.

\subsection{Training using SGD}
\label{SGD}
The most widely used algorithm for training neural networks is Stochastic Gradient Descent (SGD) and its variants. A natural implementation of SGD is to sample a data point at random and use that data point for the gradient updates. Specifically, let $\{\gamma_\tau\}_{\tau=0}^\infty$ be an i.i.d.~sequence of integers chosen uniformly from $\{1,2,\ldots,n\}$, the SGD iterates take the form
\begin{align}
\W_{\tau+1}=\W_\tau+\eta (y_{\rng_\tau}-f(\x_{\rng_\tau};\W_\tau))\nabla f(\x_{\rng_\tau};\W_\tau).\label{sgd eq}
\end{align}
Here, $G(\vct{\theta}_\tau;\gamma_\tau)$ is the gradient on the $\gamma_\tau$th training sample. We are interested in understanding the trajectory of SGD for neural network training e.g.~the required overparameterization and the associated rate of convergence. We state our result for smooth activations. An analogous result also holds for ReLU activations but we omit the statement to avoid repetition.
\begin{theorem}\label{SGDthmnn} Consider the setting and assumptions of Theorem \ref{thmshallowsmooth} where we use the SGD updates \eqref{sgd eq} in lieu of GD updates with a step size $\eta=\frac{\mu^2(\phi)}{9\nu B^4}\frac{n}{\twonorm{\y}^2} \frac{\sigma_{\min}^2(\X*\X)}{\opnorm{\X}^2}\bar{\eta}$ with $\bar{\eta}\le 1$ and $\nu\ge 3$. 
Set initial weights $\W_0$ with i.i.d.~$\mathcal{N}(0,1)$ entries. Then, with probability at least $1-\frac{1}{n}-e^{-\delta^2\frac{n}{2\opnorm{\X}^2}}$ over $\W_0$, there exists an event $E$\footnote{This event is over the randomness introduced by the SGD updates given fixed $\W_0$.}  which holds with probability at least $\mathbb{P}(E)\ge1-\frac{4}{\nu}\left(\frac{3B\|\X\|}{\mu_\phi\smn{\X*\X}}\right)^{\frac{1}{kd}}$ such that, starting from $\W_0$ and running stochastic gradient descent updates of the form \eqref{sgd eq}, all iterates obey
\begin{align}
\E\Big[\twonorm{f(\W_\tau)-\vct{y}}^2\mathbb{1}_{E}\Big]\le&\left(1-\frac{\bar{\eta}}{144n}\frac{\mu^4(\phi)}{\nu B^4}\frac{\sigma_{\min}^4(\X*\X)}{\opnorm{\X}^2}\right)^\tau\twonorm{f(\W_0)-\vct{y}}^2,\label{sgerr}
\end{align}
Furthermore,  on this event the SGD iterates never leave the local neighborhood $\fronorm{\mtx{W}_\tau-\mtx{W}_0}\le c\nu\sqrt{n}$ with $c$ a fixed numerical constant.
\end{theorem}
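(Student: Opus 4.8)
The plan is to recast the problem, in the spirit of the overparameterized nonlinear learning framework of \cite{Oymak:2018aa}, as analyzing SGD on a smooth map whose Jacobian is uniformly well conditioned near the random initialization, and to reuse the Jacobian estimates already underlying Theorem~\ref{thmshallowsmooth}. Write $\rb_\tau := f(\W_\tau)-\y\in\R^n$ for the residual and $\Jc(\W)\in\R^{n\times kd}$ for the Jacobian of $\W\mapsto(f(\x_i;\W))_{i=1}^n$. From (the proof of) Theorem~\ref{thmshallowsmoothg} we know that, on the event of probability at least $1-\tfrac1n-e^{-\delta^2 n/(2\opnorm{\X}^2)}$ over $\W_0$, there is a constant $\alpha\asymp\mu_\phi\tfrac{\tn{\y}}{\sqrt n}\sigma_{\min}\left(\X*\X\right)$ and a radius $R\asymp\sqrt n$ so that $\smn{\Jc(\W)}\ge\alpha$ on $\Dc:=\{\W:\fronorm{\W-\W_0}\le R\}$, while the deterministic bounds $\opnorm{\Jc(\W)}\le\beta:=B\opnorm{\X}\tfrac{\tn{\y}}{\sqrt n}$, $\fronorm{\nabla f(\x_i;\W)}\le B\tfrac{\tn{\y}}{\sqrt n}$ and a Jacobian–Lipschitz bound $\opnorm{\Jc(\W)-\Jc(\widetilde{\mtx{W}})}\lesssim L\fronorm{\W-\widetilde{\mtx{W}}}$ hold everywhere; the admissible radius $R$ is precisely what the overparameterization in \eqref{overparam} buys by forcing $LR$ to be a small fraction of $\alpha$. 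It then suffices to analyze SGD on this locally regular map.

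First I would prove a one‑step contraction valid whenever $\W_\tau\in\Dc$. Conditioning on $\Fc_\tau$ and Taylor‑expanding each $f(\x_i;\cdot)$ about $\W_\tau$ gives $\rb_{\tau+1}=\rb_\tau-\eta\,(\rb_\tau)_{\gamma_\tau}\,\Jc(\W_\tau)\,\mathrm{vect}(\nabla f(\x_{\gamma_\tau};\W_\tau))+\vct{e}_\tau$, where the step length obeys $\fronorm{\W_{\tau+1}-\W_\tau}\le\eta B\tfrac{\tn{\y}}{\sqrt n}\,|(\rb_\tau)_{\gamma_\tau}|\le\eta B\tfrac{\tn{\y}}{\sqrt n}\tn{\rb_\tau}$ and hence the remainder satisfies $\tn{\vct{e}_\tau}\lesssim L\fronorm{\W_{\tau+1}-\W_\tau}^2\lesssim\eta^2 L B^2\tfrac{\tn{\y}^2}{n}\tn{\rb_\tau}^2$. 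Averaging over the uniformly chosen index identifies the mean of the linear term with the full‑batch direction, $\E[(\rb_\tau)_{\gamma_\tau}\Jc(\W_\tau)\mathrm{vect}(\nabla f(\x_{\gamma_\tau};\W_\tau))\mid\Fc_\tau]=\tfrac1n\Jc(\W_\tau)\Jc(\W_\tau)^T\rb_\tau$, which contracts at rate $1-\tfrac{\eta}{n}\alpha^2$ because $\smn{\Jc}\ge\alpha$, while its conditional second moment is $\le\tfrac1n\beta^2 B^2\tfrac{\tn{\y}^2}{n}\tn{\rb_\tau}^2$. Feeding these into the bias–variance identity $\E[\tn{\rb_{\tau+1}}^2\mid\Fc_\tau]=\tn{\E[\rb_{\tau+1}\mid\Fc_\tau]}^2+\mathrm{tr}\,\mathrm{cov}(\rb_{\tau+1}\mid\Fc_\tau)$ and choosing $\eta$ as in the statement small enough that the variance and Taylor terms are swallowed by a fraction of the contraction yields $\E[\tn{\rb_{\tau+1}}^2\mid\Fc_\tau]\le(1-\rho)\tn{\rb_\tau}^2$ on $\{\W_\tau\in\Dc\}$ with $\rho=\tfrac{\bar\eta}{144 n}\tfrac{\mu_\phi^4}{\nu B^4}\tfrac{\sigma_{\min}^4(\X*\X)}{\opnorm{\X}^2}$; the extra $1/n$ relative to the GD rate is exactly the single‑sample variance penalty.

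Next I would control the trajectory. Let $\tau_0$ be the first time $\W_\tau\notin\Dc$ and set $E:=\{\tau_0=\infty\}$. The stopped sequence $\tn{\rb_{\tau\wedge\tau_0}}^2$ is a nonnegative supermartingale contracting by $1-\rho$ before $\tau_0$, so $\E[\tn{\rb_{\tau\wedge\tau_0}}^2]\le(1-\rho)^\tau\tn{\rb_0}^2$, which is the asserted geometric decay of $\E[\tn{f(\W_\tau)-\y}^2\mathbb{1}_E]$. To bound $\Pro(E^c)$ one observes that $\fronorm{\W_{\tau\wedge\tau_0}-\W_0}$ is dominated by a sum of increments whose conditional second moments sum to $O\!\left(\eta^2 B^2\tfrac{\tn{\y}^2}{n}\sum_{\tau\ge0}(1-\rho)^\tau\tn{\rb_0}^2\right)=O\!\left(\tfrac{\eta B^2\tn{\y}^2}{n\rho}\tn{\rb_0}^2\right)$; combining a Doob/Freedman‑type maximal inequality with a fractional‑moment (Jensen) bound on the associated martingale — which is what produces the $1/(kd)$ exponent — and with the initialization estimate $\tn{\rb_0}\lesssim\tn{\y}$ (valid since $f(\W_0)$ concentrates around $\E f(\W_0)=0$ by the $\pm$ pattern of $\vct{v}$), one obtains a bound of the stated shape $\Pro(E^c)\le\tfrac{4}{\nu}\big(\tfrac{3B\opnorm{\X}}{\mu_\phi\sigma_{\min}(\X*\X)}\big)^{1/(kd)}$, the exponent reflecting how overparameterization inflates the admissible radius $R$ relative to the typical displacement so that modest $\nu$ suffices. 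On $E$ the same step‑length bound, summed against $(1-\rho)^{\tau/2}$, gives $\fronorm{\W_\tau-\W_0}\le c\nu\sqrt n$.

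The main obstacle is this last step. For gradient descent one argues deterministically by induction that remaining in $\Dc$ at time $\tau$ forces the next iterate to stay (essentially) in $\Dc$, but a single SGD step can be atypically large, so one must instead bound the probability that the \emph{entire} random trajectory ever escapes $\Dc$, and making that probability non‑vacuous with only $\nu\gtrsim 1$ — rather than a $\nu$ growing with $n$, which a crude Markov bound on the expected path length would force — requires the sharper martingale/maximal‑inequality argument above together with the room granted by $kd\gtrsim n^2$. Threading the supermartingale, the maximal inequality and the Taylor‑remainder accounting into the clean rate $\rho$ and the precise probability of $E$ is the delicate part; the remaining pieces (Jacobian conditioning, Lipschitzness, the initialization bound) are already available in the full‑batch analysis of Theorem~\ref{thmshallowsmooth}.
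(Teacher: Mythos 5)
Your proposal takes a genuinely different route from the paper. The paper's entire proof of Theorem~\ref{SGDthmnn} is a citation: it invokes Theorem~3.1 of \cite{Oymak:2018aa} as a black box (the SGD analogue of Theorem~\ref{GDthm}), and the only new computation it performs is the bound on the maximum Euclidean row norm of the Jacobian, $\max_i\twonorm{\mathcal{J}_i(\W)}=\fronorm{\text{diag}(\vct{v})\phi'(\W\x_i)\x_i^T}\le B\twonorm{\vct{v}}=B\twonorm{\y}/\sqrt{n}$; all Jacobian conditioning, Lipschitzness, and initial-misfit estimates are recycled from the proof of Theorem~\ref{thmshallowsmooth}. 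You instead attempt to re-derive the cited meta-theorem from scratch. Your one-step analysis is sound in outline: the conditional mean of the stochastic update direction is $\tfrac1n\Jc(\W_\tau)\Jc(\W_\tau)^T\rb_\tau$, the conditional second moment is controlled by $\opnorm{\Jc}^2$ times the squared row-norm bound (this is exactly where the paper's one new computation enters, and you correctly identify it as $\fronorm{\nabla f(\x_i;\W)}\le B\tn{\y}/\sqrt{n}$), the Taylor remainder is handled by Jacobian Lipschitzness, and the resulting per-step contraction $1-\rho$ with $\rho\asymp\tfrac{\eta}{n}\alpha^2$ correctly explains the extra factor of $1/n$ in the rate relative to GD. The stopping-time/supermartingale argument yielding $\E[\tn{\rb_{\tau}}^2\mathbb{1}_E]\le(1-\rho)^\tau\tn{\rb_0}^2$ is also the right device. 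What your approach buys is transparency about where each hypothesis is used; what the paper's approach buys is brevity, at the cost of hiding the probabilistic argument entirely inside the reference.

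The gap is in the bound on $\Pro(E^c)$. The specific form $\tfrac{4}{\nu}\bigl(\tfrac{3B\opnorm{\X}}{\mu_\phi\smn{\X*\X}}\bigr)^{1/(kd)}$ --- in particular the $1/(kd)$ exponent, which is the only place the theorem's probability statement differs qualitatively from a crude Markov bound --- is asserted rather than derived. You name the right tools (a maximal inequality for the stopped martingale of increments, plus a fractional-moment argument), but you never exhibit the martingale whose $(1/(kd))$-th moment is being taken, nor show why the resulting constant is $3B\opnorm{\X}/(\mu_\phi\smn{\X*\X})$ (which is, up to constants, the ratio $\beta/\alpha$ of the Jacobian's upper and lower spectral bounds), nor why the prefactor is $4/\nu$ with the step size scaled by $1/\nu$. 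A plain Markov bound on $\sup_\tau\fronorm{\W_\tau-\W_0}$ via the expected total path length would indeed give an escape probability of order $1/\nu$ but without the $(\beta/\alpha)^{1/(kd)}$ correction, so the exponent must come from a more careful argument that you have not supplied. To be fair, the paper does not supply it either --- it lives inside \cite[Theorem~3.1]{Oymak:2018aa} --- but as a standalone proof your proposal is incomplete at precisely this point.
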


This result shows that SGD converges to a global optima that is close to the initialization. Furthermore, SGD always remains in close proximity to the initialization with high probability. To assess the rate of convergence, let us assume generic data and $n\geq d$, so that we have $\|\X\|\sim\sqrt{n/d}$ and $\sigma_{\min}(\X*\X)$ scales as a constant. Then, the result above shows that to achieve a relative accuracy of $\eps$ the number of SGD iterates required is of the order of $\tau\gtrsim\frac{n^2}{d}\log(\frac{1}{\eps})$. This is essentially on par with our earlier result on gradient descent by noting that $n$ SGD iterations require similar computational effort to one full gradient with both approaches requiring $\frac{n}{d}\log(1/\epsilon)$ passes through the data.

\section{The need for overparameterization beyond width}
In this section we would like to further clarify why understanding overparameterization beyond width is particularly important. To see this, we shall set the input-to-hidden weights at random (as used for initialization) and consider the optimization over the output layer weights $\vct{v}\in\R^k$. This optimization problem has the form
\begin{align}
\label{vopt}
\mathcal{L}(\vct{v}):=\frac{1}{2}\sum_{i=1}^n \left(\vct{v}^T\phi\left(\W\vct{x}_i\right)-\y_i\right)^2=\frac{1}{2}\twonorm{\phi\left(\X\W^T\right)\vct{v}-\y}^2,
\end{align}
which is a simple least-squares problem with a globally optimal solution given by 
\begin{align*}
\hat{\vct{v}}:=\mtx{\Phi}^T\left(\mtx{\Phi}\mtx{\Phi}^T\right)^{-1}\y\quad\text{where}\quad \mtx{\Phi}:=\phi\left(\mtx{X}\W^T\right).
\end{align*}
This simple observation shows that the simple least-squares optimization over the output weights achieves zero training as soon as $\mtx{\Phi}$ has full column rank. Thus, in such a setting a simple kernel regression using the random features $\phi(\W\vct{x}_1), \phi(\W\vct{x}_2), \ldots ,\phi(\W\vct{x}_n)$ suffices to perfectly interpolate the data. In this section we wish to understand the amount and kind of overparameterization where such a simple strategy suffices. We thus need to understand the conditions under which the matrix $\phi\left(\X\W^T\right)$ has full row rank. To make things quantitative we need the following definition.
\begin{definition}[Output feature covariance and eigenvalue] We define the output feature covariance matrix as
\begin{align*}
\widetilde{\mtx{\Sigma}}(\X)=\E_{\vct{w}}\big[\phi\left(\X\vct{w}\right)\phi\left(\X\vct{w}\right)^T\big],
\end{align*}
where $\vct{w}\in\R^d$ has a $\mathcal{N}(\vct{0},\mtx{I}_d)$ distribution. We use $\widetilde{\lambda}(\X)$ to denote the corresponding minimum eigenvalue i.e.~$\widetilde{\lambda}(\X)=\lambda_{\min}\left(\widetilde{\Sigma}(\X)\right)$.
\end{definition}
With this definition in place we are now ready to state the main result of this section.
\begin{theorem}\label{thm sense} Consider a data set of input/label pairs $\vct{x}_i\in\R^d$ and $y_i\in\R$ for $i=1,2,\ldots,n$ aggregated as rows/entries of a data matrix $\mtx{X}\in\R^{n\times d}$ and a label vector $\vct{y}\in\R^n$. Without loss of generality we assume the dataset is normalized so that $\twonorm{\vct{x}_i}=1$. Also consider a one-hidden layer neural network with $k$ hidden units and one output of the form $\vct{x}\mapsto \vct{v}^T\phi\left(\mtx{W}\vct{x}\right)$ with $\mtx{W}\in\R^{k\times d}$ and $\vct{v}\in\R^k$ the input-to-hidden and hidden-to-output weights. We assume the activation $\phi$ is bounded at zero i.e.~$\abs{\phi(0)}\le B$ and has a bounded derivative i.e.~$\abs{\phi'(z)}\le B$ for all $z$. We set $\mtx{W}$ to be a random matrix with i.i.d.~$\mathcal{N}(0,1)$ entires. Also assume
\begin{align*}
k\ge C \log^2(n)\frac{n}{\widetilde{\lambda}(\X)}.
\end{align*}
Then, the matrix $\mtx{\Phi}:=\phi\left(\X\mtx{W}^T\right)$ has full row rank with the minimum eigenvalue obeying
\begin{align*}
\lambda_{\min}\left(\mtx{\Phi}\mtx{\Phi}^T\right)\ge \frac{1}{2}k\left(\widetilde{\lambda}(\X)-\frac{6B}{n^{100}}\right).
\end{align*}
Thus, the global optima of \eqref{vopt} achieves zero training error as long as $\widetilde{\lambda}(\X)\ge \frac{6B}{n^{100}}$.
\end{theorem}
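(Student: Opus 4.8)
Let $\vct{w}_1,\dots,\vct{w}_k\in\R^d$ denote the rows of $\mtx{W}$, which are i.i.d.~$\mathcal{N}(\vct{0},\mtx{I}_d)$, and put $\vct{u}_\ell:=\phi(\X\vct{w}_\ell)\in\R^n$. Because the $\ell$th column of $\mtx{\Phi}=\phi(\X\mtx{W}^T)$ equals $\vct{u}_\ell$, we have $\mtx{\Phi}\mtx{\Phi}^T=\sum_{\ell=1}^k\vct{u}_\ell\vct{u}_\ell^T$, a sum of $k$ i.i.d.~rank-one PSD matrices with common mean $\E[\vct{u}_\ell\vct{u}_\ell^T]=\widetilde{\mtx{\Sigma}}(\X)$ and smallest mean-eigenvalue $\widetilde{\lambda}(\X)$. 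The plan is to lower bound $\lambda_{\min}(\mtx{\Phi}\mtx{\Phi}^T)$ by the matrix Chernoff lower-tail inequality. The one genuine difficulty is that the summands are unbounded (the $\vct{w}_\ell$ are Gaussian), which I handle by truncation.

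First I would record a uniform scale for $\tn{\vct{u}_\ell}$. From $\abs{\phi(z)}\le\abs{\phi(0)}+B\abs{z}\le B(1+\abs{z})$ we get $\tn{\vct{u}_\ell}^2\le 2B^2 n+2B^2\tn{\X\vct{w}_\ell}^2$, and $\tn{\X\vct{w}_\ell}^2=\sum_j\sigma_j^2(\X)g_j^2$ is a weighted $\chi^2$ variable with $\sum_j\sigma_j^2(\X)=\tr(\X^T\X)=\sum_i\tn{\vct{x}_i}^2=n$ and $\max_j\sigma_j^2(\X)=\opnorm{\X}^2\le n$. The Laurent--Massart / Hanson--Wright tail bound then yields $\tn{\X\vct{w}_\ell}^2\le C n\log n$ with probability at least $1-n^{-200}$, and enlarging the constant drives this exceptional probability below any prescribed power of $n$. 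I therefore fix a truncation radius $L$ with $L^2\asymp B^2 n\log n$, so that $\Pro\{\tn{\vct{u}_\ell}>L\}$ is polynomially negligible; this step is the source of the $n^{-100}$ and of one of the two logarithms in the width requirement.

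Next, put $\bar{\vct{u}}_\ell:=\vct{u}_\ell\,\mathbb{1}_{\{\tn{\vct{u}_\ell}\le L\}}$, so $\tn{\bar{\vct{u}}_\ell}^2\le L^2$ deterministically and, by PSD monotonicity, $\mtx{\Phi}\mtx{\Phi}^T\succeq\sum_{\ell=1}^k\bar{\vct{u}}_\ell\bar{\vct{u}}_\ell^T$ \emph{always} --- so no union bound over $\ell$ is needed. The truncation bias obeys $\opnorm{\widetilde{\mtx{\Sigma}}(\X)-\E[\bar{\vct{u}}_\ell\bar{\vct{u}}_\ell^T]}\le\E\big[\tn{\vct{u}_\ell}^2\mathbb{1}_{\{\tn{\vct{u}_\ell}>L\}}\big]\le\sqrt{\E\tn{\vct{u}_\ell}^4}\,\sqrt{\Pro\{\tn{\vct{u}_\ell}>L\}}$, and since $\E\tn{\vct{u}_\ell}^4\le CB^4 n^2$ (using $\E\tn{\X\vct{w}_\ell}^4\le 3n^2$), choosing the constant in $L^2$ large enough makes this bias at most $\tfrac{6B}{n^{100}}$; hence $\lambda_{\min}(\E[\bar{\vct{u}}_\ell\bar{\vct{u}}_\ell^T])\ge\widetilde{\lambda}(\X)-\tfrac{6B}{n^{100}}$. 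Applying the matrix Chernoff bound to $\sum_{\ell=1}^k\bar{\vct{u}}_\ell\bar{\vct{u}}_\ell^T$ with per-summand bound $L^2$ and $\mu:=k\,\lambda_{\min}(\E[\bar{\vct{u}}_\ell\bar{\vct{u}}_\ell^T])$ gives
\begin{align*}
\Pro\Big\{\lambda_{\min}\big(\textstyle\sum_{\ell=1}^k\bar{\vct{u}}_\ell\bar{\vct{u}}_\ell^T\big)\le\tfrac12\mu\Big\}\le n\exp\!\big(-c\,\mu/L^2\big).
\end{align*}
Since $\widetilde{\lambda}(\X)\le 4B^2$, the effective ratio is $B^2/\widetilde{\lambda}(\X)$, and the hypothesis $k\ge C\log^2(n)\,n/\widetilde{\lambda}(\X)$ forces $\mu/L^2\gtrsim\log n$ (the constant $C$ absorbing the $B$-dependence), so the failure probability is at most $1/n$. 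On the complementary event, $\lambda_{\min}(\mtx{\Phi}\mtx{\Phi}^T)\ge\tfrac12\mu\ge\tfrac12 k\big(\widetilde{\lambda}(\X)-\tfrac{6B}{n^{100}}\big)$, which is strictly positive whenever $\widetilde{\lambda}(\X)>\tfrac{6B}{n^{100}}$; then $\mtx{\Phi}$ has full row rank, $(\mtx{\Phi}\mtx{\Phi}^T)^{-1}$ exists, and $\hat{\vct{v}}=\mtx{\Phi}^T(\mtx{\Phi}\mtx{\Phi}^T)^{-1}\vct{y}$ makes the loss in \eqref{vopt} vanish.

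The step I expect to be the main obstacle is controlling the unboundedness of $\vct{u}_\ell$: one must obtain a tail bound on $\tn{\X\vct{w}_\ell}^2$ sharp enough that the truncation radius stays at scale $B^2 n\,\mathrm{polylog}(n)$ --- which is exactly where $\tr(\X^T\X)=n$ together with $\opnorm{\X}^2\le n$ must be exploited, since the naive estimate $\tn{\X\vct{w}_\ell}^2\le\opnorm{\X}^2\tn{\vct{w}_\ell}^2$ loses a spurious factor of $d$ and would degrade the required width --- while simultaneously keeping the truncation bias polynomially negligible. Balancing these two so that one logarithm comes from the truncation radius and the other from the matrix Chernoff exponent, giving the stated $\log^2(n)\,n/\widetilde{\lambda}(\X)$ rather than a larger power of $\log n$, is the delicate part of the bookkeeping.
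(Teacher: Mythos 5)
Your proposal is correct and follows essentially the same route as the paper's proof: truncate the rank-one summands $\phi(\X\vct{w}_\ell)\phi(\X\vct{w}_\ell)^T$ at radius $T_n\asymp B\sqrt{n\log n}$, control the truncation bias via Cauchy--Schwarz to get the $6B/n^{100}$ term, and apply the matrix Chernoff lower-tail bound with $R=T_n^2$, which together with the $k\gtrsim \log^2(n)\,n/\widetilde{\lambda}(\X)$ hypothesis yields the stated eigenvalue bound. The only cosmetic difference is that the paper bounds $\Pro\{\twonorm{\phi(\X\vct{w})}>T_n\}$ by Gaussian Lipschitz concentration of $\vct{w}\mapsto\twonorm{\phi(\X\vct{w})}$ rather than a weighted $\chi^2$ tail for $\twonorm{\X\vct{w}}^2$; both exploit $\operatorname{trace}(\X\X^T)=n$ and give the same truncation scale.
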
 
%
%
%
We note that one can develop interpretable lower bounds for $\widetilde{\lambda}$ (see Appendix \ref{hermitesec}). For instance, in Appendix \ref{hermitesec} we show that
\begin{align*}
\widetilde{\lambda}(\X)\ge\gamma_\phi^2\sigma_{\min}^2\left(\X*\X\right)\quad\text{with}\quad \gamma_{\phi}=\frac{1}{\sqrt{2}}\E[\phi(g)(g^2-1)].
\end{align*}
As we discussed in the previous sections for generic or random data $\sigma_{\min}^2\left(\X*\X\right)$ often scales like a constant. In turn, based on the above inequality $\widetilde{\lambda}(\X)$ also scales like a constant. Thus, the above theorem shows that as long as the neural network is wide enough in the sense that $k\gtrsim n$, with high probability on can achieve perfect interpolation and the global optima by simply fitting the last layer with the input-to-hidden weights set randomly. Of course the optimization problem over $\W$ is significantly more challenging to analyze (the setting in this paper and other publications \cite{li2018learning,allen2018convergence,du2018gradient,zou2018stochastic, du2018gradient2}). However, this simple baseline result suggests that there is no fundamental barrier to understanding perfect interpolation for $k\gtrsim n$ wide networks. In particular, as discussed earlier the result above can be thought of as kernel learning with random features. Indeed, in this settings one can also show the solutions found by (stochastic) gradient descent converges to the least-norm solution and does indeed generalize. Furthermore, neural networks are often trained with the number of hidden nodes of the at each intermediate layer significantly smaller than the data size. Thus to truly understand the behavior of neural network training and demystify their success beyond kernel learning it is crucially important to focus on \emph{moderately} overparameterized networks where the number of data points is only moderately larger than the number of parameters used for training. We hope the discussion above can help focus future theoretical investigations to this moderately overparameterized regime.

\section{Numerical experiments}
\label{numeric}
\begin{figure}
    \centering
    \begin{subfigure}[b]{0.49\textwidth}
        \includegraphics[width=\textwidth]{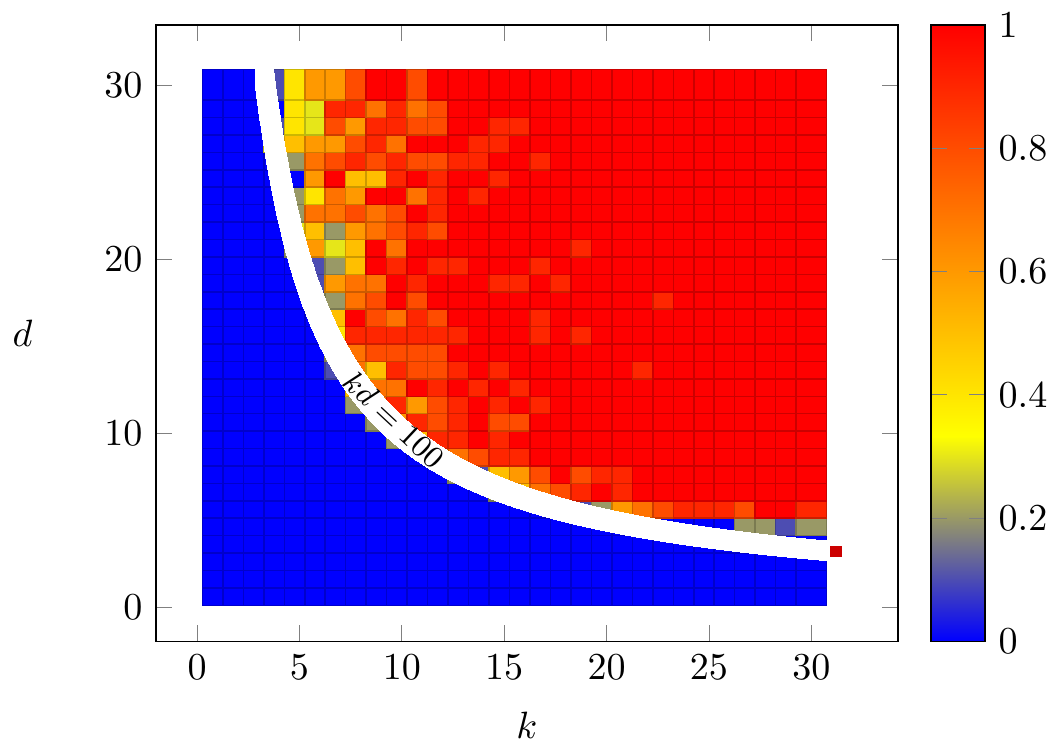}
        \caption{softplus activation with $n=100$}
        \label{PT1S}
    \end{subfigure}
    \begin{subfigure}[b]{0.49\textwidth}
        \includegraphics[width=\textwidth]{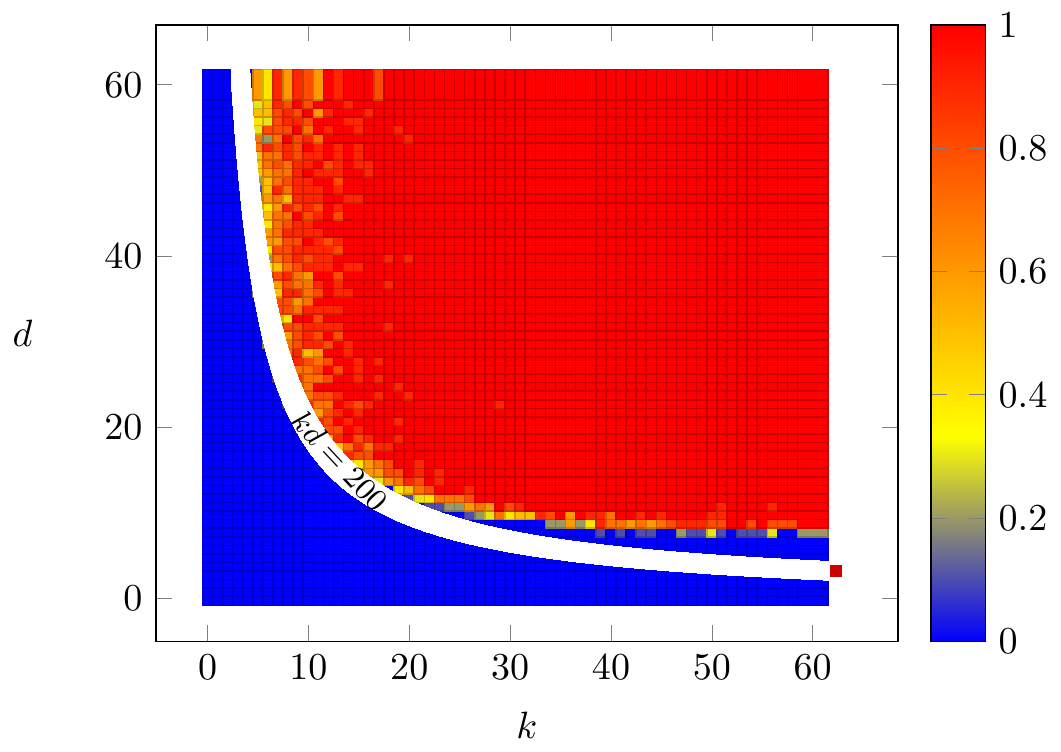}
        \caption{softplus activation with $n=200$}
        \label{PT2S}
    \end{subfigure}
     \begin{subfigure}[b]{0.49\textwidth}
        \includegraphics[width=\textwidth]{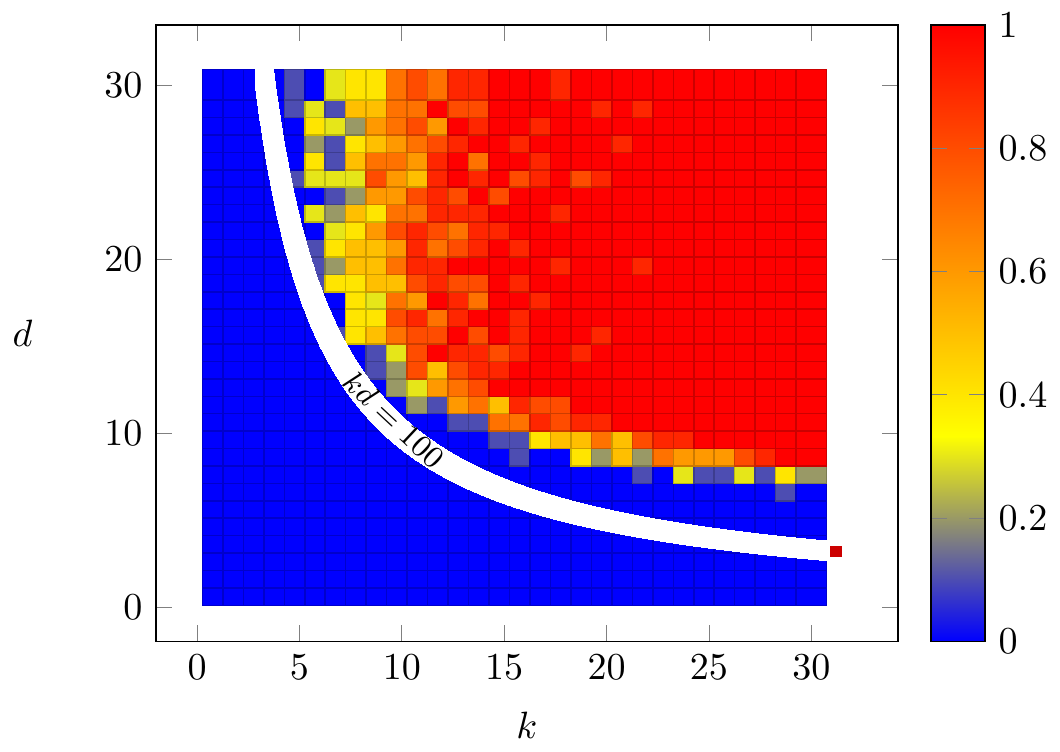}
         \caption{ReLU activation with $n=100$}
        \label{PT1R}
    \end{subfigure}
    \begin{subfigure}[b]{0.49\textwidth}
        \includegraphics[width=\textwidth]{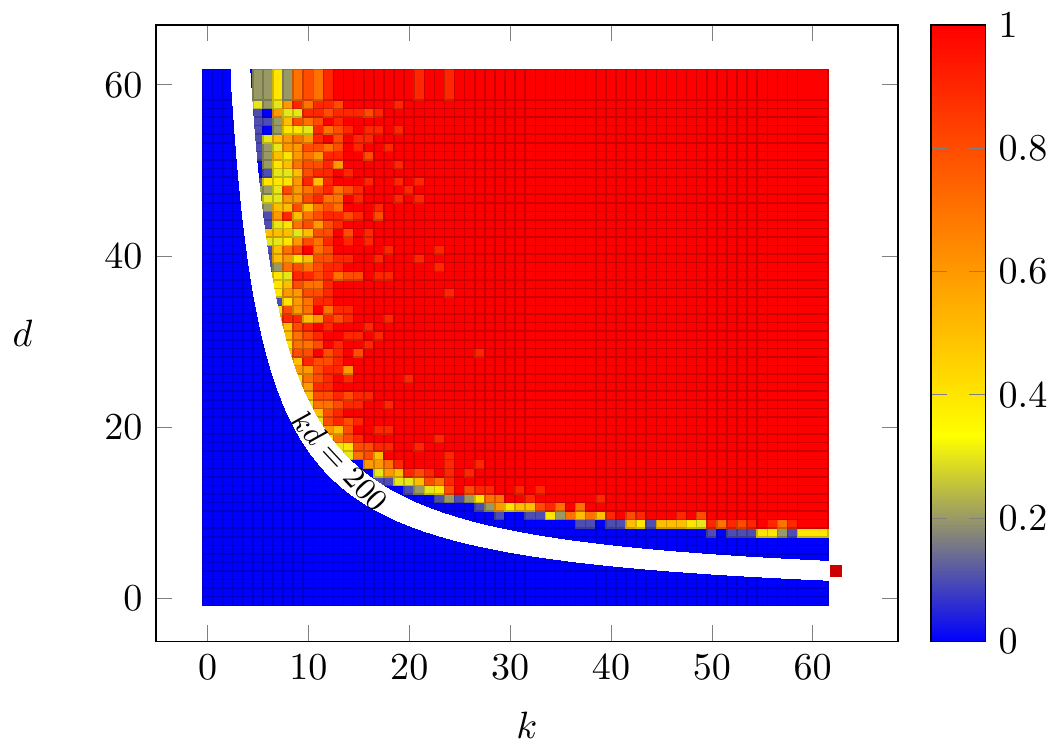}
         \caption{ReLU activation with $n=200$}
        \label{PT2R}
    \end{subfigure}
    \caption{
\textbf{Phase transitions for overparameterization.}
These diagrams show the empirical probability that gradient descent from a random initialization successfully fits $n$ random labels $\y\in\R^n$ when a one-hidden layer neural network is used. Here, $d$ is the input dimension, $k$ the number of hidden units, and $n$ the size of the training data. The colormap tapers between red and blue where red represents certain success, while blue represents certain failure. The solid white line highlights $n=kd$ i.e.~when the size of the training data is equal to the number of parameters.}
    \label{PTcurves}
    \end{figure}
In this section, we provide numerical evidence that neural networks trained with first order methods can fit to random data as long as the number of parameters exceed the size of the training dataset. In particular, we explore the fitting ability of a shallow neural network by fixing a dataset size $n$ and scanning over the different values of hidden nodes $k$ and input dimension $d$. The input samples are drawn i.i.d.~from the unit sphere, the labels i.i.d.~standard normal variables and the input/output weights of the network are initialized according to our theorems. We consider two activations softplus ($\phi(z)=softplus(z)=\log\left(1+e^z\right)$) and Rectified Linear Units ($\phi(z)=ReLU(z)=max(0,z)$). {We pick a constant learning rate of $\eta=0.15$ for softplus and $\eta=0.1$ for ReLU activations. We run the updates for $15000$ iterations or when the relative Euclidean error ($\twonorm{f(\W_\tau)-\y}/\twonorm{\y}$) falls below $2.5\times10^{-3}$. Success is declared if the relative loss is less than $2.5\times 10^{-3}$. To obtain an empirical probability, we average 10 independent realizations for each $(k,d)$ pair.

Figure \ref{PT1S} plots the success probability where $n=100$ and $k$ and $d$ are varied between $0$ to $25$. The solid white line represents the $n=kd$. There is a visible phase transition from failure to success as $k$ and $d$ grows. Perhaps more surprisingly, the success region is tightly surrounded by the $n=kd$ curve indicating that neural nets can overfit as soon as the problem is slightly overparameterized. Figure \ref{PT2S} repeats the same experiment with a larger dataset ($n=200$). Phase transitions are more visible in higher dimensions due to concentration of measure phenomena. Indeed, $n=kd$ curve matches the success region even tighter indicating that $kd>(1+\eps)n$ amount of overparametrization may suffice for fitting random data. 

A related set of experiments are based on assigning random labels in classification problems \cite{zhang2016understanding}. These experiments shuffle the labels of real datasets (e.g.~CIFAR10) and demonstrate that standard deep architectures can still fit them (even if the training takes a bit longer). While these experiments provide very interesting and useful insights the do not address the fundamental tradeoffs surrounding problem parameters such as $n, k, $ and $d$. Finally, we emphasize that the dataset in our experiment is randomly generated. It is possible that worst case datasets exhibit different phase transitions. For instance, if two identical inputs receive different outputs a significantly  higher amounts of overparameterization may be required.
\section{Prior art}

Optimization of neural networks is a challenging problem and it has been the topic of many recent works \cite{zhang2016understanding}. A large body of work focuses on understanding the optimization landscape of the simple nonlinearities or neural networks \cite{zhou2017landscape, soltanolkotabi2017learning, jin2018local,mei2016landscape,brutzkus2017globally,ge2017learning, zhong2017recovery,oymak2018stochastic,fu2018local} when the labels are created according to a planted model. These works establish local convergence guarantees and use techniques such as tensor methods to initialize the network in the proper local neighborhood. Ideally, one would not need specialized initialization if loss surface has no spurious local minima. However, a few publications \cite{yun2018critical,safran2017spurious} demonstrate that the loss surface of nonlinear networks do indeed contains spurious local minima even when the input data are random and the labels are created according to a planted model.


Over-parameterization seems to provide a way to bypass the challenging optimization landscape by relaxing the problem. Several works \cite{zhang2016understanding,Oymak:2018aa,soltanolkotabi2018theoretical,brutzkus2017sgd,keskar2016large,ji2018gradient,song2018mean,sagun2017empirical,chaudhari2016entropy,chizat2018global,arora2018optimization, Ji:2018aa, venturi2018spurious, Zhu:2018aa, Soudry:2016aa} study the benefits of overparameterization for training neural networks and related optimization problems. Very recent works \cite{li2018learning,allen2018convergence,du2018gradient,zou2018stochastic, du2018gradient2} show that overparameterized neural networks can fit the data with random initialization if the number of hidden nodes are polynomially large in the size of the dataset. While these results are based on assuming the networks are sufficiently wide with respect to the size of the data set we only require the total number of parameters to be sufficiently large. Since our conclusions and assumptions are more closely related to \cite{du2018gradient,du2018gradient2} we focus precise comparisons to these two publications. In particular, for smooth activations we show that neural networks can fit the data as soon as $kd\gtrsim n^2$ where as \cite{du2018gradient2} requires $k\gtrsim n^4$. Thus, in terms of the hidden units our results are sharper by a factor on the order of $n^2d$.\footnote{Our results are also sharper in terms of dependence on the quantity $\lambda$ defined in the proofs. In more detail, we require $kd\gtrsim \frac{n^2}{\lambda^2}$ where as \cite{du2018gradient2} requires $k\gtrsim \frac{n^4}{\lambda^4}$.} Focusing on ReLU networks we require $k\gtrsim \frac{n^4}{d^3}$ compared to $k\gtrsim n^6$ assumed in \cite{du2018gradient} so that our results are sharper by a factor $n^2d^3$. Our convergence rate for gradient descent also seems to be faster by a factor on the order of $n$ compared to these results. In addition our results extend to SGD. We would like to note however that our results focus on one-hidden layer networks where as some of the publications above such as \cite{du2018gradient2,allen2018convergence} apply to deep architectures. That said, our results and proof strategy can be extended to deeper architectures and we hope to study such networks in our future work. Finally, these recent papers as well as our work is inherently based on connecting neural networks to kernel methods. We would like to note that the relationship between kernel methods and deep learning has been emphasized by a few interesting publications \cite{belkin2018understand,chizat2018note,Belkin:2018ab,Liang:2018aa}. 

We would also like to note that a few interesting recent papers \cite{mei2018mean, chizat2018global, Sirignano:2018aa, Rotskoff:2018aa} relate the empirical distribution of the network parameters to Wasserstein gradient flows using ideas from mean field analysis. However, this literature is focused on asymptotic characterizations rather than finite-size networks.


An equally important question to understanding the convergence behavior of optimization algorithms for overparameterized models is understanding their generalization capabilities. This is the subject of a few interesting recent papers \cite{neyshabur2017pac,belkin2018reconciling,Arora:2018aa, Bartlett:2017aa, allen2018learning, Golowich:2017aa, Brutzkus:2017aa, Belkin:2018aa, Belkin:2018ab}. While this work do not directly address generalization, techniques developed here (e.g.~characterizing how far is global minima) may help demystify the generalization capabilities of overparametrized networks trained via first order methods. Rigorous understanding of the relationship between optimization and generalization is an interesting and important subject for future research. 

\section{Proofs}
\subsection{Preliminaries}
We begin by noting that for a one-hidden layer neural network of the form $\vct{x}\mapsto \vct{v}^T\phi\left(\W\x\right)$, the Jacobian matrix with respect to vect$(\mtx{W})\in\R^{kd}$ takes the form
\begin{align*}
\mathcal{J}(\mtx{W})=
\begin{bmatrix}
\mathcal{J}(\vct{w}_1) & \ldots & \mathcal{J}(\vct{w}_k)
\end{bmatrix}\in\R^{n\times kd}\quad\text{with}\quad\mathcal{J}(\vct{w}_\ell):=\vct{v}_\ell\text{diag}(\phi'(\mtx{X}\vct{w}_\ell))\mtx{X}.
\end{align*}
Alternatively this can be rewritten in the form
\begin{align}
\label{KR}
\mathcal{J}^T(\mtx{W})=\left(\text{diag}(\vct{v})\phi'\left(\mtx{W}\mtx{X}^T\right)\right)*\mtx{X}^T
\end{align}
An alternative characterization of the Jacobian is 
\begin{align*}
\text{mat}\left(\mathcal{J}^T(\mtx{W})\vct{u}\right)= \text{diag}(\vct{v})\phi'\left(\mtx{W}\mtx{X}^T\right)\text{diag}(\vct{u})\mtx{X}
\end{align*}
In particular, given a residual misfit $\vct{r}:=\vct{r}(\mtx{W}):=\phi\left(\W\X^T\right)^T\vct{v}-\y\in\R^n$ the gradient can be rewritten in the form
\begin{align*}
\nabla \mathcal{L}(\mtx{W})=\text{mat}\left(\mathcal{J}^T(\mtx{W})\vct{r}\right)= \text{diag}(\vct{v})\phi'\left(\mtx{W}\mtx{X}^T\right)\text{diag}(\vct{r})\mtx{X}
\end{align*}
We also note that
\begin{align*}
\mathcal{J}(\mtx{W})\mathcal{J}^T(\mtx{W})=\sum_{\ell=1}^k\vct{v}_\ell^2\text{diag}\left(\phi'\left(\mtx{X}\vct{w}_\ell\right)\right)\mtx{X}\mtx{X}^T\text{diag}\left(\phi'\left(\mtx{X}\vct{w}_\ell\right)\right).
\end{align*}
The latter can also be rewritten in the more compact form
\begin{align*}
\mathcal{J}(\mtx{W})\mathcal{J}^T(\mtx{W})=\left(\phi'\left(\mtx{X}\mtx{W}^T\right)\text{diag}\left(\vct{v}\right)\text{diag}\left(\vct{v}\right)\phi'\left(\mtx{W}\mtx{X}^T\right)\right)\odot\left(\mtx{X}\mtx{X}^T\right).
\end{align*}

\subsection{Meta-theorems}
In this section we will state two meta-theorems and discuss how the two main theorems stated in the main text follow from these results. Our results require defining the notion of a covariance matrix associated to a neural network.
\begin{definition}[Neural network covariance matrix and eigenvalue]\label{nneig} Let $\vct{w}\in\R^d$ be a random vector with a $\mathcal{N}(\vct{0},\mtx{I}_d)$ distribution. Also consider a set of $n$ input data points $\vct{x}_1,\vct{x}_2,\ldots,\vct{x}_n\in\R^d$ aggregated into the rows of a data matrix $\X\in\R^{n\times d}$. Associated to a network $\vct{x}\mapsto\vct{v}^T\phi\left(\W\x\right)$ we and the input data matrix $\X$ we define the neural net covariance matrix as
\begin{align*}
\mtx{\Sigma}(\X):=\E\Big[\left(\phi'\left(\X\w\right)\phi'\left(\X\w\right)^T\right)\odot\left(\X\X^T\right)\Big].
\end{align*}
We also define the eigenvalue $\lambda(\X)$ based on $\mtx{\Sigma}(\X)$ as
\begin{align*}
\lambda(\X):=\lambda_{\min}\left(\mtx{\Sigma}(\X)\right).
\end{align*}
\end{definition}
We note that the neural network covariance matrix is intimately related to the expected value of the Jacobian mapping of the neural network at the random initialization. In particular when the output weights have unit absolute value (i.e.~$\abs{\vct{v}_\ell}=1$), then
\begin{align*}
\mtx{\Sigma}(\X)=\frac{1}{k}\E_{\W_0}\Big[\mathcal{J}(\W_0)\mathcal{J}^T(\W_0)\Big],
\end{align*}
where $\W_0\in\R^{k\times d}$ is a matrix with i.i.d.~$\mathcal{N}(0,1)$ entires. 

As mentioned earlier we prove a more general version of Theorem \ref{thmshallowsmooth} which we now state. The proof is deferred to Section \ref{thmshallowsmoothg}.
\begin{theorem}[Meta-theorem for smooth activations]\label{thmshallowsmoothg}  Consider a data set of input/label pairs $\vct{x}_i\in\R^d$ and $y_i\in\R$ for $i=1,2,\ldots,n$ aggregated as rows/entries of a data matrix $\mtx{X}\in\R^{n\times d}$ and a label vector $\vct{y}\in\R^n$. Without loss of generality we assume the dataset is normalized so that $\twonorm{\vct{x}_i}=1$. Also consider a one-hidden layer neural network with $k$ hidden units and one output of the form $\vct{x}\mapsto \vct{v}^T\phi\left(\mtx{W}\vct{x}\right)$ with $\mtx{W}\in\R^{k\times d}$ and $\vct{v}\in\R^k$ the input-to-hidden and hidden-to-output weights. We assume the activation $\phi$ has bounded derivatives i.e.~$\abs{\phi'(z)}\le B$ and $\abs{\phi''(z)}\le B$ for all $z$. Let $\lambda(\X)$ be the minimum neural net eigenvalue per Definition \ref{nneig}. Furthermore, we set half of the entries of $\vct{v}$ to $\frac{\twonorm{\y}}{\sqrt{kn}}$ and the other half to $-\frac{\twonorm{\y}}{\sqrt{kn}}$ and train only over $\mtx{W}$. Starting from an initial weight matrix $\mtx{W}_0$ selected at random with i.i.d.~$\mathcal{N}(0,1)$ entries we run Gradient Descent (GD) updates of the form $\mtx{W}_{\tau+1}=\mtx{W}_\tau-\eta\nabla \mathcal{L}(\mtx{W}_\tau)$ on the loss \eqref{neuralopt} with step size $\eta= \frac{n\bar{\eta}}{2B^2\tn{\vct{y}}^2\opnorm{\X}^2}$ where $\bar{\eta}\le 1$. Then, as long as
\begin{align}
\label{overparamg}
\sqrt{kd}\ge c B^2(1+\delta) \widetilde{\kappa}(\mtx{X}) n\quad\text{holds with}\quad \widetilde{\kappa}(\mtx{X}):=\frac{\sqrt{\frac{d}{n}}\opnorm{\mtx{X}}}{\lambda\left(\X\right)},
\end{align}
and $c$ a fixed numerical constant, then with probability at least $1-\frac{1}{n}-e^{-\delta^2\frac{n}{2\opnorm{\X}^2}}$ all GD iterates obey
\begin{align*}
&\tn{f(\mtx{W}_\tau)-\y}\leq \left(1-\frac{\bar{\eta}}{32} \frac{1}{B^2}\frac{\lambda\left(\X\right)}{\opnorm{\X}^2}\right)^\tau\tn{f(\mtx{W}_0)-\y},\\
&\frac{\sqrt{\lambda(\X)}}{\sqrt{32}}\frac{\twonorm{\y}}{\sqrt{n}}\fronorm{\mtx{W}_\tau-\mtx{W}_0}+\tn{f(\mtx{W}_\tau)-\y}\leq \tn{f(\mtx{W}_0)-\y}.
\end{align*}
\noindent Furthermore, the total gradient path obeys
\begin{align*}
\sum_{\tau=0}^{\infty} \fronorm{\W_{\tau+1}-\W_\tau}\le \sqrt{32}\frac{\sqrt{n}}{\twonorm{\y}}\frac{\tn{f(\mtx{W}_0)-\y}}{\sqrt{\lambda(\X)}}.
\end{align*}
\end{theorem}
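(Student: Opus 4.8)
The plan is to cast the training dynamics as an instance of the overparameterized nonlinear least-squares framework of \cite{Oymak:2018aa}, where geometric convergence of gradient descent follows from two structural conditions on the neural network map $f(\W)=\phi(\X\W^T)^T\vct{v}$: (i) a uniform lower bound on the minimum singular value of the Jacobian $\mathcal{J}(\W)$ over a ball around $\W_0$ of radius comparable to $\frac{\sqrt{n}}{\smn{\mathcal{J}}}\tn{f(\W_0)-\y}$, and (ii) a uniform upper bound $\smx{\mathcal{J}(\W)}\le B\tn{\vct{y}}\opnorm{\X}/\sqrt{n}$ on that same ball (this is where the choice $\abs{\vct{v}_\ell}=\tn{\y}/\sqrt{kn}$ and $\abs{\phi'}\le B$ enter), together with a Lipschitz bound on $\mathcal{J}$ controlled by $\abs{\phi''}\le B$. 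Once these are established the step-size choice $\eta=\frac{n\bar\eta}{2B^2\tn{\y}^2\opnorm{\X}^2}$ is exactly calibrated to the upper bound, and the abstract theorem delivers the stated geometric rate $(1-\frac{\bar\eta}{32}\frac{\lambda(\X)}{B^2\opnorm{\X}^2})^\tau$, the weighted-distance-plus-misfit inequality, and the finite-path-length bound all at once.

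The technical heart is therefore step (i): showing that with high probability over $\W_0$, and uniformly for all $\W$ with $\fronorm{\W-\W_0}$ within the relevant radius, one has $\smn{\mathcal{J}(\W)}^2\gtrsim \lambda(\X)$. I would do this in two stages. First, concentration at initialization: using the Khatri-Rao representation $\mathcal{J}^T(\W_0)=(\diag(\vct{v})\phi'(\W_0\X^T))*\X^T$, one computes $\mathcal{J}(\W_0)\mathcal{J}^T(\W_0)=\sum_{\ell=1}^k \vct{v}_\ell^2\,(\phi'(\X\w_\ell)\phi'(\X\w_\ell)^T)\odot(\X\X^T)$, a sum of $k$ i.i.d.\ bounded PSD matrices whose expectation is $\frac{\tn{\y}^2}{n}\mtx{\Sigma}(\X)$ (here $\sum\vct{v}_\ell^2=\tn{\y}^2/n$). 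A matrix Bernstein / Chernoff inequality then gives $\smn{\mathcal{J}(\W_0)\mathcal{J}^T(\W_0)}\ge \frac{\tn{\y}^2}{2n}\lambda(\X)$ provided $k$ is large enough that the per-summand operator norm $\sim \frac{\tn{\y}^2}{kn}\opnorm{\X}^2$ is small relative to $\frac{\tn{\y}^2}{n}\lambda(\X)$, i.e.\ $k\gtrsim \opnorm{\X}^2/\lambda(\X)$ up to logs — but the actual requirement \eqref{overparamg} is stronger because of the second stage. Second, perturbation: for $\W$ in the ball, $\fronorm{\mathcal{J}(\W)-\mathcal{J}(\W_0)}\le \frac{\tn{\y}}{\sqrt{kn}}\,B\,\opnorm{\X}\,\fronorm{\X(\W-\W_0)^T}/\cdots \lesssim \frac{\tn{\y}}{\sqrt{kn}}B\opnorm{\X}^2\fronorm{\W-\W_0}$ using $\abs{\phi''}\le B$ and $\tn{\x_i}=1$; plugging the radius $R\sim \frac{\sqrt{n}}{\tn{\y}\sqrt{\lambda(\X)}}\tn{f(\W_0)-\y}$ and $\tn{f(\W_0)-\y}\lesssim \tn{\y}$ (which itself needs a bound on $\smx{\mathcal{J}}$ and $\fronorm{\W_0}$), one sees that controlling this perturbation to be $\le\frac12\smn{\mathcal{J}(\W_0)}$ forces $\sqrt{kd}\gtrsim B^2\widetilde\kappa(\X)n$, explaining the precise form of \eqref{overparamg} (the extra $\sqrt{d}$ and the factor $\sqrt{d/n}\opnorm{\X}/\lambda(\X)$ in $\widetilde\kappa$ come from balancing $\fronorm{\W_0}\approx\sqrt{kd}$ against $n$).

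I would then assemble the three displayed conclusions: the geometric decay of $\tn{f(\W_\tau)-\y}$ is immediate from the abstract result with the spectral bounds substituted; the inequality $\frac{\sqrt{\lambda(\X)}}{\sqrt{32}}\frac{\tn{\y}}{\sqrt n}\fronorm{\W_\tau-\W_0}+\tn{f(\W_\tau)-\y}\le\tn{f(\W_0)-\y}$ follows by summing the per-step bound $\fronorm{\W_{\tau+1}-\W_\tau}=\eta\fronorm{\nabla\mathcal{L}(\W_\tau)}\le \eta\smx{\mathcal{J}}\tn{f(\W_\tau)-\y}$ against the geometric decrease of the misfit, which is exactly the telescoping argument underlying the path-length bound; and the total-path bound $\sum_\tau\fronorm{\W_{\tau+1}-\W_\tau}\le\sqrt{32}\frac{\sqrt n}{\tn{\y}}\frac{\tn{f(\W_0)-\y}}{\sqrt{\lambda(\X)}}$ is the $\tau\to\infty$ limit of the same sum. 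The main obstacle I anticipate is making the uniform-over-the-ball Jacobian lower bound genuinely rigorous: one must (a) get the right high-probability event at initialization with a matrix concentration bound whose deviation term scales correctly in $k$, and (b) propagate it over an uncountable set of $\W$'s via the Lipschitz estimate while simultaneously certifying a priori that the gradient trajectory never exits the ball — a slightly delicate bootstrapping/induction on $\tau$ that must be threaded carefully so the radius used in the perturbation bound is consistent with the radius the iterates actually stay within.
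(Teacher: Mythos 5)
Your proposal matches the paper's proof in essentially every respect: it invokes the abstract result of \cite{Oymak:2018aa} (Theorem \ref{GDthm}/Corollary \ref{maincor}), establishes the Jacobian lower bound at initialization via matrix Chernoff, the upper bound and Lipschitz constant from $\abs{\phi'},\abs{\phi''}\le B$ and the choice of $\vct{v}$, bounds the initial misfit by $\order{\tn{\y}}$, and identifies the condition $LR\lesssim\alpha$ as the source of \eqref{overparamg}. The only cosmetic difference is that the "delicate bootstrapping" you worry about is already packaged inside Corollary \ref{maincor} (the Lipschitz bound alone propagates the initialization lower bound over the whole ball), and the initial-misfit bound in the paper comes from Gaussian Lipschitz concentration plus $\vct{1}^T\vct{v}=0$ rather than from $\fronorm{\W_0}$.
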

Next we state our meta-theorem for ReLU activations. The proof is deferred to Section \ref{thmshallowsmoothg}}.
\begin{theorem}[Meta-theorem for ReLU activations]\label{reluthmg} Consider the setting of Theorem \ref{thmshallowsmoothg} with the activations equal to $\phi(z)=ReLU(z):=\max(0,z)$ and the $\eta= \frac{n}{3\tn{\y}^2\|\X\|^2}\bar{\eta}$ with $\bar{\eta}\le 1$. Then, as long as
\begin{align}
k\ge C(1+\delta)^2\frac{n\opnorm{\X}^6}{\lambda^4(\X)},\label{k bigg}
\end{align}
holds with $C$ a fixed numerical constant, then with probability at least $1-\frac{2}{n}-e^{-\delta^2\frac{n}{\opnorm{\X}^2}}$ all GD iterates obey
\begin{align*}
&\tn{f(\mtx{W}_\tau)-\y}\leq \left(1-\frac{\bar{\eta}}{24}\frac{\lambda(\X)}{\opnorm{\X}^2}\right)^\tau\tn{f(\mtx{W}_0)-\y},\\
&\frac{1}{6\sqrt{2}}\frac{\twonorm{\y}}{\sqrt{n}}\sqrt{\lambda(\X)}\fronorm{\mtx{W}_\tau-\mtx{W}_0}+\tn{f(\mtx{W}_\tau)-\y}\leq \tn{f(\mtx{W}_0)-\y}.
\end{align*}
\end{theorem}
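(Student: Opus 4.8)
\textbf{The plan} is to reduce everything, exactly as for Theorem~\ref{thmshallowsmoothg}, to the general convergence guarantee for overparameterized nonlinear least squares of \cite{Oymak:2018aa}: if on a (Frobenius) ball $\mathcal{B}(\W_0,R)$ the Jacobian $\mathcal{J}(\W)$ of $\W\mapsto f(\W)=\phi(\W\X^T)^T\vct{v}$ obeys $\alpha\le\smn{\mathcal{J}(\W)}$ and $\opnorm{\mathcal{J}(\W)}\le\beta$, the step size satisfies $\eta\lesssim\beta^{-2}$, and $R\gtrsim\twonorm{f(\W_0)-\y}/\alpha$, then every GD iterate stays in $\mathcal{B}(\W_0,R)$ with $\twonorm{f(\W_\tau)-\y}\le(1-c\,\eta\alpha^2)^\tau\twonorm{f(\W_0)-\y}$ and the accompanying weighted distance/misfit inequality holds. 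So it suffices to exhibit $\alpha,\beta,R$ with the right scalings. With the prescribed $\vct{v}$ we have $\sum_\ell\vct{v}_\ell^2=\twonorm{\y}^2/n$ and, by the identity preceding Definition~\ref{nneig}, $\E_{\W_0}[\mathcal{J}(\W_0)\mathcal{J}^T(\W_0)]=\tfrac{\twonorm{\y}^2}{n}\mtx{\Sigma}(\X)$, so $\lambda(\X)$ will govern $\alpha^2$.

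\textbf{The two easy bounds.} Since $\mathcal{J}(\W)\mathcal{J}^T(\W)=\sum_{\ell=1}^k\vct{v}_\ell^2\,\diag(\phi'(\X\w_\ell))\X\X^T\diag(\phi'(\X\w_\ell))$ and $0\le\phi'\le1$, each summand is $\preceq\vct{v}_\ell^2\opnorm{\X}^2\Iden$, whence $\opnorm{\mathcal{J}(\W)}^2\le\tfrac{\twonorm{\y}^2}{n}\opnorm{\X}^2=:\beta^2$ for \emph{every} $\W$; this is precisely why $\eta=\tfrac{n}{3\twonorm{\y}^2\opnorm{\X}^2}\bar{\eta}\lesssim\beta^{-2}$ is admissible. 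At $\W_0$, $\mathcal{J}(\W_0)\mathcal{J}^T(\W_0)$ is a sum of $k$ independent PSD matrices each bounded by $\tfrac{\twonorm{\y}^2}{kn}\opnorm{\X}^2\Iden$ with mean $\tfrac{\twonorm{\y}^2}{n}\mtx{\Sigma}(\X)$, so the matrix Chernoff inequality gives $\mathcal{J}(\W_0)\mathcal{J}^T(\W_0)\succeq\tfrac12\tfrac{\twonorm{\y}^2}{n}\lambda(\X)\Iden$ with probability $\ge1-1/n$ as soon as $k\gtrsim\tfrac{\opnorm{\X}^2}{\lambda(\X)}\log n$, which is dominated by \eqref{k bigg}. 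A Lipschitz/Gaussian-concentration estimate for $f(\W_0)=\phi(\W_0\X^T)^T\vct{v}$ gives $\twonorm{f(\W_0)-\y}\le(1+\delta)\twonorm{\y}$ with probability $\ge1-e^{-\delta^2 n/\opnorm{\X}^2}$.

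\textbf{Jacobian stability over the ball.} For $\W\in\mathcal{B}(\W_0,R)$ the Jacobian changes only through sign flips of $\phi'(\W\X^T)$: neuron $\ell$ flips on sample $\x_i$ only when $\abs{\x_i^T\w_{0,\ell}}\le\fronorm{\w_\ell-\w_{0,\ell}}$, and $\x_i^T\w_{0,\ell}\sim\mathcal{N}(0,1)$ makes such events rare. A union/covering argument over $\mathcal{B}(\W_0,R)$ shows that, with probability $\ge1-1/n$, the fraction of flipped $(\ell,i)$ pairs is at most a log-inflated multiple of $R/\sqrt{k}$ uniformly in $\W$; combined with the Khatri--Rao structure \eqref{KR} this yields $\opnorm{\mathcal{J}(\W)\mathcal{J}^T(\W)-\mathcal{J}(\W_0)\mathcal{J}^T(\W_0)}\lesssim\tfrac{\twonorm{\y}^2}{n}\opnorm{\X}^2\cdot(R/\sqrt{k})$, with an extra factor $\opnorm{\X}^2/\lambda(\X)$ lost in the operator-norm bookkeeping for the Khatri--Rao product. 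Forcing this perturbation below $\tfrac12\tfrac{\twonorm{\y}^2}{n}\lambda(\X)$ caps the admissible radius at $R\lesssim\tfrac{\sqrt{k}\,\lambda(\X)}{\opnorm{\X}^2}$, and on the intersection of the good events we may take $\alpha^2=\tfrac14\tfrac{\twonorm{\y}^2}{n}\lambda(\X)$.

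\textbf{Assembling the pieces.} The radius the meta-theorem demands, $R\asymp\twonorm{f(\W_0)-\y}/\alpha\asymp(1+\delta)\sqrt{n/\lambda(\X)}$, must fit under the cap $\tfrac{\sqrt{k}\,\lambda(\X)}{\opnorm{\X}^2}$; this is exactly the hypothesis $k\gtrsim(1+\delta)^2\tfrac{n\,\opnorm{\X}^6}{\lambda^4(\X)}$ of \eqref{k bigg}. Feeding $\alpha,\beta,R,\eta$ into the general theorem — whose non-smooth version additionally absorbs the per-step linearization error of the piecewise-linear map $f$ into a count of sign flips between $\W_\tau$ and $\W_{\tau+1}$, which is negligible because $\fronorm{\W_{\tau+1}-\W_\tau}=\eta\,\fronorm{\nabla\Lc(\W_\tau)}$ is tiny — produces the rate $\big(1-\tfrac{\bar{\eta}}{24}\tfrac{\lambda(\X)}{\opnorm{\X}^2}\big)^\tau$ and the distance/misfit inequality, with failure probability $2/n+e^{-\delta^2 n/\opnorm{\X}^2}$. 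Theorems~\ref{reluthm}, \ref{maincorrel} and \ref{reluthmsep} then follow by substituting the Hermite-based lower bound $\lambda(\X)\ge\gamma_\phi^2\,\sigma_{\min}^2(\X*\X)$, resp.\ a separation-based lower bound on $\lambda(\X)$, together with standard $\opnorm{\X}$ estimates for random data. \textbf{The hard part will be} the uniform ReLU sign-pattern control: converting the pointwise rarity of sign flips into a spectral-norm perturbation bound on $\mathcal{J}(\W)\mathcal{J}^T(\W)$ valid simultaneously for \emph{all} $\W\in\mathcal{B}(\W_0,R)$ and sharp enough not to inflate the overparameterization requirement beyond $n\opnorm{\X}^6/\lambda^4(\X)$ — everything else is routine matrix concentration plus the black-box descent lemma.
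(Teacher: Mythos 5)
Your high-level architecture is exactly the paper's: a non-smooth descent meta-theorem driven by three ingredients --- $\beta=\tn{\y}\opnorm{\X}/\sqrt{n}$ from Lemma \ref{spectJ}, $\alpha\asymp\tn{\y}\sqrt{\lambda(\X)/n}$ at $\W_0$ from matrix Chernoff (Lemma \ref{minspectJ}), and $\tn{f(\W_0)-\y}\lesssim(1+\delta)\tn{\y}$ from Gaussian concentration --- so that the only real work is keeping $\smn{\Jc(\W)}$ bounded below on a ball of radius $R\asymp\tn{f(\W_0)-\y}/\alpha$. You also correctly flag the uniform ReLU sign-pattern control as the crux. But that step is precisely where your proposal has a genuine gap. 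First, the quantity that feeds into the spectral perturbation bound through the Khatri--Rao/Schur structure (Lemma \ref{j to phip}) is $\max_{1\le i\le n}\twonorm{\phi'(\W\x_i)-\phi'(\W_0\x_i)}$, i.e.\ a \emph{per-sample} count of flipped neurons; your proposed control of the aggregate ``fraction of flipped $(\ell,i)$ pairs'' does not preclude all the flips concentrating on a single sample, and so does not yield the operator-norm bound you write down. Second, a union/covering argument over the uncountable ball is exactly what fails in the adversarial case where the perturbation $\W-\W_0$ concentrates on a few neurons: the pointwise rarity of $\abs{\x_i^T\w_{0,\ell}}\le\twonorm{\w_\ell-\w_{0,\ell}}$ gives you nothing when one row absorbs the whole budget $R$. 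The paper sidesteps both issues with a purely \emph{deterministic} order-statistics lemma (Lemma \ref{simple m lem}): if $2m$ signs flip for sample $i$ then $\opnorm{\W-\W_0}^2\ge\twonorm{(\W-\W_0)\x_i}^2\ge m\abs{\W_0\x_i}_{m-}^2$, so uniformity over the ball is automatic and randomness is needed only to lower-bound the order statistic $\abs{\W_0\x_i}_{m-}\gtrsim m/k$ at the single point $\W_0$ (Lemma \ref{entry control}). This is also why the meta-theorem (Theorem \ref{metathm}) states its perturbation hypothesis for a norm dominated by the Euclidean one: the ball is taken in the \emph{spectral} norm of $\W-\W_0$.

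Two smaller points. Your arithmetic for the radius cap is inconsistent: a cap $R\lesssim\sqrt{k}\,\lambda(\X)/\opnorm{\X}^2$ would give $k\gtrsim(1+\delta)^2 n\opnorm{\X}^4/\lambda^3(\X)$, not \eqref{k bigg}; the paper's cap is $R\le m_0^3/(2k)\asymp\sqrt{k}\,\lambda^{3/2}(\X)/\opnorm{\X}^3$, which is what produces $k\gtrsim(1+\delta)^2 n\opnorm{\X}^6/\lambda^4(\X)$. And the non-smooth descent lemma does not need any ``negligible sign flips between consecutive iterates'' argument: the paper's Theorem \ref{metathm} writes $f(\bteta_{\tau+1})-f(\bteta_\tau)$ exactly via the average Jacobian $\Jc(\bteta_{\tau+1},\bteta_\tau)=\int_0^1\mathcal{J}(\bteta_\tau+s(\bteta_{\tau+1}-\bteta_\tau))\,ds$ and uses the asymmetric PSD perturbation Lemma \ref{asym pert}, with all Jacobians in the ball within $\alpha/3$ of $\Jc(\bteta_0)$; no separate linearization-error bookkeeping is required.
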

Our main theorems in Section \ref{sec main resu} can be obtained by substituting the appropriate value of $\la(\X)$ into the two meta theorems above.
\subsection{Reduction to quadratic activations and proofs for Theorems \ref{thmshallowsmooth} and \ref{reluthm}}
Theorems \ref{thmshallowsmooth} and \ref{reluthm} are corollaries of the meta-Theorems \ref{thmshallowsmoothg} and \ref{reluthmg}. To see this connection we will focus on lower bounding the the quantity $\lambda(\X)$ which is not very  interpretable and also not easily computable based on data. In the next lemma we provide a lower bound on $\lambda(\X)$ based on the minimum eigenvalue of the Khatri-Rao product of $\X$ with itself. This key lemma relates the neural network covariance (from Definition \ref{nneig}) for any activation $\phi$ to the case of where the activation is a quadratic of the form $\phi(z)=\frac{1}{2}z^2$. We defer the proof of this lemma to Appendix \ref{reductpf}. We also note that this lemma is a special case of a more general result containing higher order interactions between the data points. Please see Appendix \ref{hermitesec} for more details.
\begin{lemma}[Reduction to quadratic activations]\label{reduct} For an activation $\phi:\R\mapsto \R$ define the quantities
\begin{align*}
\widetilde{\mu}_\phi=\E_{g\sim\mathcal{N}(0,1)}[\phi'(g)]\quad\text{and}\quad\mu_\phi=\E_{g\sim\mathcal{N}(0,1)}[g\phi'(g)].
\end{align*}
Then, the neural network covariance matrix and eigenvalue obey
\begin{align}
\label{keiden}
\mtx{\Sigma}\left(\X\right)\succeq& \left(\widetilde{\mu}_\phi^2\vct{1}\vct{1}^T+\mu_\phi^2\X\X^T\right)\odot\left(\X\X^T\right)\succeq\mu_\phi^2\left(\X\X^T\right)\odot\left(\X\X^T\right),\\
\lambda\left(\X\right)\ge&\mu_\phi^2\sigma_{\min}^2\left(\X*\X\right).
\label{keiden2}
\end{align}
\end{lemma}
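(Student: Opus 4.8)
The plan is to peel off the first two ``Hermite'' components of $\phi'$ and then reduce everything to the Schur product theorem. Since $\X\X^T$ is deterministic, linearity of expectation gives $\mtx{\Sigma}(\X)=\M(\X)\odot(\X\X^T)$ where $\M(\X):=\E_{\w}\big[\phi'(\X\w)\phi'(\X\w)^T\big]$, so it suffices to lower bound $\M(\X)$ in the positive semidefinite order. Because $\abs{\phi'}\le B$ (and in the ReLU case $\phi'(z)=\mathbb{1}_{\{z\ge0\}}$ is still bounded), we have $\phi'\in L^2(\mathcal{N}(0,1))$, so I would write the orthogonal decomposition $\phi'(z)=\widetilde{\mu}_\phi+\mu_\phi z+\psi(z)$, where $\psi\in L^2(\mathcal{N}(0,1))$ is orthogonal to the constants and to $z$; the coefficients are exactly $\E[\phi'(g)]=\widetilde{\mu}_\phi$ and $\E[g\phi'(g)]=\mu_\phi\E[g^2]=\mu_\phi$, matching the definitions in the statement.

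Next, fix $i,j$ and set $g_i=\langle\x_i,\w\rangle$, $g_j=\langle\x_j,\w\rangle$. Since $\twonorm{\x_i}=1$, each $g_i$ is standard normal and $(g_i,g_j)$ is jointly Gaussian with $\E[g_ig_j]=\x_i^T\x_j$. Expanding $\phi'(g_i)\phi'(g_j)$ with the decomposition and using $\E[g_i\mid g_j]=(\x_i^T\x_j)\,g_j$, every mixed term involving $\psi$ has zero expectation (because $\psi\perp 1$ and $\psi\perp z$), so that
\[
[\M(\X)]_{ij}=\widetilde{\mu}_\phi^2+\mu_\phi^2\,\x_i^T\x_j+\E\big[\psi(g_i)\psi(g_j)\big].
\]
The first two terms assemble into $\widetilde{\mu}_\phi^2\vct{1}\vct{1}^T+\mu_\phi^2\X\X^T$, and the remainder matrix $\Rb:=\E_{\w}[\psi(\X\w)\psi(\X\w)^T]$ is an average of the rank-one PSD matrices $\psi(\X\w)\psi(\X\w)^T$ (with finite entries, by Cauchy--Schwarz), hence $\Rb\succeq 0$. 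Therefore $\M(\X)\succeq\widetilde{\mu}_\phi^2\vct{1}\vct{1}^T+\mu_\phi^2\X\X^T$.

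Finally, since $\X\X^T\succeq0$, the Schur product theorem applied to $\M(\X)-\big(\widetilde{\mu}_\phi^2\vct{1}\vct{1}^T+\mu_\phi^2\X\X^T\big)=\Rb\succeq0$ gives $\mtx{\Sigma}(\X)=\M(\X)\odot(\X\X^T)\succeq\big(\widetilde{\mu}_\phi^2\vct{1}\vct{1}^T+\mu_\phi^2\X\X^T\big)\odot(\X\X^T)$, which is the first inequality in \eqref{keiden}; dropping the PSD term $\widetilde{\mu}_\phi^2\big(\vct{1}\vct{1}^T\odot\X\X^T\big)=\widetilde{\mu}_\phi^2\,\X\X^T\succeq0$ yields the second. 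For \eqref{keiden2}, note that $[(\X*\X)(\X*\X)^T]_{ij}=\langle\x_i\otimes\x_i,\x_j\otimes\x_j\rangle=(\x_i^T\x_j)^2$, so $(\X\X^T)\odot(\X\X^T)=(\X*\X)(\X*\X)^T$, and hence $\lambda(\X)=\lambda_{\min}(\mtx{\Sigma}(\X))\ge\mu_\phi^2\lambda_{\min}\big((\X*\X)(\X*\X)^T\big)=\mu_\phi^2\sigma_{\min}^2(\X*\X)$. The only delicate point is the vanishing of the $\psi$-cross terms and the validity of the $L^2$ decomposition when some $\x_i^T\x_j=\pm1$ (repeated or antipodal points); this is handled by conditioning together with boundedness of $\phi'$, which keeps all the relevant expectations finite. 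Alternatively one may expand $\phi'$ in the full Hermite basis and use Mehler's formula $[\M(\X)]_{ij}=\sum_{r\ge0}c_r^2(\x_i^T\x_j)^r$, then discard the PSD terms with $r\ge2$; this is the route of Appendix \ref{hermitesec}.
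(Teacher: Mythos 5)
Your proof is correct and is essentially the paper's argument in different notation: the paper completes the square on $\E\big[(\phi'(\X\w)-\eta\vct{1}-\gamma\X\w)(\phi'(\X\w)-\eta\vct{1}-\gamma\X\w)^T\big]\succeq\mtx{0}$ with the optimal $\eta=\widetilde{\mu}_\phi$, $\gamma=\mu_\phi$, which is exactly your statement that the residual $\psi$ of the $L^2$ projection onto $\mathrm{span}\{1,z\}$ has PSD second-moment matrix, followed by the same appeal to Schur's theorem and the identity $(\X\X^T)\odot(\X\X^T)=(\X*\X)(\X*\X)^T$. The one small difference is that you compute the cross term $\E[g_i\phi'(g_j)]=(\x_i^T\x_j)\mu_\phi$ by conditioning rather than by Stein's identity, which cleanly sidesteps the differentiability caveat the paper must append for activations like ReLU.
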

To see the relationship with the quadratic activation note that for this activation
\begin{align*}
\mtx{\Sigma}(\X):=&\E\Big[\left(\phi'\left(\X\w\right)\phi'\left(\X\w\right)^T\right)\odot\left(\X\X^T\right)\Big]\\
=&\E\Big[\left(\X\w\w^T\W^T\right)\odot(\X\X^T)\Big]\\
=&\left(\E[\X\w\w^T\W^T]\right)\odot (\X\X^T)\\
=&(\X\X^T)\odot(\X\X^T)\\
=&(\X*\X)(\X*\X)^T.
\end{align*}
Thus the right-hand side of \eqref{keiden} is $\mu_\phi^2$ multiplied by the covariance matrix of a neural network with a quadratic activation $\phi(z)=\frac{1}{2}z^2$.

With this lemma in place we can now prove Theorem \ref{thmshallowsmooth} as simple corollaries of Theorem \ref{thmshallowsmoothg} by noting that $\lambda(\X)\ge \mu_\phi^2\sigma_{\min}^2\left(\X*\X\right)$ per \eqref{keiden2} from Lemma \ref{reduct}. Similarly, to prove Theorem \ref{reluthm} from Theorem \ref{reluthmg} we again use the fact that $\lambda(\X)\ge \mu_\phi^2\sigma_{\min}^2(\X*\X)$ where for the ReLU activation $\mu_\phi^2=\frac{1}{2\pi}$.
\subsection{Lower and upper bounds on the eigenvalues of the Jacobian}
In this section we will state a few key lemmas that provide lower and upper bounds on the eigenvalues of Jacobian matrices. The results in this section apply to any one-hidden neural network with activations that have bounded generalized derivative. In particular, our results here do not require the activation to be differentiable or smooth and thus apply to both the softplus ($\phi(z)=\log\left(e^z+1\right)$) and ReLU ($\phi(z)=\max\left(0,z\right)$) activations.

We begin this section by stating a key lemma regarding the spectrum of the Hadamard product of matrices due to Schur \cite{Schur1911} which plays a crucial role in  both the upper and lower bounds on the eigenvalues of the Jacobian discussed in this section as well as our results on the perturbation of eigenvalues of the Jacobian discussed in the next section.
\begin{lemma}[\cite{Schur1911}]\label{minHad} Let $\mtx{A},\mtx{B}\in\R^{n\times n}$ be two Positive Semi-Definite (PSD) matrices. Then,
\begin{align*}
\lambda_{\min}\left(\mtx{A}\odot\mtx{B}\right)\ge& \left(\min_{i} \mtx{B}_{ii}\right)\lambda_{\min}\left(\mtx{A}\right),\\
\lambda_{\max}\left(\mtx{A}\odot\mtx{B}\right)\le& \left(\max_{i} \mtx{B}_{ii}\right)\lambda_{\max}\left(\mtx{A}\right).
\end{align*}
\end{lemma}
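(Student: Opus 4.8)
The plan is to deduce both inequalities from the Schur product theorem (the Hadamard product of two positive semidefinite matrices is positive semidefinite) by splitting off the extreme eigenvalue of $\mtx{A}$ and handling the diagonal of $\mtx{B}$ by hand. For the lower bound I would set $\lambda := \lambda_{\min}(\mtx{A})$, which is nonnegative because $\mtx{A}\succeq 0$, and use the decomposition
\begin{align*}
\mtx{A}\odot\mtx{B} \;=\; \lambda\bigl(\mtx{I}\odot\mtx{B}\bigr) \;+\; \bigl(\mtx{A}-\lambda\mtx{I}\bigr)\odot\mtx{B}.
\end{align*}
Since $\mtx{A}-\lambda\mtx{I}\succeq 0$ by the definition of $\lambda$ and $\mtx{B}\succeq 0$ by hypothesis, the Schur product theorem gives $\bigl(\mtx{A}-\lambda\mtx{I}\bigr)\odot\mtx{B}\succeq 0$. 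Meanwhile $\mtx{I}\odot\mtx{B}=\diag(\mtx{B}_{11},\ldots,\mtx{B}_{nn})\succeq \bigl(\min_i \mtx{B}_{ii}\bigr)\mtx{I}$, and multiplying this through by the nonnegative scalar $\lambda$ preserves the Loewner order. Adding the two pieces yields $\mtx{A}\odot\mtx{B}\succeq \lambda_{\min}(\mtx{A})\bigl(\min_i \mtx{B}_{ii}\bigr)\mtx{I}$, which is exactly the first claim.

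The upper bound is obtained by the mirror-image argument: writing $\Lambda:=\lambda_{\max}(\mtx{A})\ge 0$ and noting $\Lambda\mtx{I}-\mtx{A}\succeq 0$, I would use
\begin{align*}
\mtx{A}\odot\mtx{B} \;=\; \Lambda\bigl(\mtx{I}\odot\mtx{B}\bigr) \;-\; \bigl(\Lambda\mtx{I}-\mtx{A}\bigr)\odot\mtx{B},
\end{align*}
where the subtracted term is again PSD by Schur, and $\Lambda\bigl(\mtx{I}\odot\mtx{B}\bigr)\preceq \Lambda\bigl(\max_i \mtx{B}_{ii}\bigr)\mtx{I}$. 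This gives $\mtx{A}\odot\mtx{B}\preceq \lambda_{\max}(\mtx{A})\bigl(\max_i \mtx{B}_{ii}\bigr)\mtx{I}$, and reading off minimum and maximum eigenvalues finishes the proof.

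The only substantive ingredient is the Schur product theorem itself, which is the cited classical result but which I could record in one line if desired: writing $\mtx{B}=\sum_j \mu_j\vct{v}_j\vct{v}_j^T$ with $\mu_j\ge 0$ gives $\mtx{A}\odot\mtx{B}=\sum_j \mu_j\,\diag(\vct{v}_j)\,\mtx{A}\,\diag(\vct{v}_j)\succeq 0$, each summand being a congruence of a PSD matrix (alternatively, $\mtx{A}\odot\mtx{B}$ is a principal submatrix of $\mtx{A}\otimes\mtx{B}\succeq 0$). There is no real obstacle here; the only thing to watch is the sign bookkeeping — that $\lambda_{\min}(\mtx{A})$ and $\lambda_{\max}(\mtx{A})$ are nonnegative so that scaling by them respects the positive semidefinite ordering — which is immediate from $\mtx{A}\succeq 0$.
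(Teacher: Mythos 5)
Your proof is correct. The paper itself gives no proof of this lemma --- it is stated as a classical result with a citation to Schur (1911) --- so there is nothing to compare against; your argument (splitting off $\lambda_{\min}(\mtx{A})\mtx{I}$ or $\lambda_{\max}(\mtx{A})\mtx{I}$, invoking the Schur product theorem on the PSD remainder, and bounding $\mtx{I}\odot\mtx{B}$ by its extreme diagonal entries) is the standard textbook derivation, and your one-line justification of the Schur product theorem via $\mtx{A}\odot\vct{v}\vct{v}^T=\diag(\vct{v})\,\mtx{A}\,\diag(\vct{v})$ is also sound.
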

The next lemma focuses on upper bounding the spectral norm of the Jacobian. The proof is deferred to Appendix \ref{spectJpf}.
\begin{lemma}[Spectral norm of the Jacobian]\label{spectJ} Consider a one-hidden layer neural network model of the form $\vct{x}\mapsto \vct{v}^T\phi\left(\W\x\right)$ where the activation $\phi$ has bounded derivatives obeying $\abs{\phi'(z)}\le B$. Also assume we have $n$ data points $\vct{x}_1, \vct{x}_2,\ldots,\vct{x}_n\in\R^d$ aggregated as the rows of a matrix $\X\in\R^{n\times d}$. Then the Jacobian matrix with respect to the input-to-hidden weights obeys 
\begin{align*}
\opnorm{\mathcal{J}(\mtx{W})}\le \sqrt{k}B\infnorm{\vct{v}}\opnorm{\X}.
\end{align*}
\end{lemma}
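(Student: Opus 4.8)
The plan is to control $\opnorm{\mathcal{J}(\W)}$ through the Gram matrix $\mathcal{J}(\W)\mathcal{J}^T(\W)\in\R^{n\times n}$, using the decomposition into per-neuron PSD summands already recorded in the Preliminaries. Recall that
\begin{align*}
\mathcal{J}(\W)\mathcal{J}^T(\W)=\sum_{\ell=1}^k\vct{v}_\ell^2\,\text{diag}\left(\phi'\left(\X\vct{w}_\ell\right)\right)\X\X^T\,\text{diag}\left(\phi'\left(\X\vct{w}_\ell\right)\right),
\end{align*}
and since $\opnorm{\mathcal{J}(\W)}^2=\opnorm{\mathcal{J}(\W)\mathcal{J}^T(\W)}$, it is enough to bound the spectral norm of the right-hand side.

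For each fixed $\ell$ the summand is (up to the scalar $\vct{v}_\ell^2$) a conjugation of the PSD matrix $\X\X^T$ by the diagonal matrix $\text{diag}(\phi'(\X\vct{w}_\ell))$. Because $\abs{\phi'(z)}\le B$ for every $z$, each diagonal entry has magnitude at most $B$, so $\opnorm{\text{diag}(\phi'(\X\vct{w}_\ell))}\le B$. Submultiplicativity of the operator norm then gives, for every $\ell$,
\begin{align*}
\opnorm{\vct{v}_\ell^2\,\text{diag}\left(\phi'\left(\X\vct{w}_\ell\right)\right)\X\X^T\,\text{diag}\left(\phi'\left(\X\vct{w}_\ell\right)\right)}\le \vct{v}_\ell^2B^2\opnorm{\X\X^T}\le \infnorm{\vct{v}}^2B^2\opnorm{\X}^2,
\end{align*}
using $\opnorm{\X\X^T}=\opnorm{\X}^2$ and $\vct{v}_\ell^2\le\infnorm{\vct{v}}^2$.

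The remaining step is simply the triangle inequality for the operator norm over the $k$ summands, which yields $\opnorm{\mathcal{J}(\W)\mathcal{J}^T(\W)}\le k\,\infnorm{\vct{v}}^2B^2\opnorm{\X}^2$; taking square roots gives the claimed $\opnorm{\mathcal{J}(\W)}\le\sqrt{k}\,B\,\infnorm{\vct{v}}\,\opnorm{\X}$. There is no genuine obstacle in this lemma; the only thing to get right is recognizing that $\mathcal{J}\mathcal{J}^T$ splits into $k$ manifestly controllable PSD pieces, so that the per-neuron bound combines additively rather than, say, requiring a more delicate spectral argument. (An alternative would be to bound $\opnorm{\mathcal{J}^T(\W)}$ directly from the Khatri--Rao representation $\mathcal{J}^T(\W)=(\text{diag}(\vct{v})\phi'(\W\X^T))*\X^T$, but the Gram-matrix route above is the most transparent.)
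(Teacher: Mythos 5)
Your proof is correct, but it follows a different route from the paper's. You decompose $\mathcal{J}(\W)\mathcal{J}^T(\W)$ neuron-by-neuron as the sum of the $k$ PSD matrices $\vct{v}_\ell^2\,\text{diag}(\phi'(\X\vct{w}_\ell))\X\X^T\text{diag}(\phi'(\X\vct{w}_\ell))$, bound each summand by $\infnorm{\vct{v}}^2B^2\opnorm{\X}^2$ via submultiplicativity, and add up with the triangle inequality. The paper instead works with the Hadamard-product form $\mathcal{J}(\W)\mathcal{J}^T(\W)=\big(\phi'(\X\W^T)\text{diag}(\vct{v})\text{diag}(\vct{v})\phi'(\W\X^T)\big)\odot(\X\X^T)$ and invokes Schur's lemma (Lemma \ref{minHad}), $\lambda_{\max}(\mtx{A}\odot\mtx{B})\le(\max_i\mtx{B}_{ii})\lambda_{\max}(\mtx{A})$, with $\mtx{A}=\X\X^T$ and the diagonal entries $\mtx{B}_{ii}=\twonorm{\text{diag}(\vct{v})\phi'(\W\vct{x}_i)}^2\le kB^2\infnorm{\vct{v}}^2$. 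In effect you slice the Gram matrix by hidden unit while the paper slices by data point; both yield exactly the constant $kB^2\infnorm{\vct{v}}^2\opnorm{\X}^2$. Your argument is more elementary in that it avoids Schur's lemma entirely, whereas the paper's route keeps the Hadamard-product machinery that it reuses for the lower bound on $\sigma_{\min}(\mathcal{J})$ and for the Jacobian perturbation lemmas, so neither buys a sharper bound here.
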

Next we focus on lower bounding the minimum eigenvalue of the Jacobian matrix at initialization. The proof is deferred to Appendix \ref{minspectJpf}.
\begin{lemma}[Minimum eigenvalue of the Jacobian at initialization]\label{minspectJ} Consider a one-hidden layer neural network model of the form $\vct{x}\mapsto \vct{v}^T\phi\left(\W\x\right)$ where the activation $\phi$ has bounded derivatives obeying $\abs{\phi'(z)}\le B$. Also assume we have $n$ data points $\vct{x}_1, \vct{x}_2,\ldots,\vct{x}_n\in\R^d$ with unit euclidean norm ($\twonorm{\vct{x}_i}=1$) aggregated as the rows of a matrix $\X\in\R^{n\times d}$. Also define $\mu_\phi=\E[g\phi'(g)]$. Then, as long as
\begin{align*}
\frac{\twonorm{\vct{v}}}{\infnorm{\vct{v}}}\ge \sqrt{20\log n}\frac{\opnorm{\X}}{\sqrt{\lambda(\X)}}B,
\end{align*}
the Jacobian matrix at a random point $\mtx{W}_0\in\R^{k\times d}$ with i.i.d.~$\mathcal{N}(0,1)$ entries obeys 
\begin{align*}
\sigma_{\min}\left(\mathcal{J}(\W_0)\right)\ge \frac{1}{\sqrt{2}}\twonorm{\vct{v}}\sqrt{\lambda(\X)},
\end{align*}
with probability at least $1-1/n$.
\end{lemma}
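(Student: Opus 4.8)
The plan is to recognize $\mathcal{J}(\W_0)\mathcal{J}^T(\W_0)$ as a sum of $k$ independent positive semidefinite (PSD) random matrices and to apply a matrix Chernoff lower-tail bound on its minimum eigenvalue. Using the identity for $\mathcal{J}(\W)\mathcal{J}^T(\W)$ from the Preliminaries, write
\begin{align*}
\mathcal{J}(\W_0)\mathcal{J}^T(\W_0)=\sum_{\ell=1}^k \mtx{N}_\ell,\qquad \mtx{N}_\ell:=\vct{v}_\ell^2\,\text{diag}\left(\phi'\left(\X\w_\ell\right)\right)\X\X^T\,\text{diag}\left(\phi'\left(\X\w_\ell\right)\right),
\end{align*}
where $\w_1,\ldots,\w_k\in\R^d$ are the i.i.d.~$\mathcal{N}(\vct{0},\mtx{I}_d)$ rows of $\W_0$, so the $\mtx{N}_\ell$ are independent and PSD. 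Since each $\w_\ell$ has the same law as the vector $\w$ in Definition \ref{nneig}, $\E[\mtx{N}_\ell]=\vct{v}_\ell^2\,\mtx{\Sigma}(\X)$, hence $\sum_{\ell=1}^k\E[\mtx{N}_\ell]=\twonorm{\vct{v}}^2\,\mtx{\Sigma}(\X)$ and $\lambda_{\min}\left(\sum_{\ell=1}^k\E[\mtx{N}_\ell]\right)=\twonorm{\vct{v}}^2\lambda(\X)=:\mu_{\min}$; note that the normalization $\twonorm{\vct{x}_i}=1$ is already absorbed into $\mtx{\Sigma}(\X)$ and is not needed separately here.

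Next I would bound each summand deterministically: since $\abs{\phi'(z)}\le B$, the diagonal factors have spectral norm at most $B$, whence $\opnorm{\mtx{N}_\ell}\le \vct{v}_\ell^2 B^2\opnorm{\X}^2\le \infnorm{\vct{v}}^2 B^2\opnorm{\X}^2=:R$ almost surely. The matrix Chernoff inequality (lower tail), applied to the $\mtx{N}_\ell$ in ambient dimension $n$, then gives, for any $\epsilon\in(0,1)$,
\begin{align*}
\P\left(\lambda_{\min}\left(\mathcal{J}(\W_0)\mathcal{J}^T(\W_0)\right)\le (1-\epsilon)\mu_{\min}\right)\le n\exp\left(-\frac{\epsilon^2}{2}\cdot\frac{\mu_{\min}}{R}\right).
\end{align*}
Taking $\epsilon=\tfrac12$, the exponent equals $\tfrac18\cdot\dfrac{\twonorm{\vct{v}}^2\lambda(\X)}{\infnorm{\vct{v}}^2 B^2\opnorm{\X}^2}$, and the hypothesis $\twonorm{\vct{v}}/\infnorm{\vct{v}}\ge\sqrt{20\log n}\,\opnorm{\X}B/\sqrt{\lambda(\X)}$ makes this at least $\tfrac{20}{8}\log n\ge 2\log n$; the failure probability is therefore at most $n\cdot n^{-2}=1/n$. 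On the complementary event $\lambda_{\min}\left(\mathcal{J}(\W_0)\mathcal{J}^T(\W_0)\right)\ge\tfrac12\twonorm{\vct{v}}^2\lambda(\X)$, and taking square roots yields $\sigma_{\min}\left(\mathcal{J}(\W_0)\right)\ge\tfrac{1}{\sqrt 2}\twonorm{\vct{v}}\sqrt{\lambda(\X)}$, as claimed.

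There is no genuinely hard step here: essentially all of the probabilistic content is delegated to the matrix Chernoff bound. The points that require care are bookkeeping ones: (i) $\E[\mtx{N}_\ell]=\vct{v}_\ell^2\mtx{\Sigma}(\X)$ uses only the Gaussianity of $\w_\ell$; (ii) the relevant deterministic quantity is $R=\infnorm{\vct{v}}^2B^2\opnorm{\X}^2$, which is exactly why the hypothesis is phrased through the ratio $\twonorm{\vct{v}}/\infnorm{\vct{v}}$ (equal to $\sqrt{k}$ for the balanced choice of $\vct{v}$ in Theorem \ref{thmshallowsmooth}, which is what recovers the stated overparameterization scaling); and (iii) the dimension appearing in the Chernoff bound is $n$, the number of rows of $\X$, which produces the $1/n$ failure probability. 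The quantity $\mu_\phi$ in the statement plays no role in this lemma itself; it enters the downstream results only via the bound $\lambda(\X)\ge\mu_\phi^2\sigma_{\min}^2(\X*\X)$ of Lemma \ref{reduct}.
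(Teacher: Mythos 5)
Your proof is correct and follows essentially the same route as the paper: both decompose $\mathcal{J}(\W_0)\mathcal{J}^T(\W_0)$ into the same $k$ independent PSD summands, identify $\lambda_{\min}$ of the expected sum as $\twonorm{\vct{v}}^2\lambda(\X)$, bound each summand by $R=B^2\infnorm{\vct{v}}^2\opnorm{\X}^2$, and apply matrix Chernoff with $\delta=1/2$. The only cosmetic difference is that you quote the Chernoff tail in the simplified form $n\,e^{-\epsilon^2\mu_{\min}/(2R)}$ while the paper keeps the $\bigl(e^{-\delta}/(1-\delta)^{1-\delta}\bigr)^{\mu_{\min}/R}$ form and bounds its base by $e^{-1/10}$; both yield the stated $1/n$ failure probability.
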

\subsection{Jacobian perturbation}
In this section we discuss results regarding the perturbation of the Jacobian matrix. 

Our first result focuses on smooth activations. In particular, we show the Lipschitz property of the Jacobian with smooth activations. The proof is deferred to Appendix \ref{JLlempf}.
\begin{lemma}[Jacobian Lipschitzness]\label{JLlem} Consider a one-hidden layer neural network model of the form $\vct{x}\mapsto \vct{v}^T\phi\left(\W\x\right)$ where the activation $\phi$ has bounded second order derivatives obeying $\abs{\phi''(z)}\le M$. Also assume we have $n$ data points $\vct{x}_1, \vct{x}_2,\ldots,\vct{x}_n\in\R^d$ with unit euclidean norm ($\twonorm{\vct{x}_i}=1$) aggregated as the rows of a matrix $\X\in\R^{n\times d}$. Then the Jacobian mapping with respect to the input-to-hidden weights obeys
\begin{align*}
\opnorm{\mathcal{J}(\widetilde{\mtx{W}})-\mathcal{J}(\mtx{W})}\le M\infnorm{\vct{v}}\opnorm{\mtx{X}}\fronorm{\widetilde{\mtx{W}}-\mtx{W}}\quad\text{for all}\quad \widetilde{\W},\W\in\R^{k\times d}.
\end{align*}
\end{lemma}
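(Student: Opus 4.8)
The plan is to prove the Lipschitz bound on $\mathcal{J}(\cdot)$ by writing the difference $\mathcal{J}(\widetilde{\mtx{W}})-\mathcal{J}(\mtx{W})$ in a form where the second derivative $\phi''$ appears explicitly, and then bounding the resulting operator norm by a product of readily controllable factors. Recall from the Preliminaries that the Jacobian acts on a vector $\vct{u}\in\R^{kd}$ via $\text{mat}\left(\mathcal{J}^T(\mtx{W})\vct{u}\right)=\text{diag}(\vct{v})\phi'\left(\mtx{W}\mtx{X}^T\right)\text{diag}(\vct{u})\mtx{X}$; equivalently, in the Khatri--Rao form \eqref{KR}, $\mathcal{J}^T(\mtx{W})=\left(\text{diag}(\vct{v})\phi'\left(\mtx{W}\mtx{X}^T\right)\right)*\mtx{X}^T$. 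So the only $\mtx{W}$-dependence sits in the matrix $\phi'\left(\mtx{W}\mtx{X}^T\right)\in\R^{k\times n}$, applied entrywise. The first step is therefore to reduce bounding $\opnorm{\mathcal{J}(\widetilde{\mtx{W}})-\mathcal{J}(\mtx{W})}$ to bounding the effect of replacing $\phi'\left(\mtx{W}\mtx{X}^T\right)$ by $\phi'\left(\widetilde{\mtx{W}}\mtx{X}^T\right)$.

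Concretely, I would estimate $\opnorm{\mathcal{J}(\widetilde{\mtx{W}})-\mathcal{J}(\mtx{W})} = \sup_{\tf{\vct{a}}\le 1,\ \tn{\vct{b}}\le 1} \vct{a}^T\left(\mathcal{J}(\widetilde{\mtx{W}})-\mathcal{J}(\mtx{W})\right)^T\vct{b}$ after vectorizing $\vct{a}$ into a $k\times d$ matrix $\mtx{A}$. Using the $\text{mat}$ characterization, this inner product becomes $\iprod{\mtx{A}}{\text{diag}(\vct{v})\left(\phi'(\widetilde{\mtx{W}}\mtx{X}^T)-\phi'(\mtx{W}\mtx{X}^T)\right)\text{diag}(\vct{b})\mtx{X}}$, which I can rewrite as $\iprod{\text{diag}(\vct{v})\mtx{A}\mtx{X}^T\text{diag}(\vct{b})}{\phi'(\widetilde{\mtx{W}}\mtx{X}^T)-\phi'(\mtx{W}\mtx{X}^T)}$ (a Frobenius inner product of $k\times n$ matrices). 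Now I would apply the mean value theorem entrywise: $\left[\phi'(\widetilde{\mtx{W}}\mtx{X}^T)-\phi'(\mtx{W}\mtx{X}^T)\right]_{\ell j} = \phi''(\xi_{\ell j})\left[(\widetilde{\mtx{W}}-\mtx{W})\mtx{X}^T\right]_{\ell j}$ for some $\xi_{\ell j}$, so this matrix equals $\mtx{E}\odot\left((\widetilde{\mtx{W}}-\mtx{W})\mtx{X}^T\right)$ where $\mtx{E}$ has entries $\phi''(\xi_{\ell j})$, hence $\abs{\mtx{E}_{\ell j}}\le M$ by hypothesis. The bound then follows from a chain of Cauchy--Schwarz / Hölder-type estimates: the Frobenius inner product is at most $\tf{\text{diag}(\vct{v})\mtx{A}\mtx{X}^T\text{diag}(\vct{b})}\cdot\tf{\mtx{E}\odot\left((\widetilde{\mtx{W}}-\mtx{W})\mtx{X}^T\right)}$, and then one bounds each factor: $\tf{\text{diag}(\vct{v})\mtx{A}\mtx{X}^T\text{diag}(\vct{b})}\le \infnorm{\vct{v}}\opnorm{\mtx{X}}\tf{\mtx{A}}\le\infnorm{\vct{v}}\opnorm{\mtx{X}}$ using $\tn{\vct{b}}\le 1$ and $\opnorm{\text{diag}(\vct{b})\mtx{X}}$ bounded via the unit-norm rows (so $\twonorm{\mtx{X}^T\text{diag}(\vct{b})\vct{c}}$ is controlled), while $\tf{\mtx{E}\odot\left((\widetilde{\mtx{W}}-\mtx{W})\mtx{X}^T\right)}\le M\,\tf{(\widetilde{\mtx{W}}-\mtx{W})\mtx{X}^T}\le M\opnorm{\mtx{X}}\fronorm{\widetilde{\mtx{W}}-\mtx{W}}$ since the Hadamard product by a matrix of entries $\le M$ in absolute value contracts the Frobenius norm by at most $M$.

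Combining these two factor bounds gives $\opnorm{\mathcal{J}(\widetilde{\mtx{W}})-\mathcal{J}(\mtx{W})}\le M\infnorm{\vct{v}}\opnorm{\mtx{X}}\fronorm{\widetilde{\mtx{W}}-\mtx{W}}$ after I recheck that I have not double-counted an $\opnorm{\mtx{X}}$ factor — the delicate bookkeeping is which $\opnorm{\mtx{X}}$ goes where, and in particular that the $\text{diag}(\vct{b})$ and the rows of $\mtx{X}$ together contribute only a single $\opnorm{\mtx{X}}$ (not $\opnorm{\mtx{X}}^2$) to the first factor. I expect the main obstacle to be exactly this careful tracking of norms through the two matrix factors so that only one power of $\opnorm{\mtx{X}}$ survives in each, together with justifying the entrywise mean-value step cleanly (it is scalar calculus applied coordinatewise, so no real subtlety, but it must be stated). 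An alternative, perhaps cleaner, route avoiding the MVT is to write $\mathcal{J}(\widetilde{\mtx{W}})-\mathcal{J}(\mtx{W}) = \int_0^1 \frac{d}{dt}\mathcal{J}\left(\mtx{W}+t(\widetilde{\mtx{W}}-\mtx{W})\right)\,dt$ and bound the integrand's operator norm uniformly using $\abs{\phi''}\le M$; I would keep this as a fallback if the discrete argument becomes messy.
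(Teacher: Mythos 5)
Your strategy is sound and reaches the right bound, but it takes a genuinely different route from the paper in its first half. The paper never touches the bilinear form: it writes $\mathcal{J}(\widetilde{\W})-\mathcal{J}(\W)$ in Khatri--Rao form, uses $(\mtx{A}*\mtx{B})(\mtx{A}*\mtx{B})^T=(\mtx{A}\mtx{A}^T)\odot(\mtx{B}\mtx{B}^T)$ to express $(\Delta\mathcal{J})(\Delta\mathcal{J})^T$ as a Hadamard product with $\X\X^T$, and then invokes Schur's lemma (Lemma \ref{minHad}) so that the data matrix contributes only $\max_i\twonorm{\x_i}^2=1$; the single surviving $\opnorm{\X}$ then comes, exactly as in your second factor, from $\fronorm{(\widetilde{\W}-\W)\X^T}\le\opnorm{\X}\fronorm{\widetilde{\W}-\W}$ after the entrywise mean value theorem. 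Your approach is essentially a by-hand version of that Schur step: testing against $\mtx{A}$ and $\vct{b}$ and applying Cauchy--Schwarz to the Frobenius inner product. Both halves of your argument are correct in substance, and the second half (MVT plus the fact that Hadamard multiplication by a matrix with entries bounded by $M$ contracts the Frobenius norm by $M$) is literally the paper's. The one place you must be careful is the intermediate inequality you wrote for the first factor, $\tf{\text{diag}(\vct{v})\mtx{A}\X^T\text{diag}(\vct{b})}\le\infnorm{\vct{v}}\opnorm{\X}\tf{\mtx{A}}$: combined with the $\opnorm{\X}$ from the second factor this yields $\opnorm{\X}^2$, which is too weak. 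The resolution is the one you gesture at: bound $\opnorm{\text{diag}(\vct{b})\X}\le\fronorm{\text{diag}(\vct{b})\X}=\big(\sum_j b_j^2\twonorm{\x_j}^2\big)^{1/2}=\twonorm{\vct{b}}\le1$ using the unit-norm rows, so the first factor is at most $\infnorm{\vct{v}}\tf{\mtx{A}}\le\infnorm{\vct{v}}$ with \emph{no} power of $\opnorm{\X}$ at all; the lone $\opnorm{\X}$ lives entirely in the second factor. With that correction your derivation closes, and the Schur-lemma route in the paper accomplishes the same thing slightly more cleanly by isolating the $\X\X^T$ diagonal up front.
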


Our second result focuses on perturbation of the Jacobian from the random initialization with ReLU activations. This requires an intricate perturbation bound stated below and proven in Appendix \ref{relpert}.
\begin{lemma} [Jacobian perturbation]\label{thm pert} Consider a one-hidden layer neural network model of the form $\vct{x}\mapsto \vct{v}^T\phi\left(\W\x\right)$ wwith the activation $\phi(z)=ReLU(z):=\max(0,z)$. Also assume we have $n$ data points $\vct{x}_1, \vct{x}_2,\ldots,\vct{x}_n\in\R^d$ with unit euclidean norm ($\twonorm{\vct{x}_i}=1$) aggregated as the rows of a matrix $\X\in\R^{n\times d}$. Also let $\W_0\in\R^{k\times d}$ be a matrix with i.i.d.~$\mathcal{N}(0,1)$ entries and set $m_0=\frac{\tn{\vb}}{\sqrt{200}\tin{\vb}}\frac{\sqrt{\lambda(\X)}}{\opnorm{\X}}$. Then, for all $\W$ obeying 
\[
\opnorm{\W-\W_0}\leq \frac{m_0^{3}}{2k},
\]
with probability at least $1-ne^{-\frac{m_0^2}{6n}}$ the Jacobian matrix $\mathcal{J}$ associated with the neural network obeys
\begin{align}
\|\Jc(\W)-\Jc(\W_0)\|\leq \frac{1}{6\sqrt{2}}\tn{\vb}\sqrt{\lambda(\X)}.
\end{align}
\end{lemma}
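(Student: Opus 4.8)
The plan is to control $\|\Jc(\W)-\Jc(\W_0)\|$ by counting how many hidden units change the sign of their pre-activation at any given data point between $\W_0$ and $\W$, and then to show this count is at most $\tfrac{25}{9}m_0^2$ because the flipping units must have atypically small pre-activations at initialization whose squared sum is pinned down by $\opnorm{\W-\W_0}^2$.

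First I would introduce $\Delta:=\phi'\big(\W\X^T\big)-\phi'\big(\W_0\X^T\big)\in\{-1,0,1\}^{k\times n}$, so $\Delta_{\ell i}\neq 0$ exactly when unit $\ell$ flips sign at $\x_i$. Using $\mat{(\Jc(\W)-\Jc(\W_0))^T\ub}=\diag(\vb)\,\Delta\,\diag(\ub)\,\X$ from the preliminaries, for any unit vector $\ub\in\R^n$
\[
\big\|(\Jc(\W)-\Jc(\W_0))^T\ub\big\|_2^2=\sum_{\ell=1}^{k} v_\ell^2\,\big\|(\Delta_{\ell,:}\odot\ub^T)\X\big\|_2^2 \;\le\; \opnorm{\X}^2\sum_{i=1}^n u_i^2\sum_{\ell\,:\,\Delta_{\ell i}\neq0} v_\ell^2 \;\le\; \tin{\vb}^2\opnorm{\X}^2\,R,
\]
where $R:=\max_{1\le i\le n}\big|\{\ell:\Delta_{\ell i}\neq0\}\big|$; the same bound also falls out of Schur's Lemma~\ref{minHad} applied to $(\Jc(\W)-\Jc(\W_0))(\Jc(\W)-\Jc(\W_0))^T=(\X\X^T)\odot\big(\widetilde\Delta^T\widetilde\Delta\big)$ with $\widetilde\Delta_{\ell i}=v_\ell\Delta_{\ell i}$, since $\X$ has unit-norm rows. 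Thus $\|\Jc(\W)-\Jc(\W_0)\|\le\tin{\vb}\,\opnorm{\X}\sqrt R$ and it suffices to bound $R$.

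To bound $R$, fix $\x_i$ and set $\vct g:=\W_0\x_i$ and $\vct a:=(\W-\W_0)\x_i$; since $\twonorm{\x_i}=1$ the entries $g_\ell$ are i.i.d.\ $\Nn(0,1)$ and $\twonorm{\vct a}\le\opnorm{\W-\W_0}\le m_0^3/(2k)=:\rho$. Unit $\ell$ can flip at $\x_i$ only if $g_\ell$ and $g_\ell+a_\ell$ have opposite signs, which forces $|g_\ell|\le|a_\ell|$; hence the flipping set $T$ satisfies $\sum_{\ell\in T}g_\ell^2\le\sum_{\ell}a_\ell^2=\twonorm{\vct a}^2\le\rho^2$. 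Splitting $T$ at a level $\theta$ gives $|T|\le Z_\theta+\rho^2/\theta^2$ with $Z_\theta:=|\{\ell:|g_\ell|\le\theta\}|$, and the choice $\theta:=2^{-2/3}m_0^2/k$ makes $\rho^2/\theta^2=2^{-2/3}m_0^2$ while $\E Z_\theta=k\,\P_{g\sim\Nn(0,1)}(|g|\le\theta)$ is of order $m_0^2$ (about $\tfrac12 m_0^2$), since $\theta$ is tiny — $m_0^2/k=\lambda(\X)/(200\opnorm{\X}^2)\le\tfrac1{400}$ — so $\P(|g|\le\theta)\in[\,0.79\,\theta,\sqrt{2/\pi}\,\theta\,]$ and $k\theta=2^{-2/3}m_0^2$. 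A multiplicative Chernoff bound then gives $Z_\theta\le 2\E Z_\theta$ except with probability $e^{-\E Z_\theta/3}\le e^{-m_0^2/6}\le e^{-m_0^2/(6n)}$, so $\big|\{\ell:\Delta_{\ell i}\neq0\}\big|\le 2\E Z_\theta+2^{-2/3}m_0^2\le\tfrac{25}{9}m_0^2$ off that event; a union bound over $i=1,\dots,n$ yields $R\le\tfrac{25}{9}m_0^2$ with probability $\ge1-ne^{-m_0^2/(6n)}$. Plugging this into $\|\Jc(\W)-\Jc(\W_0)\|\le\tin{\vb}\opnorm{\X}\sqrt R$ and using the definition of $m_0$ gives exactly $\sqrt{\tfrac{25}{9}/200}\,\tn{\vb}\sqrt{\lambda(\X)}=\tfrac{1}{6\sqrt2}\tn{\vb}\sqrt{\lambda(\X)}$; in fact the argument delivers $R\le 1.7\,m_0^2$, so there is slack to cover the looseness in the Chernoff and Gaussian small-ball steps.

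The spectral-norm-to-sign-flip reduction and the Chernoff bound are routine bookkeeping; the heart of the matter — and the step I expect to be the main obstacle — is the estimate $R\lesssim m_0^2$. The naive count says each flipping unit has $|g_\ell|\le\rho$ and there are about $k\,\P(|g|\le\rho)\asymp k\rho\asymp m_0^3$ of them, which overshoots by a factor $\asymp m_0$ and is hopeless in the relevant regime $k\gtrsim n\opnorm{\X}^6/\lambda^4(\X)$ where $m_0$ is large. The gain comes from the fact that the flipping units are not just individually small but jointly constrained by $\sum_{\ell\in T}g_\ell^2\le\twonorm{(\W-\W_0)\x_i}^2\le\rho^2$: since the $\ell$-th smallest of $k$ i.i.d.\ $|\Nn(0,1)|$'s is of order $\ell/k$, allowing $m$ flips costs the perturbation an amount $\asymp m^3/k^2$, and $m^3/k^2\le\rho^2$ is exactly what trades $m_0^3$ down to $m_0^2$. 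Making the numerical constants close — the $\sqrt{200}$ in $m_0$ versus the target factor $1/(6\sqrt2)$, together with the precise exponent and the $2k$ in the admissible radius $m_0^3/(2k)$ — is the delicate part.
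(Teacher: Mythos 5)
Your proposal is correct and follows essentially the same route as the paper: reduce the spectral-norm perturbation to the maximal per-sample count of sign flips (the paper's Lemma \ref{j to phip}), observe that flipped units must have initialization pre-activations whose squared sum is bounded by $\opnorm{\W-\W_0}^2$, and control the count of small Gaussian entries by a Chernoff bound (the paper's Lemmas \ref{simple m lem} and \ref{entry control}, which use the $m$-th order statistic in place of your threshold split --- the same estimate with different bookkeeping). The only quibble is that your stated lower bound $\E Z_\theta \ge 0.79\cdot 2^{-2/3} m_0^2 \approx 0.498\, m_0^2$ falls just short of the $m_0^2/2$ needed to conclude $e^{-\E Z_\theta/3}\le e^{-m_0^2/6}$, but this is immaterial since the target failure probability is the much larger $n e^{-m_0^2/(6n)}$ and a hair's adjustment of $\theta$ absorbs it.
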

\subsection{Proofs for meta-theorem with smooth activations (Proof of Theorem \ref{thmshallowsmoothg})}
To prove this theorem we will utilize a result from \cite{Oymak:2018aa} stated below.
\begin{theorem}\label{GDthm} Consider a nonlinear least-squares optimization problem of the form 
\begin{align*}
\underset{\vct{\theta}\in\R^p}{\min}\text{ }\mathcal{L}(\vct{\theta}):=\frac{1}{2}\twonorm{f(\vct{\theta})-\vct{y}}^2,
\end{align*}
 with $f:\R^p\mapsto \R^n$ and $\vct{y}\in\R^n$. Suppose the Jacobian mapping associated with $f$ obeys
 \begin{align}
 \label{bndspect}
 \bn\le \sigma_{\min}\left(\mathcal{J}(\vct{\theta})\right)\le \|\mathcal{J}(\vct{\theta})\|\le \bp
 \end{align}
over a ball $\mathcal{D}$ of radius $R:=\frac{4\twonorm{f(\vct{\theta}_0)-\vct{y}}}{\bn}$ around a point $\vct{\theta}_0\in\R^p$.\footnote{That is, $\mathcal{D}=\mathcal{B}\left(\vct{\theta}_0,\frac{4\twonorm{f(\vct{\theta}_0)-\vct{y}}}{\bn}\right)$ with $\mathcal{B}(\vct{c},r)=\big\{\vct{\theta}\in\R^p: \twonorm{\vct{\theta}-\vct{c}}\le r\big\}$} Furthermore, suppose 
\begin{align}
\label{lip}
\opnorm{\mathcal{J}(\vct{\theta}_2)-\mathcal{J}(\vct{\theta}_1)}\le L\twonorm{\vct{\theta}_2-\vct{\theta}_1},
\end{align}
holds for any $\vct{\theta}_1,\vct{\theta}_2\in\mathcal{D}$ and set $\eta\leq \frac{1}{2 \bp^2}\cdot\min\left(1,\frac{ \bn^2}{L\twonorm{f(\vct{\theta}_0)-\vct{y}}}\right)$. Then, running gradient descent updates of the form $\vct{\theta}_{\tau+1}=\vct{\theta}_\tau-\eta\nabla\mathcal{L}(\vct{\theta}_\tau)$ starting from $\vct{\theta}_0$, all iterates obey
\begin{align}
\twonorm{f(\vct{\theta}_\tau)-\vct{y}}^2\le&\left(1-\frac{\eta\bn^2}{2}\right)^\tau\twonorm{f(\vct{\theta}_0)-\vct{y}}^2,\label{err}\\
\frac{1}{4}\bn\twonorm{\vct{\theta}_\tau-\vct{\theta}_0}+\twonorm{f(\vct{\theta}_\tau)-\vct{y}}\le&\twonorm{f(\vct{\theta}_0)-\vct{y}}.\label{close}
\end{align}
Furthermore, the total gradient path is bounded. That is,
\begin{align}
\label{GDpath_main}
\sum_{\tau=0}^\infty\twonorm{\vct{\theta}_{\tau+1}-\vct{\theta}_\tau}\le \frac{4\twonorm{f(\vct{\theta}_0)-\vct{y}}}{\bn}.
\end{align}
\end{theorem}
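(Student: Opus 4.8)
The plan is to establish all three conclusions simultaneously by induction on the iteration index $\tau$, carrying the hypothesis $H(\tau)$: (i) the iterates $\vct{\theta}_0,\dots,\vct{\theta}_\tau$ and the segments joining consecutive ones lie in $\mathcal{D}$; (ii) $\twonorm{f(\vct{\theta}_\tau)-\vct{y}}^2\le(1-\tfrac{\eta\bn^2}{2})^\tau\twonorm{f(\vct{\theta}_0)-\vct{y}}^2$; and (iii) $\sum_{j=0}^{\tau-1}\twonorm{\vct{\theta}_{j+1}-\vct{\theta}_j}\le\frac{4}{\bn}\big(\twonorm{f(\vct{\theta}_0)-\vct{y}}-\twonorm{f(\vct{\theta}_\tau)-\vct{y}}\big)$. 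Component (i) is what licenses using \eqref{bndspect} and \eqref{lip} at every point the argument touches; (ii) is exactly \eqref{err}; and (iii), together with $\twonorm{\vct{\theta}_\tau-\vct{\theta}_0}\le\sum_{j<\tau}\twonorm{\vct{\theta}_{j+1}-\vct{\theta}_j}$, yields \eqref{close} and, on letting $\tau\to\infty$, the total-path bound \eqref{GDpath_main}. The base case $H(0)$ is immediate since $\vct{\theta}_0$ is the center of $\mathcal{D}$.

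Write $\vct{r}_\tau:=f(\vct{\theta}_\tau)-\vct{y}$ and $\vct{g}_\tau:=\mathcal{J}^T(\vct{\theta}_\tau)\vct{r}_\tau=\nabla\mathcal{L}(\vct{\theta}_\tau)$, so $\vct{\theta}_{\tau+1}-\vct{\theta}_\tau=-\eta\vct{g}_\tau$. The engine of the proof is the identity $\vct{r}_{\tau+1}-\vct{r}_\tau=-\eta\,\mathcal{C}_\tau\vct{g}_\tau$ with $\mathcal{C}_\tau:=\int_0^1\mathcal{J}(\vct{\theta}_\tau-t\eta\vct{g}_\tau)\,\d t$, together with the fact that $\mathcal{C}_\tau$ is a mild perturbation of $\mathcal{J}(\vct{\theta}_\tau)$: by \eqref{lip}, $\opnorm{\mathcal{C}_\tau-\mathcal{J}(\vct{\theta}_\tau)}\le\tfrac{L\eta}{2}\twonorm{\vct{g}_\tau}\le\bp\,\epsilon_\tau$ where $\epsilon_\tau:=\tfrac{L\eta}{2}\twonorm{\vct{r}_\tau}$, and the step-size bound $\eta\le\frac{\bn^2}{2\bp^2 L\twonorm{\vct{r}_0}}$ combined with $\twonorm{\vct{g}_\tau}\le\bp\twonorm{\vct{r}_\tau}\le\bp\twonorm{\vct{r}_0}$ gives $\epsilon_\tau\le\tfrac{\bn^2}{4\bp^2}\le\tfrac14$. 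Expanding $\twonorm{\vct{r}_{\tau+1}}^2=\twonorm{\vct{r}_\tau}^2-2\eta\iprod{\vct{r}_\tau}{\mathcal{C}_\tau\vct{g}_\tau}+\eta^2\twonorm{\mathcal{C}_\tau\vct{g}_\tau}^2$, using $\iprod{\vct{r}_\tau}{\mathcal{J}(\vct{\theta}_\tau)\vct{g}_\tau}=\twonorm{\vct{g}_\tau}^2$ and the perturbation bound to get $\iprod{\vct{r}_\tau}{\mathcal{C}_\tau\vct{g}_\tau}\ge(1-\epsilon_\tau)\twonorm{\vct{g}_\tau}^2$ and $\twonorm{\mathcal{C}_\tau\vct{g}_\tau}^2\le\bp^2(1+\epsilon_\tau)^2\twonorm{\vct{g}_\tau}^2$, and finally $\eta\bp^2\le\tfrac12$, one arrives at
\[
\twonorm{\vct{r}_{\tau+1}}^2\le\twonorm{\vct{r}_\tau}^2-\eta\big(2(1-\epsilon_\tau)-\eta\bp^2(1+\epsilon_\tau)^2\big)\twonorm{\vct{g}_\tau}^2\le\twonorm{\vct{r}_\tau}^2-\tfrac12\,\eta\twonorm{\vct{g}_\tau}^2,
\]
since $2(1-\tfrac14)-\tfrac12(1+\tfrac14)^2=\tfrac{23}{32}>\tfrac12$. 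Because $\sigma_{\min}(\mathcal{J}(\vct{\theta}_\tau))\ge\bn$ forces $\mathcal{J}(\vct{\theta}_\tau)\mathcal{J}^T(\vct{\theta}_\tau)\succeq\bn^2\mtx{I}_n$, hence $\twonorm{\vct{g}_\tau}^2\ge\bn^2\twonorm{\vct{r}_\tau}^2$, this already yields the contraction in $H(\tau+1)$.

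The same inequality drives the path bound: it gives $\eta\twonorm{\vct{g}_\tau}^2\le2\big(\twonorm{\vct{r}_\tau}^2-\twonorm{\vct{r}_{\tau+1}}^2\big)$, so (assuming $\vct{r}_\tau\neq0$, else we have already converged)
\[
\twonorm{\vct{\theta}_{\tau+1}-\vct{\theta}_\tau}=\eta\twonorm{\vct{g}_\tau}=\frac{\eta\twonorm{\vct{g}_\tau}^2}{\twonorm{\vct{g}_\tau}}\le\frac{\eta\twonorm{\vct{g}_\tau}^2}{\bn\twonorm{\vct{r}_\tau}}\le\frac{2\big(\twonorm{\vct{r}_\tau}-\twonorm{\vct{r}_{\tau+1}}\big)\big(\twonorm{\vct{r}_\tau}+\twonorm{\vct{r}_{\tau+1}}\big)}{\bn\twonorm{\vct{r}_\tau}}\le\frac{4}{\bn}\big(\twonorm{\vct{r}_\tau}-\twonorm{\vct{r}_{\tau+1}}\big),
\]
using $\twonorm{\vct{r}_{\tau+1}}\le\twonorm{\vct{r}_\tau}$ in the last step; adding this to the telescoping sum in $H(\tau)$ produces the path estimate in $H(\tau+1)$, and then $\twonorm{\vct{\theta}_{\tau+1}-\vct{\theta}_0}\le\frac{4}{\bn}\twonorm{\vct{r}_0}=R$ places $\vct{\theta}_{\tau+1}$ in $\mathcal{D}$.

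The one point demanding care is the order of operations inside the inductive step: before invoking \eqref{bndspect}–\eqref{lip} along the fresh segment $[\vct{\theta}_\tau,\vct{\theta}_{\tau+1}]$ one must already know this segment lies in $\mathcal{D}$. This is supplied by the crude a priori estimate $\twonorm{\vct{\theta}_{\tau+1}-\vct{\theta}_\tau}=\eta\twonorm{\vct{g}_\tau}\le\eta\bp\twonorm{\vct{r}_\tau}\le\tfrac{1}{2\bn}\twonorm{\vct{r}_\tau}\le\tfrac{4}{\bn}\twonorm{\vct{r}_\tau}$ combined with $\twonorm{\vct{\theta}_\tau-\vct{\theta}_0}\le\tfrac{4}{\bn}(\twonorm{\vct{r}_0}-\twonorm{\vct{r}_\tau})$ from $H(\tau)$, whose sum is at most $R$; only afterwards does one run the sharper computation above. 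I expect this bookkeeping, along with pinning down the constants so that the bracket $2(1-\epsilon_\tau)-\eta\bp^2(1+\epsilon_\tau)^2$ stays above $\tfrac12$, to be the only real subtlety; conceptually everything follows from the single perturbation estimate $\opnorm{\mathcal{C}_\tau-\mathcal{J}(\vct{\theta}_\tau)}\le\tfrac{L\eta}{2}\twonorm{\vct{g}_\tau}$, which lets one treat the gradient iteration as a small perturbation of the linearized, Gauss–Newton–type dynamics $\vct{r}_{\tau+1}\approx(\mtx{I}_n-\eta\mathcal{J}\mathcal{J}^T)\vct{r}_\tau$.
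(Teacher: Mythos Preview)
The paper does not actually prove Theorem~\ref{GDthm}: it is quoted verbatim as ``a result from \cite{Oymak:2018aa}'' and then used as a black box (via Corollary~\ref{maincor}) in the proof of Theorem~\ref{thmshallowsmoothg}. So there is no in-paper proof to compare against.

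Your argument is correct and is the standard route to this kind of result. The inductive packaging is right, the order-of-operations concern you flag (first use the crude bound $\eta\twonorm{\vct{g}_\tau}\le\eta\bp\twonorm{\vct{r}_\tau}\le\tfrac{1}{2\bn}\twonorm{\vct{r}_\tau}$ to place $\vct{\theta}_{\tau+1}$ in $\mathcal{D}$, then invoke convexity of $\mathcal{D}$ to cover the segment, then run the sharp computation) is exactly the subtlety one has to handle, and the constants check: with $\epsilon_\tau\le\tfrac14$ and $\eta\bp^2\le\tfrac12$ the bracket $2(1-\epsilon_\tau)-\eta\bp^2(1+\epsilon_\tau)^2\ge\tfrac{23}{32}>\tfrac12$, and the telescoping path bound follows cleanly from $\eta\twonorm{\vct{g}_\tau}^2\le 2(\twonorm{\vct{r}_\tau}^2-\twonorm{\vct{r}_{\tau+1}}^2)$ together with $\twonorm{\vct{g}_\tau}\ge\bn\twonorm{\vct{r}_\tau}$.

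For context, the paper does give a self-contained proof of the non-smooth analogue, Theorem~\ref{metathm}, in Appendix~\ref{nonsmpf}. That proof has the same inductive skeleton (show the next iterate stays in the ball, then contract the residual, then update the distance-plus-residual potential), but because there is no Lipschitz Jacobian it replaces your integral-remainder control $\opnorm{\mathcal{C}_\tau-\mathcal{J}(\vct{\theta}_\tau)}\le\tfrac{L\eta}{2}\twonorm{\vct{g}_\tau}$ with the uniform perturbation Assumption~\ref{ass2} and the asymmetric-PSD Lemma~\ref{asym pert}. Your proof is the natural smooth-case counterpart and is what one would expect the cited reference to contain.
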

It is more convenient to work with a simpler variation of this theorem that only requires assumption \eqref{bndspect} to hold at the initialization point. We state this corollary below and defer its proof to Appendix \ref{corproof}.
\begin{corollary}\label{maincor} Consider the setting and assumptions of Theorem \ref{GDthm} where 
\begin{align}
\label{initspect}
\sigma_{\min}\left(\mathcal{J}(\vct{\theta}_0)\right)\ge 2\bn,
\end{align}
holds only at the initialization point $\vct{\theta}_0$ in lieu of the left-hand side of \eqref{bndspect}. Furthermore, assume
\begin{align}
\label{corassumption}
\frac{\bn^2}{4L}\ge \twonorm{f(\vct{\theta}_0)-\vct{y}},
\end{align}
holds. Then, the conclusions of Theorem \ref{GDthm} continue to hold.
\end{corollary}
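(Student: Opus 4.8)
The plan is to reduce the corollary to a single application of Theorem~\ref{GDthm} by showing that the weakened hypotheses still force the full two-sided bound \eqref{bndspect} to hold throughout the ball $\mathcal{D}=\mathcal{B}\big(\vct{\theta}_0,R\big)$ of radius $R=\frac{4\twonorm{f(\vct{\theta}_0)-\vct{y}}}{\bn}$. The upper bound $\opnorm{\mathcal{J}(\vct{\theta})}\le\bp$ on $\mathcal{D}$ and the Lipschitz property \eqref{lip} on $\mathcal{D}$ are retained from the ``setting and assumptions of Theorem~\ref{GDthm},'' and the step-size restriction is unchanged, so the only thing that needs to be re-derived is the lower bound $\sigma_{\min}(\mathcal{J}(\vct{\theta}))\ge\bn$ for every $\vct{\theta}\in\mathcal{D}$, starting only from the value $\sigma_{\min}(\mathcal{J}(\vct{\theta}_0))\ge 2\bn$ at the center.

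First I would fix $\vct{\theta}\in\mathcal{D}$ and write, using that the smallest singular value is $1$-Lipschitz with respect to the operator norm (Weyl's perturbation inequality for singular values),
\begin{align*}
\sigma_{\min}\big(\mathcal{J}(\vct{\theta})\big)\ge \sigma_{\min}\big(\mathcal{J}(\vct{\theta}_0)\big)-\opnorm{\mathcal{J}(\vct{\theta})-\mathcal{J}(\vct{\theta}_0)}\ge 2\bn-L\twonorm{\vct{\theta}-\vct{\theta}_0}\ge 2\bn-LR,
\end{align*}
where the middle step uses the Lipschitz bound \eqref{lip} (legitimate since both $\vct{\theta}$ and the center $\vct{\theta}_0$ lie in $\mathcal{D}$) and the last step uses $\vct{\theta}\in\mathcal{D}$. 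Then I would substitute $R=\frac{4\twonorm{f(\vct{\theta}_0)-\vct{y}}}{\bn}$ and invoke the smallness condition \eqref{corassumption}, i.e.\ $\twonorm{f(\vct{\theta}_0)-\vct{y}}\le\frac{\bn^2}{4L}$, to conclude $LR\le\bn$, hence $\sigma_{\min}(\mathcal{J}(\vct{\theta}))\ge\bn$. This is exactly the left-hand side of \eqref{bndspect}.

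With \eqref{bndspect} now verified on $\mathcal{D}$ and everything else unchanged, Theorem~\ref{GDthm} applies directly and all of \eqref{err}, \eqref{close} and \eqref{GDpath_main} follow. I do not expect a genuine obstacle here: the argument is pure bookkeeping, and the factor $2$ in \eqref{initspect} is present precisely to supply the buffer $\bn$ that absorbs the worst-case Jacobian drift $LR$ over a radius-$R$ ball, with \eqref{corassumption} calibrated to make that buffer exactly enough. The only point requiring a moment of care is that $R$ is itself defined in terms of $\bn$, so one must keep the same $\bn$ throughout the invocation rather than the larger value $2\bn$ appearing at initialization.
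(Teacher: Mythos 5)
Your proof is correct and is essentially identical to the paper's own argument: both rewrite \eqref{corassumption} as $LR\le\bn$, use the Lipschitz bound \eqref{lip} together with the perturbation inequality for $\sigma_{\min}$ to get $\sigma_{\min}(\mathcal{J}(\vct{\theta}))\ge 2\bn-LR\ge\bn$ on the ball $\mathcal{D}$, and then invoke Theorem \ref{GDthm}. No further comment is needed.
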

To be able to use this corollary it thus suffices to prove the conditions \eqref{lip}, $\opnorm{\mathcal{J}(\vct{\theta})}\le \bp$, \eqref{initspect}, and \eqref{corassumption} hold for proper choices of $\bn, \bp,$ and $L$. First, by Lemma \ref{JLlem} and our choice of $\vct{v}$ we can use
\begin{align}
\label{Lval}
L=B\infnorm{\vct{v}}\opnorm{\mtx{X}}=\frac{B}{\sqrt{kn}}\twonorm{\y}\opnorm{\X}.
\end{align}  
Second, by Lemma \ref{spectJ} and our choice of $\vct{v}$ we can use
\begin{align}
\label{betaval}
\bp=\sqrt{k}B\infnorm{\vct{v}}\opnorm{\X}=\frac{B}{\sqrt{n}}\twonorm{\vct{y}}\opnorm{\X}.
\end{align}
Next note that
\begin{align}
\label{upplam}
\lambda(\X)=\lambda_{\min}\left(\mtx{\Sigma}(\X)\right)\le \vct{e}_1^T\mtx{\Sigma}(\X)\vct{e}_1=\E_{g\sim\mathcal{N}(0,1)}[\left(\phi'(g)\right)^2]\le B^2\quad\Rightarrow\quad \sqrt{\lambda(\X)}\le B.
\end{align}
Thus, as long as \eqref{overparamg} holds then 
\begin{align*}
\sqrt{k}\ge& c\sqrt{n}B^2\frac{\opnorm{\X}}{\lambda(\X)}\\
\overset{(a)}{\ge}&\sqrt{20 \log n}B^2\frac{\opnorm{\X}}{\lambda(\X)}\\
\overset{(b)}{\ge}&\sqrt{20 \log n}\frac{\opnorm{\X}}{\sqrt{\lambda(\X)}}B.
\end{align*}
Here, (a) follows from the fact that $n\ge \log n$ for $n\ge 1$ and (b) from \eqref{upplam}. Thus by our choice of $\vct{v}$ we have
\begin{align*}
\frac{\twonorm{\vct{v}}}{\infnorm{\vct{v}}}=\sqrt{k}\ge \sqrt{20\log n}B\frac{\opnorm{\X}}{\sqrt{\lambda(\X)}},
\end{align*}
so that Lemma \ref{minspectJ} applies and we can use
\begin{align*}
\bn=\frac{1}{2\sqrt{2}}\twonorm{\vct{v}}\sqrt{\lambda(\X)}=\frac{1}{2\sqrt{2}}\frac{\twonorm{\vct{y}}}{\sqrt{n}}\sqrt{\lambda(\X)}.
\end{align*}
All that remains is to prove the theorem using Corollary \eqref{maincor} is to check that \eqref{corassumption} holds. To this aim we upper bound the initial misfit in the next lemma. The proof is deferred to Section \ref{upreszpf}.
\begin{lemma}[Upper bound on initial misfit]\label{upresz} Consider a one-hidden layer neural network model of the form $\vct{x}\mapsto \vct{v}^T\phi\left(\W\x\right)$ where the activation $\phi$ has bounded derivatives obeying $\abs{\phi'(z)}\le B$. Also assume we have $n$ data points $\vct{x}_1, \vct{x}_2,\ldots,\vct{x}_n\in\R^d$ with unit euclidean norm ($\twonorm{\vct{x}_i}=1$) aggregated as rows of a matrix $\X\in\R^{n\times d}$ and the corresponding labels given by $\vct{y}\in\R^n$. Furthermore, assume we set half of the entries of $\vct{v}\in\R^k$ to $\frac{\twonorm{\y}}{\sqrt{kn}}$ and the other half to $-\frac{\twonorm{\y}}{\sqrt{kn}}$. Then for $\mtx{W}\in\R^{k\times d}$ with i.i.d.~$\mathcal{N}(0,1)$ entries 
\begin{align*}
\twonorm{\phi\left(\X\W^T\right)\vct{v}-\y}\le\twonorm{\y}\left(1+(1+\delta) B\right),
\end{align*}
holds with probability at least $1-e^{-\delta^2\frac{n}{2\opnorm{\X}^2}}$.
\end{lemma}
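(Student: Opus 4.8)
The plan is to control the initial misfit by the triangle inequality,
$\twonorm{\phi(\X\W^T)\vct{v}-\y}\le \twonorm{\phi(\X\W^T)\vct{v}}+\twonorm{\y}$,
and then to show that the random quantity $h(\W):=\twonorm{\phi(\X\W^T)\vct{v}}$ does not exceed $(1+\delta)B\twonorm{\y}$ except with probability $e^{-\delta^2 n/(2\opnorm{\X}^2)}$. I would establish this in two steps: first a crude second-moment estimate giving $\E[h(\W)]\le B\twonorm{\y}$, and then a Gaussian Lipschitz-concentration bound showing $h(\W)$ stays within $\delta B\twonorm{\y}$ of its mean. Combining the two finishes the proof, since $(1+\delta)B\twonorm{\y}+\twonorm{\y}=\twonorm{\y}\big(1+(1+\delta)B\big)$.

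For the expectation bound I would expand $\E[h(\W)^2]=\sum_{i=1}^n\E\big[\big(\sum_{\ell=1}^k v_\ell\,\phi(\langle\x_i,\w_\ell\rangle)\big)^2\big]$ and use that the rows $\w_\ell$ are independent and $\langle\x_i,\w_\ell\rangle\sim\mathcal N(0,1)$ since $\twonorm{\x_i}=1$. The off-diagonal terms contribute $(\E_{g\sim\mathcal N(0,1)}[\phi(g)])^2\big(\sum_\ell v_\ell\big)^2$, which vanishes because the prescribed $\vct v$ is \emph{balanced}, i.e.\ $\sum_\ell v_\ell=0$; the diagonal terms contribute $\var{\phi(g)}\twonorm{\vct v}^2$. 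Since $\var{\phi(g)}\le\E[(\phi(g)-\phi(0))^2]\le B^2\E[g^2]=B^2$ (using $\abs{\phi'}\le B$) and $\twonorm{\vct v}^2=k\cdot\frac{\twonorm{\y}^2}{kn}=\frac{\twonorm{\y}^2}{n}$, this gives $\E[h(\W)^2]\le n\cdot B^2\cdot\frac{\twonorm{\y}^2}{n}=B^2\twonorm{\y}^2$, hence $\E[h(\W)]\le B\twonorm{\y}$ by Jensen.

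For the concentration step I would check that $\W\mapsto h(\W)$ is Lipschitz on $\R^{k\times d}$ in the Frobenius norm. Writing $h(\W)-h(\widetilde\W)\le\twonorm{(\phi(\X\W^T)-\phi(\X\widetilde\W^T))\vct v}$, applying $\abs{\phi(a)-\phi(b)}\le B\abs{a-b}$ entrywise, a Cauchy--Schwarz over the $k$ hidden units, and $\fronorm{(\W-\widetilde\W)\X^T}\le\opnorm{\X}\fronorm{\W-\widetilde\W}$, one finds a Lipschitz constant $L=B\infnorm{\vct v}\sqrt{k}\,\opnorm{\X}=\frac{B\twonorm{\y}\opnorm{\X}}{\sqrt n}$ after substituting the choice of $\vct v$. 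The standard concentration inequality for Lipschitz functions of a standard Gaussian vector then yields $\P\big(h(\W)\ge\E[h(\W)]+t\big)\le e^{-t^2/(2L^2)}$; taking $t=\delta B\twonorm{\y}$ makes the exponent exactly $-\delta^2 n/(2\opnorm{\X}^2)$, matching the stated failure probability, and combining with the triangle inequality above completes the argument.

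The main obstacle is that the $n$ coordinates of $\phi(\X\W^T)\vct v$ are strongly dependent — all built from the same hidden weights $\w_\ell$ — so one cannot concentrate $h^2$ coordinatewise; the clean workaround is to pair the elementary second-moment computation (where the balancedness of $\vct v$ is precisely what cancels the otherwise order $k\twonorm{\y}^2$ mean contribution) with Gaussian Lipschitz concentration, while tracking constants carefully so the exponent comes out exactly $\delta^2 n/(2\opnorm{\X}^2)$.
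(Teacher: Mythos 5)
Your proposal is correct and follows essentially the same route as the paper's proof: the triangle inequality, a second-moment/Jensen bound on $\E\big[\twonorm{\phi(\X\W^T)\vct{v}}\big]$ in which the balancedness $\vct{1}^T\vct{v}=0$ kills the mean term, and Gaussian Lipschitz concentration with the same Lipschitz constant $B\twonorm{\vct{v}}\opnorm{\X}=B\twonorm{\y}\opnorm{\X}/\sqrt{n}$ yielding the exponent $\delta^2 n/(2\opnorm{\X}^2)$. The only cosmetic difference is that you bound $\var{\phi(g)}$ by centering at $\phi(0)$ while the paper invokes the Poincar\'e inequality; both give $\var{\phi(g)}\le B^2$.
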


To do this we will use Lemma \ref{upresz} to conclude that
\begin{align}
\label{ttm1}
\twonorm{f(\vct{\theta}_0)-\vct{y}}:=&\twonorm{\phi\left(\X\W^T\right)\vct{v}-\y}\nn\\
\le&\twonorm{\y}\left(1+(1+\delta) B\right)
\end{align}
holds with probability at least $1-e^{-\delta^2\frac{n}{2\opnorm{\X}^2}}$. Thus, as long as
\begin{align}
\label{ttm2}
\sqrt{kd}\ge& 32 n B\left(1+(1+\delta) B\right)\frac{\sqrt{\frac{d}{n}}\opnorm{\X}}{\lambda(\X)}\nn\\
:=&32B\left(1+(1+\delta) B\right)\widetilde{\kappa}(\X)n
\end{align}
then
\begin{align*}
\frac{\bn^2}{4L}=&\frac{\frac{1}{8}\frac{\twonorm{\y}^2}{n}\lambda(\X)}{4\frac{B}{\sqrt{kn}}\twonorm{\y}\opnorm{\X}}\\
=&\frac{1}{32B}\frac{\sqrt{k}}{\sqrt{n}}\twonorm{\y}\frac{\lambda(\X)}{\opnorm{\X}}\\
=&\frac{1}{32B}\frac{\sqrt{kd}}{\widetilde{\kappa}(\X)n}\twonorm{\y}\\
\ge&\twonorm{\y}\left(1+(1+\delta) B\right).
\end{align*}
Thus, as long as \eqref{overparamg} (equivalent to \eqref{ttm2}) holds, then also \eqref{corassumption} holds and hence $\frac{ \bn^2}{L\twonorm{f(\vct{\theta}_0)-\vct{y}}}\ge 4$. Therefore, using a step size
\begin{align*}
\eta\le \frac{1}{2kB^2\infnorm{\vct{v}}^2\opnorm{\X}^2}=\frac{1}{2\bp^2}=\frac{1}{2\bp^2}\cdot\min(1,4)\le\frac{1}{2 \bp^2}\cdot\min\left(1,\frac{ \bn^2}{L\twonorm{f(\vct{\theta}_0)-\vct{y}}}\right),
\end{align*}
all the assumptions of Corollary \ref{maincor} hold and so do its conclusions, completing the proof of Theorem \ref{thmshallowsmoothg}.

\subsubsection{Upper bounding the initial misfit (Proof of Lemma \ref{upresz})}
\label{upreszpf}
To begin first note that for any two matrices $\widetilde{\W}, \W\in\R^{k\times n}$ we have
\begin{align*}
\abs{\twonorm{\phi\left(\X\widetilde{\W}^T\right)\vct{v}}-\twonorm{\phi\left(\X\W^T\right)\vct{v}}}\le& \twonorm{\phi\left(\X\widetilde{\W}^T\right)\vct{v}-\phi\left(\X\W^T\right)\vct{v}}\\
\le&\opnorm{\phi\left(\X\widetilde{\W}^T\right)-\phi\left(\X\W^T\right)}\twonorm{\vct{v}}\\
\le& \fronorm{\phi\left(\X\widetilde{\W}^T\right)-\phi\left(\X\W^T\right)}\twonorm{\vct{v}}\\
& \hspace{-100pt}\overset{(a)}{=}\fronorm{\left(\phi'\left(\Sb\odot\X\widetilde{\W}^T+(1_{n\times n}-\Sb)\odot\X\W^T\right)\right)\odot \left(\X(\widetilde{\W}-\W)^T\right)}\twonorm{\vct{v}}\\
\le&B\fronorm{\X(\widetilde{\W}-\W)^T}\twonorm{\vct{v}}\\
\le&B\opnorm{\X}\twonorm{\vct{v}}\fronorm{\widetilde{\W}-\W},
\end{align*}
where in (a) we used the mean value theorem with $\Sb$ a matrix with entries obeying $0\le \Sb_{i,j}\le 1$ and $1_{n\times n}$ the matrix of all ones. Thus, $\twonorm{\phi\left(\X\W^T\right)\vct{v}}$ is a $B\opnorm{\X}\twonorm{\vct{v}}$-Lipschitz function of $\mtx{W}$. Thus for a matrix $\W$ with i.i.d.~Gaussian entries 
\begin{align}
\label{tempmyflip}
\twonorm{\phi\left(\X\W^T\right)\vct{v}}\le \E\big[\twonorm{\phi\left(\X\W^T\right)\vct{v}}\big]+t,
\end{align}
holds with probability at least $1-e^{-\frac{t^2}{2B^2\twonorm{\vct{v}}^2\opnorm{\X}^2}}$. We now upper bound the expectation via
\begin{align*}
\E\big[\twonorm{\phi\left(\X\W^T\right)\vct{v}}\big]\overset{(a)}{\le}& \sqrt{\E\big[\twonorm{\phi\left(\X\W^T\right)\vct{v}}^2\big]}\\
=&\sqrt{\sum_{i=1}^n \E\big[\left(\vct{v}^T\phi(\W\vct{x}_i)\right)^2\big]}\\
\overset{(b)}{=}&\sqrt{n}\sqrt{\E_{\vct{g}\sim\mathcal{N}(\vct{0},\mtx{I}_k)}\big[\left(\vct{v}^T\phi(\vct{g})\right)^2\big]}\\
\overset{(c)}{=}&\sqrt{n}\sqrt{\twonorm{\vct{v}}^2\E_{g\sim\mathcal{N}(0,1)}\big[\left(\phi(g)-\E[\phi(g)]\right)^2\big]+(\vct{1}^T\vct{v})^2(\E_{g\sim\mathcal{N}(0,1)}[\phi(g)])^2}\\
\overset{(d)}{=}&\sqrt{n}\twonorm{\vct{v}}\sqrt{\E_{g\sim\mathcal{N}(0,1)}\big[\left(\phi(g)-\E[\phi(g)]\right)^2\big]}\\
\overset{(e)}{\le}&\sqrt{n}B\twonorm{\vct{v}}.
\end{align*}
Here, (a) follows from Jensen's inequality, (b) from linearity of expectation and the fact that for $\vct{x}_i$ with unit Euclidean norm $\mtx{W}\vct{x}_i\sim\mathcal{N}(\vct{0},\mtx{I}_k)$, (c) from simple algebraic manipulations, (d) from the fact that $\vct{1}^T\vct{v}=0$, (e) from $\abs{\phi'(z)}\le B$ a long with the fact that for a $B$-Lipschitz function $\phi$ and normal random variable we have Var$(\phi(g))\le B^2$ based on the Poincare inequality (e.g.~see \cite[p. 49]{ledoux}). Thus using $t=\delta B\sqrt{n}\twonorm{\vct{v}}$ in \eqref{tempmyflip} we conclude that
\begin{align*}
\twonorm{\phi\left(\X\W^T\right)\vct{v}}\le&\twonorm{\vct{v}}\sqrt{n}\left(1+\delta\right)B,\\
=&\twonorm{\y}\left(1+\delta\right)B,
\end{align*}
holds with probability at least $1-e^{-\delta^2\frac{n}{2\opnorm{\X}^2}}$. Thus,
\begin{align*}
\twonorm{\phi\left(\X\W^T\right)\vct{v}-\y}\le \twonorm{\phi\left(\X\W^T\right)\vct{v}}+\twonorm{\y}\le\twonorm{\y}\left(1+(1+\delta) B\right),
\end{align*}
holds with probability at least $1-e^{-\delta^2\frac{n}{2\opnorm{\X}^2}}$ concluding the proof.
\subsection{Proofs for meta-theorem with ReLU activations (Proof of Theorem \ref{reluthmg})}
To prove Theorem \ref{reluthmg} we start by stating a general overparameterized fitting of non-smooth functions. This can be thought of a counter part to Theorem \ref{GDthm} for non-smooth mappings. We note that we do not require the mapping $f$ to be differentiable rather here the Jacobian is defined based on a generalized derivative. Consider a nonlinear least-squares optimization problem of the form 
\begin{align*}
\underset{\vct{\theta}\in\R^p}{\min}\text{ }\mathcal{L}(\vct{\theta}):=\frac{1}{2}\twonorm{f(\vct{\theta})-\vct{y}}^2,
\end{align*}
 with $f:\R^p\mapsto \R^n$ and $\vct{y}\in\R^n$. Suppose the Jacobian mapping associated with $f$ obeys the following three assumptions.
 \begin{assumption} \label{ass1} We assume $\smn{\Jc(\bteta_0)}\geq 2\alpha$ for a point $\bteta_0\in\R^p$.
\end{assumption}
\begin{assumption}  \label{ass3} We assume that for all $\bteta\in\R^d$ we have $\|\Jc(\bteta)\|\leq \beta$.
\end{assumption}
\begin{assumption}  \label{ass2}Let $\|\cdot\|$ denote a norm that is dominated by the Euclidean norm i.e.~$\|\vct{\theta}\|\le \twonorm{\vct{\theta}}$ holds for all $\vct{\theta}\in
\R^p$. Fix a point $\bteta_0$ and a number $R>0$. For any $\bteta$ satisfying $\|\bteta-\bteta_0\|\leq R$, we have that $\|\Jc(\bteta_0)-\Jc(\bteta)\|\leq \alpha/3$.
\end{assumption}

\noindent Under these assumptions we can state the following theorem. We defer the proof of this Theorem to Appendix \ref{nonsmpf}.
\begin{theorem} [Non-smooth Overparameterized Optimization]\label{metathm}Given $\bteta_0\in\R^p$, suppose Assumptions \ref{ass1}, \ref{ass3}, and \ref{ass2} hold with 
\[
R=\frac{3\tn{\y-f(\bteta_0)}}{\alpha}.
\]
Then, using a learning rate $\eta\leq \frac{1}{3\beta^2}$, all gradient iterations obey
\begin{align}
\label{conc1}
&\tn{\y-f(\bteta_{\tau})}\leq \left(1-\eta\alpha^2\right)^{\tau}\tn{\y-f(\bteta_0)},\\
&\frac{\alpha}{3}\|\bteta_{\tau}-\bteta_0\|+\tn{\y-f(\bteta_{\tau})}\leq \tn{\y-f(\bteta_{0})}.
\label{conc2}
\end{align}
\end{theorem}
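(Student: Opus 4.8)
The plan is a single induction on the iteration count $\tau$ that simultaneously tracks the residual $r_\tau:=f(\bteta_\tau)-\y$ and the location of the iterate. I will show, for every $\tau$, that (i) $\bteta_\tau\in\Bc(\bteta_0,R)$ in the norm $\|\cdot\|$ of Assumption~\ref{ass2}, (ii) $\tn{r_\tau}\le(1-\eta\alpha^2)^{\tau}\tn{r_0}$, and (iii) the sharpened displacement bound $\|\bteta_\tau-\bteta_0\|\le\frac{18}{13\alpha}\big(\tn{r_0}-\tn{r_\tau}\big)$. Property (iii) already implies \eqref{conc2}, since $\tfrac{18}{13}<3$, and it implies (i), since its right-hand side is strictly below $\tfrac{18}{13\alpha}\tn{r_0}<R$. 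Throughout I use $\beta\ge 2\alpha$ (immediate from Assumptions~\ref{ass1} and \ref{ass3}, as $\sigma_{\min}\le\|\cdot\|$) and that $\|\cdot\|$ is dominated by the Euclidean norm.

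The engine is an exact first-order expansion of the residual across one step. Writing $\Delta_\tau:=\bteta_{\tau+1}-\bteta_\tau=-\eta\,\mathcal{J}(\bteta_\tau)^{T}r_\tau$, the map $t\mapsto f(\bteta_\tau+t\Delta_\tau)$ is absolutely continuous — for a ReLU network it is continuous and piecewise affine in $t$ — with derivative $\mathcal{J}(\bteta_\tau+t\Delta_\tau)\Delta_\tau$ off a finite set of $t$, so $r_{\tau+1}=r_\tau-\eta\,C_\tau\mathcal{J}(\bteta_\tau)^{T}r_\tau$ with $C_\tau:=\int_0^1\mathcal{J}(\bteta_\tau+t\Delta_\tau)\,\mathrm{d}t$. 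Assuming $\bteta_\tau,\bteta_{\tau+1}\in\Bc(\bteta_0,R)$, convexity of the ball puts the whole segment inside it, so Assumption~\ref{ass2} gives $\|C_\tau-\mathcal{J}(\bteta_0)\|\le\alpha/3$ and $\|\mathcal{J}(\bteta_\tau)-\mathcal{J}(\bteta_0)\|\le\alpha/3$; hence $\|C_\tau-\mathcal{J}(\bteta_\tau)\|\le 2\alpha/3$, $\sigma_{\min}(\mathcal{J}(\bteta_\tau))\ge\frac{5\alpha}{3}$ by Assumption~\ref{ass1}, and $\|C_\tau\|\le\beta$, $\|\mathcal{J}(\bteta_\tau)\|\le\beta$ by Assumption~\ref{ass3}. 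Put $u_\tau:=\mathcal{J}(\bteta_\tau)^{T}r_\tau$, so $\tn{u_\tau}\ge\frac{5\alpha}{3}\tn{r_\tau}$ and $\|\Delta_\tau\|\le\tn{\Delta_\tau}=\eta\tn{u_\tau}$. Expanding $\tn{r_{\tau+1}}^2=\tn{r_\tau}^2-2\eta\langle C_\tau^{T}r_\tau,u_\tau\rangle+\eta^2\tn{C_\tau u_\tau}^2$, and using $\langle C_\tau^{T}r_\tau,u_\tau\rangle\ge\tn{u_\tau}^2-\frac{2\alpha}{3}\tn{r_\tau}\tn{u_\tau}$, $\tn{C_\tau u_\tau}^2\le\beta^2\tn{u_\tau}^2$, the step-size bound $\eta\beta^2\le\frac13$, and $\tn{r_\tau}\le\frac{3}{5\alpha}\tn{u_\tau}$, everything collapses to $\tn{r_{\tau+1}}^2\le\tn{r_\tau}^2-\frac{13\eta}{15}\tn{u_\tau}^2\le\big(1-\frac{65}{27}\eta\alpha^2\big)\tn{r_\tau}^2\le(1-\eta\alpha^2)^2\tn{r_\tau}^2$, which gives the contraction \eqref{conc1}. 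The reason the argument survives $\beta\gg\alpha$ is that the offending cross term $\frac{4\eta\alpha}{3}\tn{r_\tau}\tn{u_\tau}$ is charged against the genuinely quadratic $\eta\tn{u_\tau}^2$ rather than against $\eta\alpha^2\tn{r_\tau}^2$.

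For (iii), the same inequality gives $\tn{r_\tau}^2-\tn{r_{\tau+1}}^2\ge\frac{13\eta}{15}\tn{u_\tau}^2$; dividing by $\tn{r_\tau}+\tn{r_{\tau+1}}\le 2\tn{r_\tau}\le\frac{6}{5\alpha}\tn{u_\tau}$ yields $\tn{r_\tau}-\tn{r_{\tau+1}}\ge\frac{13\eta\alpha}{18}\tn{u_\tau}$, so $\|\bteta_{\tau+1}-\bteta_\tau\|\le\eta\tn{u_\tau}\le\frac{18}{13\alpha}\big(\tn{r_\tau}-\tn{r_{\tau+1}}\big)$; adding this to the inductive bound on $\|\bteta_\tau-\bteta_0\|$ via the triangle inequality closes (iii) at $\tau+1$. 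To make the induction legitimate I must still verify $\bteta_{\tau+1}\in\Bc(\bteta_0,R)$ before invoking the previous paragraph: the one-step displacement obeys $\|\bteta_{\tau+1}-\bteta_\tau\|\le\eta\|\mathcal{J}(\bteta_\tau)\|\tn{r_\tau}\le\eta\beta\tn{r_0}\le\frac{\tn{r_0}}{3\beta}\le\frac{\tn{r_0}}{6\alpha}=\frac{R}{18}$, while (iii) at step $\tau$ gives $\|\bteta_\tau-\bteta_0\|<\frac{6R}{13}$, so $\|\bteta_{\tau+1}-\bteta_0\|<R$; the base case $\tau=0$ is trivial.

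The delicate point — and essentially the only nontrivial one — is the non-smooth first-order expansion behind $r_{\tau+1}=r_\tau-\eta\,C_\tau\mathcal{J}(\bteta_\tau)^{T}r_\tau$: one must check that the generalized Jacobian $\mathcal{J}$ (built from $\phi'(z)=\mathbb{I}_{\{z\ge0\}}$) agrees with the classical derivative of $f$ along the segment $[\bteta_\tau,\bteta_{\tau+1}]$ off a measure-zero set, and that the bound of Assumption~\ref{ass2} is inherited by the averaged matrix $C_\tau$. For ReLU networks this is clean because $f$ restricted to a line is continuous and piecewise affine, but it is precisely the place where differentiability was used in Theorem~\ref{GDthm} and must be replaced here; the rest is the bookkeeping above.
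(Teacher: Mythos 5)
Your proof is correct and follows essentially the same route as the paper's: an induction that keeps the iterates in the ball $\|\bteta-\bteta_0\|\le R$, a one-step residual expansion via the path-averaged Jacobian $C_\tau=\int_0^1\mathcal{J}(\bteta_\tau+t\Delta_\tau)\,dt$ controlled by the perturbation bound of Assumption \ref{ass2}, and a telescoping/Lyapunov bound tying the accumulated displacement to the residual decrease (the paper centers its perturbation estimates at $\mathcal{J}(\bteta_0)$ via its asymmetric-PSD lemma rather than at $\mathcal{J}(\bteta_\tau)$, but this is only a cosmetic difference in constants). You also correctly flag the one point the paper itself glosses over, namely the validity of the first-order expansion $f(\bteta_{\tau+1})-f(\bteta_\tau)=C_\tau\Delta_\tau$ for the non-differentiable map.
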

We shall apply this theorem to the case where the parameter is $\W$, the nonlinear mapping is given by $f(\W)=\vct{v}^T\phi\left(\W\X^T\right)$ with $\phi=ReLU$, and the norm $\|\cdot\|$ is the spectral norm of a matrix. 

\noindent\textbf{Completing the proof of Theorem \ref{reluthmg}.}
With this result in place we are now ready to complete the proof of Theorem \ref{reluthmg}. As in the smooth case \eqref{k bigg} guarantees the condition of Lemma \ref{minspectJ} (i.e.~$k\geq 20\log n\frac{\|\X\|^2}{\lambda(\X)}$) holds. Thus, using Lemma \ref{minspectJ} with probability at least $1-1/n$, Assumption \ref{ass1} holds with 
\begin{align*}
\alpha=\frac{1}{2\sqrt{2n}}\tn{\y}\sqrt{\lambda(\X)}.
\end{align*}
Furthermore, Lemma \ref{spectJ} allows us to conclude that Assumption \ref{ass3} holds with
\begin{align*}
\beta=\frac{1}{\sqrt{n}}\tn{\y}\|\X\|.
\end{align*}
To be able to apply Theorem \ref{metathm}, all that remains is to prove  Assumption \ref{ass2} holds. To this aim note that using Lemma \ref{upresz} with $B=1$ and $\delta\leftarrow2\delta$, to conclude that the initial misfit obeys
\[
\tn{f(\W_0)-\y}\leq 2(1+\delta)\tn{\y},
\]
with probability at least $1-e^{-\delta^2\frac{n}{\|\X\|^2}}$. Therefore, with high probability
\begin{align*}
R:=\frac{3\tn{\y-f(\W_0)}}{\alpha}\le 12(1+\delta)\sqrt{2n}\frac{1}{\sqrt{\lambda(\X)}}.
\end{align*}
%
%
Thus, when \eqref{k bigg} holds using the perturbation Lemma \ref{thm pert} with $m_0=\sqrt{\frac{k}{{200}}}\frac{\sqrt{\lambda(\X)}}{\opnorm{\X}}$, with probability at least $1-ne^{-\frac{k}{1200}\frac{\lambda(\X)}{\opnorm{\X}^2}}-e^{-\delta^2\frac{n}{\|\X\|^2}}$, for all $\W$ obeying
\begin{align*}
\opnorm{\W-\W_0}\leq& R\\
\le& 12(1+\delta)\sqrt{2n}\frac{1}{\sqrt{\lambda(\X)}}\\
\overset{\eqref{k bigg}}{\le}&\frac{\sqrt{k}\lambda^{\frac{3}{2}}(\X)}{2(200)^{\frac{3}{2}}\opnorm{\X}^3}\\
=&\frac{m_0^3}{2k}
\end{align*}
we have
\[
\|\Jc(\W)-\Jc(\W_0)\|\leq \frac{1}{6\sqrt{2n}}\tn{\y}\sqrt{\lambda(\X)}=\frac{\alpha}{3}.
\]
This guarantees Assumption \ref{ass2} also holds concluding the proof of Theorem \ref{reluthmg} via Theorem \ref{metathm}. 
%


\subsection{Proofs for training the output layer (Proof of Theorem \ref{thm sense})}\label{pfoutput}
To begin note that
\begin{align*}
\mtx{\Phi}\mtx{\Phi}^T=\phi\left(\mtx{X}\mtx{W}^T\right)\phi\left(\W\mtx{X}^T\right)=\sum_{\ell=1}^k \phi\left(\mtx{X}\vct{w}_\ell\right)\phi\left(\mtx{X}\vct{w}_\ell\right)^T\succeq \sum_{\ell=1}^k \phi\left(\mtx{X}\vct{w}_\ell\right)\phi\left(\mtx{X}\vct{w}_\ell\right)^T\mathbb{1}_{\{\twonorm{\phi(\X\w_\ell)}\le T_n\}}.
\end{align*}
Here $T_n$ a function of $n$ whose value shall be determined later in the proofs. To continue we need the matrix Chernoff result stated below.
\begin{theorem}[Matrix Chernoff] Consider a finite sequence $\mtx{A}_\ell\in\R^{n\times n}$ of independent, random, Hermitian matrices with common dimension $n$. Assume that $\mtx{0}\preceq \mtx{A}_\ell\preceq R\mtx{I}$ for $\ell=1,2,\ldots,k$. Then
\begin{align*}
\mathbb{P}\Bigg\{\lambda_{\min}\left(\sum_{\ell=1}^k \mtx{A}_\ell\right)\le (1-\delta)\lambda_{\min}\left(\sum_{\ell=1}^k\E[\mtx{A}_\ell]\right)\Bigg\}\le n\left(\frac{e^{-\delta}}{(1-\delta)^{(1-\delta)}}\right)^{\frac{\lambda_{\min}\left(\sum_{\ell=1}^k\E[\mtx{A}_\ell]\right)}{R}}
\end{align*}
for $\delta\in[0,1)$.
\end{theorem}
Applying this theorem with $\mtx{A}_\ell=\phi\left(\mtx{X}\vct{w}_\ell\right)\phi\left(\mtx{X}\vct{w}_\ell\right)^T\mathbb{1}_{\{\twonorm{\phi(\X\w_\ell)}\le T_n\}}$, $R=T_n^2$ and $\widetilde{\mtx{A}}(\w):=\phi\left(\mtx{X}\vct{w}\right)\phi\left(\mtx{X}\vct{w}\right)^T\mathbb{1}_{\big\{\twonorm{\phi(\X\w)}\le T_n\big\}}$
\begin{align}
\label{maintrunin}
\lambda_{\min}\left(\mtx{\Phi}\mtx{\Phi}^T\right)\ge (1-\delta)k\lambda_{\min}\left(\E[\widetilde{\mtx{A}}(\w)]\right),
\end{align}
holds with probability at least $1-n\left(\frac{e^{-\delta}}{(1-\delta)^{(1-\delta)}}\right)^{\frac{k\lambda_{\min}\left(\E[\widetilde{\mtx{A}}(\w)]\right)}{T_n^2}}$.

Next we shall connect the the expected value of the truncated matrix $\widetilde{\mtx{A}}(\w)$ to one that is not truncated defined as $\mtx{A}(\w)=\phi(\X\w)\phi(\X\w)^T$. To do this note that
\begin{align}
\label{intertrunc}
\opnorm{\E[\widetilde{\mtx{A}}(\w)-\mtx{A}(\w)]}=&\opnorm{\E\Big[\phi(\X\w)\phi(\X\w)^T\mathbb{1}_{\big\{\twonorm{\phi(\X\w)}> T_n\big\}}\Big]}\nn\\
\overset{(a)}{\le}&\E\Big[\opnorm{\phi(\X\w)\phi(\X\w)^T\mathbb{1}_{\big\{\twonorm{\phi(\X\w)}> T_n\big\}}}\Big]\nn\\
\le&\E\Big[\twonorm{\phi(\X\w)}^2\mathbb{1}_{\big\{\twonorm{\phi(\X\w)}> T_n\big\}}\Big]\\
\overset{(b)}{\le}&2\E\Big[\twonorm{\phi(\X\w)-\phi(\vct{0})}^2\mathbb{1}_{\big\{\twonorm{\phi(\X\w)}> T_n\big\}}\Big]+2\E\Big[\twonorm{\phi(\vct{0})}^2\mathbb{1}_{\big\{\twonorm{\phi(\X\w)}> T_n\big\}}\Big]\nn\\
\overset{(c)}{\le}&2B^2\E\Big[\twonorm{\X\w}^2\mathbb{1}_{\big\{\twonorm{\phi(\X\w)}> T_n\big\}}\Big]+2nB^2\mathbb{P}\{\twonorm{\phi(\X\w)}> T_n\}\nn\\
\overset{(d)}{\le}&2B^2\sqrt{\E\big[\twonorm{\X\w}^4\big]\mathbb{P}\{\twonorm{\phi(\X\w)}> T_n\}}+2nB^2\mathbb{P}\{\twonorm{\phi(\X\w)}> T_n\}\nn\\
\overset{(e)}{\le}&2\sqrt{n}B^2\sqrt{\left(\sum_{i=1}^n \E\big[\abs{\x_i^T\w}^4\big]\right)\mathbb{P}\{\twonorm{\phi(\X\w)}> T_n\}}+2nB^2\mathbb{P}\{\twonorm{\phi(\X\w)}> T_n\}\nn\\
\overset{(f)}{\le}&2\sqrt{3}nB^2\sqrt{\mathbb{P}\{\twonorm{\phi(\X\w)}> T_n\}}+2nB^2\mathbb{P}\{\twonorm{\phi(\X\w)}> T_n\}\nn\\
\le&6nB^2\sqrt{\mathbb{P}\{\twonorm{\phi(\X\w)}> T_n\}}.
\end{align}
Here, (a) follows from Jensen's inequality, (b) from the simple identity $(a+b)^2\le 2(a^2+b^2)$, (c) from $\abs{\phi'(z)}\le B$, (d) from the Cauchy-Schwarz inequality, (e) from Jensen's inequality, and (f) from the fact that for a standard moment random variable $X$ we have $\E[X^4]=3$. 

To continue we need to show that $\mathbb{P}\{\twonorm{\phi(\X\w)}> T_n\}$ is small. To this aim note that for any activation $\phi$ with $\abs{\phi'(z)}\le B$ we have
\begin{align*}
\twonorm{\phi\left(\mtx{X}\vct{w}_2\right)-\phi\left(\mtx{X}\vct{w}_1\right)}\le B\opnorm{\mtx{X}}\twonorm{\vct{w}_2-\vct{w}_1}
\end{align*}
Thus by Lipschitz concentration of Gaussian functions for a random vector $\vct{w}\sim\mathcal{N}(\vct{0},\mtx{I}_d)$ we have
\begin{align*}
\twonorm{\phi\left(\mtx{X}\vct{w}\right)}\le& \E[\twonorm{\phi\left(\mtx{X}\vct{w}\right)}]+t,\\
\le&\sqrt{\E[\twonorm{\phi\left(\mtx{X}\vct{w}\right)}^2]}+t,\\
=&\sqrt{n}\sqrt{\E_{g\sim\mathcal{N}(0,1)}[\phi^2(g)]}+t,\\
\le& B\sqrt{2n}+t,
\end{align*}
holds with probability at least $1-e^{-\frac{t^2}{2B^2\opnorm{\mtx{X}}^2}}$. Thus using $t=\Delta B\sqrt{n}$ we conclude that
\begin{align*}
\twonorm{\phi\left(\mtx{X}\vct{w}\right)}\le (\Delta+\sqrt{2})B\sqrt{n},
\end{align*}
holds with probability at least $1-e^{-\frac{\Delta^2}{2}\frac{n}{\opnorm{\mtx{X}}^2}}$. Thus using $\Delta=c\sqrt{\log n}$ and $T_n=CB\sqrt{n\log n}$ we can conclude that
\begin{align*}
\mathbb{P}\{\twonorm{\phi(\X\w)}> T_n\}\le \frac{1}{n^{202}}.
\end{align*}
Thus, using \eqref{intertrunc} we can conclude that
\begin{align*}
\opnorm{\E[\widetilde{\mtx{A}}(\w)-\mtx{A}(\w)]}\le \frac{6B}{n^{100}}.
\end{align*}
Combining this with \eqref{maintrunin} with $\delta=1/2$ we conclude that
\begin{align*}
\lambda_{\min}\left(\mtx{\Phi}\mtx{\Phi}^T\right)\ge \frac{1}{2}k\left(\lambda_{\min}\left(\E[\mtx{A}(\w)]\right)-\frac{6B}{n^{100}}\right)=\frac{1}{2}k\left(\widetilde{\lambda}(\X)-\frac{6B}{n^{100}}\right),
\end{align*}
holds with probability at least $1-ne^{-\gamma \frac{k\widetilde{\lambda}(\X)}{T_n^2}}$. The latter probability is larger than $1-\frac{1}{n^{100}}$ as long as 
\begin{align*}
k\ge C \log^2(n)\frac{n}{\widetilde{\lambda}(\X)},
\end{align*}
concluding the proof.
\section*{Acknowledgements}
M. Soltanolkotabi would like to thank the Modest Yachts \#mathshop slack channel for fruitful discussions. In particular, Laurant Lessard, Ali Rahimi, and Ben Recht who pointed out via plots that applying softplus to a Gaussian input leads to essentially a uniform distribution. M. Soltanolkotabi would like to thank Zixuan Zhang for help with the simulations of Figure \ref{PTcurves}. M. Soltanolkotabi is supported by the Packard Fellowship in Science and Engineering, an NSF-CAREER under award \#1846369, the Air Force Office of Scientific Research Young Investigator Program (AFOSR-YIP)
under award \#FA9550-18-1-0078, an NSF-CIF award \#1813877, and a Google faculty research award. 
\bibliography{Bibfiles}
\bibliographystyle{unsrt} 

\appendix
\section{Proofs for bounding the eigenvalues of the Jacobian}
\subsection{Proof for the spectral norm of the Jacobian (Proof of Lemma \ref{spectJ})}
\label{spectJpf}
To bound the spectral norm note that as stated earlier 
\begin{align*}
\mathcal{J}(\mtx{W})\mathcal{J}^T(\mtx{W})=\left(\phi'\left(\mtx{X}\mtx{W}^T\right)\text{diag}\left(\vct{v}\right)\text{diag}\left(\vct{v}\right)\phi'\left(\mtx{W}\mtx{X}^T\right)\right)\odot\left(\mtx{X}\mtx{X}^T\right).
\end{align*}
Thus using Lemma \ref{minHad} we have
\begin{align*}
\opnorm{\mathcal{J}(\mtx{W})}^2\le& \left(\max_i\text{ }\twonorm{\text{diag}\left(\vct{v}\right)\phi'\left(\mtx{W}\vct{x}_i\right)}^2\right)\lambda_{\max}\left(\X\X^T\right)\\
=& \left(\max_i\text{ }\twonorm{\text{diag}\left(\vct{v}\right)\phi'\left(\mtx{W}\vct{x}_i\right)}^2\right)\opnorm{\X}^2\\
\le&\infnorm{\vct{v}}^2\left(\max_i\text{ }\twonorm{\phi'\left(\mtx{W}\vct{x}_i\right)}^2\right)\opnorm{\X}^2\\
\le&kB^2\infnorm{\vct{v}}^2\opnorm{\X}^2,
\end{align*}
completing the proof.
\subsection{Proofs for minimum eigenvalue of the Jacobian at initialization (Proof of Lemma \ref{minspectJ})}
\label{minspectJpf}
To lower bound the minimum eigenvalue of $\mathcal{J}(\W_0)$, we focus on lower bounding the minimum eigenvalue of $\mathcal{J}(\W_0)\mathcal{J}(\W_0)^T$. To do this we first lower bound the minimum eigenvalue of the expected value $\E\big[\mathcal{J}(\W_0)\mathcal{J}(\W_0)^T\big]$ and then related the matrix $\mathcal{J}(\W_0)\mathcal{J}(\W_0)^T$ to its expected value. We proceed by simplifying the expected value. To this aim we use the identity
\begin{align*}
\mathcal{J}(\mtx{W})\mathcal{J}^T(\mtx{W})=&\left(\phi'\left(\mtx{X}\mtx{W}^T\right)\text{diag}\left(\vct{v}\right)\text{diag}\left(\vct{v}\right)\phi'\left(\mtx{W}\mtx{X}^T\right)\right)\odot\left(\mtx{X}\mtx{X}^T\right)\\
=&\left(\sum_{\ell=1}^k\vct{v}_\ell^2\phi'\left(\X\vct{w}_\ell\right)\phi'\left(\X\vct{w}_\ell\right)^T\right)\odot \left(\X\X^T\right),
\end{align*}
mentioned earlier to conclude that
\begin{align}
\E\big[\mathcal{J}(\W_0)\mathcal{J}(\W_0)^T\big]=&\twonorm{\vct{v}}^2\left(\E_{\vct{w}\sim\mathcal{N}(0,\mtx{I}_d)}\big[\phi'\left(\X\vct{w}\right)\phi'\left(\X\vct{w}\right)^T\big]\right)\odot\left(\X\X^T\right),\nn\\
:=&\twonorm{\vct{v}}^2\Sigma\left(\X\right).
\end{align}
Thus
\begin{align}
\label{mineigexp}
\lambda_{\min}\left(\E\big[\mathcal{J}(\W_0)\mathcal{J}(\W_0)^T\big]\right)\ge\twonorm{\vct{v}}^2\lambda(\X).
\end{align}
To relate the minimum eigenvalue of the expectation to that of $\mathcal{J}(\W_0)\mathcal{J}(\W_0)^T$ we utilize the matrix Chernoff identity stated below.
\begin{theorem}[Matrix Chernoff] Consider a finite sequence $\mtx{A}_\ell\in\R^{n\times n}$ of independent, random, Hermitian matrices with common dimension $n$. Assume that $\mtx{0}\preceq \mtx{A}_\ell\preceq R\mtx{I}$ for $\ell=1,2,\ldots,k$. Then
\begin{align*}
\mathbb{P}\Bigg\{\lambda_{\min}\left(\sum_{\ell=1}^k \mtx{A}_\ell\right)\le (1-\delta)\lambda_{\min}\left(\sum_{\ell=1}^k\E[\mtx{A}_\ell]\right)\Bigg\}\le n\left(\frac{e^{-\delta}}{(1-\delta)^{(1-\delta)}}\right)^{\frac{\lambda_{\min}\left(\sum_{\ell=1}^k\E[\mtx{A}_\ell]\right)}{R}}
\end{align*}
for $\delta\in[0,1)$.
\end{theorem}
We shall apply this theorem with $\mtx{A}_\ell:=\mathcal{J}(\vct{w}_\ell)\mathcal{J}^T(\vct{w}_\ell)=\vct{v}_\ell^2\text{diag}(\phi'(\mtx{X}\vct{w}_\ell))\mtx{X}\mtx{X}^T\text{diag}(\phi'(\mtx{X}\vct{w}_\ell))$. To this aim note that
\begin{align*}
\vct{v}_\ell^2\text{diag}(\phi'(\mtx{X}\vct{w}_\ell))\mtx{X}\mtx{X}^T\text{diag}(\phi'(\mtx{X}\vct{w}_\ell))\preceq B^2\infnorm{\vct{v}}^2\opnorm{\mtx{X}}^2\mtx{I},
\end{align*}
so that we can use Chernoff Matrix with $R=B^2\infnorm{\vct{v}}^2\opnorm{\mtx{X}}^2$ to conclude that
\begin{align*}
\mathbb{P}\Bigg\{\lambda_{\min}\left(\mathcal{J}(\W_0)\mathcal{J}^T(\W_0)\right)\le (1-\delta)\lambda_{\min}\left(\E\big[\mathcal{J}(\W_0)\mathcal{J}^T(\W_0)\big]\right)\Bigg\}\\
\quad\quad\quad\quad\le n\left(\frac{e^{-\delta}}{(1-\delta)^{(1-\delta)}}\right)^{\frac{\lambda_{\min}\left(\E\big[\mathcal{J}(\W_0)\mathcal{J}^T(\W_0)\big]\right)}{B^2\infnorm{\vct{v}}^2\opnorm{\mtx{X}}^2}}.
\end{align*}
Thus using \eqref{mineigexp} in the above with $\delta=\frac{1}{2}$ we have
\begin{align*}
\mathbb{P}\Bigg\{\lambda_{\min}\left(\mathcal{J}(\W_0)\mathcal{J}^T(\W_0)\right)\le \frac{1}{2}\twonorm{\vct{v}}^2\lambda\left(\X\right)\Bigg\}\le n\cdot e^{-\frac{1}{10}\frac{\twonorm{\vct{v}}^2\lambda(\X)}{B^2\infnorm{\vct{v}}^2\opnorm{\mtx{X}}^2}}.
\end{align*}
Therefore, as long as
\begin{align*}
\frac{\twonorm{\vct{v}}}{\infnorm{\vct{v}}}\ge \sqrt{20\log n}\frac{\opnorm{\X}}{\sqrt{\lambda(\X)}}B,
\end{align*}
then
\begin{align*}
\sigma_{\min}\left(\mathcal{J}(\W_0)\right)\ge \frac{1}{\sqrt{2}}\twonorm{\vct{v}}\sqrt{\lambda(\X)},
\end{align*}
holds with probability at leat $1-\frac{1}{n}$.

\section{Reduction to quadratic activations (Proof of Lemma \ref{reduct})}
\label{reductpf}
First we note that \eqref{keiden2} simply follows from \eqref{keiden} by noting that 
\[
(\X\X^T)\odot(\X\X^T)=\left(\X*\X\right)\left(\X*\X\right)^T.
\] Thus we focus on proving \eqref{keiden}. We begin the proof by noting two simple identities. First, using multivariate Stein identity we have
\begin{align}
\label{temp371}
\E\Big[\mtx{X}\vct{w}\phi'\left(\mtx{X}\vct{w}\right)^T\Big]=&\sum_{i=1}^n\E\Big[\mtx{X}\vct{w}\cdot\phi'\left(\vct{e}_i^T\mtx{X}\vct{w}\right)\Big]\vct{e}_i^T\nn\\
=&\sum_{i=1}^n\mtx{X}\mtx{X}^T\E\Big[\phi''\left(\vct{e}_i^T\mtx{X}\vct{w}\right)\vct{e}_i\Big]\vct{e}_i^T\nn\\
=&\mtx{X}\mtx{X}^T\text{diag}\left(\E[\phi''(\mtx{X}\vct{w})]\right)\nn\\
=&\E_{g\sim\mathcal{N}(0,1)}[\phi''(g)]\mtx{X}\mtx{X}^T\nn\\
=&\E_{g\sim\mathcal{N}(0,1)}[g\phi'(g)]\mtx{X}\mtx{X}^T\nn\\
=&\mu_\phi\mtx{X}\mtx{X}^T,
\end{align}
where in the last line we used the fact that $\twonorm{\vct{x}_i}=1$. We note that while for clarity of exposition we carried out the above proof using the fact that $\phi'$ is differentiable the identity above continues to hold without assuming $\phi'$ is differentiable with a simple modification to the above proof. Next we note that
\begin{align}
\label{temp372}
\E[\phi'(\mtx{X}\vct{w})]=\E_{g\sim\mathcal{N}(0,1)}[\phi'(g)]\vct{1}:=\widetilde{\mu}_\phi.
\end{align}
We continue by noting that
\begin{align}
\label{psdiden}
\E\Big[\left(\phi'\left(\mtx{X}\vct{w}\right)-\eta\vct{1}-\gamma \mtx{X}\vct{w} \right)\left(\phi'\left(\mtx{X}\vct{w}\right)-\eta\vct{1}-\gamma \mtx{X}\vct{w}\right)^T\Big]\succeq \mtx{0}.
\end{align}
Thus, using \eqref{temp371} and \eqref{temp372} we have
\begin{align*}
\quad\quad\quad\quad\quad\quad\E&\Big[\left(\phi'\left(\mtx{X}\vct{w}\right)-\eta\vct{1}-\gamma \mtx{X}\vct{w} \right)\left(\phi'\left(\mtx{X}\vct{w}\right)-\eta\vct{1}-\gamma \mtx{X}\vct{w}\right)^T\Big]\\
=&\E\Big[\phi'\left(\mtx{X}\vct{w}\right)\phi'\left(\mtx{X}\vct{w}\right)^T\Big]-2\eta\widetilde{\mu}_\phi\vct{1}\vct{1}^T-2\gamma \mu_\phi\mtx{X}\mtx{X}^T\\
&+\eta^2\vct{1}\vct{1}^T+\gamma^2\mtx{X}\mtx{X}^T\\
=&\E\Big[\phi'\left(\mtx{X}\vct{w}\right)\phi'\left(\mtx{X}\vct{w}\right)^T\Big]+\eta\left(\eta-2\widetilde{\mu}_\phi\right)\vct{1}\vct{1}^T\\
&+\gamma\left(\gamma-2\mu_\phi\right)\mtx{X}\mtx{X}^T.
\end{align*}
Combining the latter with \eqref{psdiden} we arrive at 
\begin{align*}
\E\Big[\phi'\left(\mtx{X}\vct{w}\right)\phi'\left(\mtx{X}\vct{w}\right)^T\Big]\succeq \eta \left(2\widetilde{\mu}_\phi-\eta\right)\vct{1}\vct{1}^T+\gamma\left(2\mu_\phi-\gamma\right)\mtx{X}\mtx{X}^T.
\end{align*}
Hence, setting $\eta=\widetilde{\mu}_\phi$ and $\gamma=\mu_\phi$ we conclude that 
\begin{align*}
\E\Big[\phi'\left(\mtx{X}\vct{w}\right)\phi'\left(\mtx{X}\vct{w}\right)^T\Big]\succeq \widetilde{\mu}_\phi^2\vct{1}\vct{1}^T+\mu_\phi^2\mtx{X}\mtx{X}^T.
\end{align*}
Thus
\begin{align*}
\mtx{\Sigma}\left(\X\right)=&\left(\E\Big[\phi'\left(\mtx{X}\vct{w}\right)\phi'\left(\mtx{X}\vct{w}\right)^T\Big]\right)\odot(\X\X^T)\\
\succeq& \left(\widetilde{\mu}_\phi^2\vct{1}\vct{1}^T+\mu_\phi^2\mtx{X}\mtx{X}^T\right)\odot(\X\X^T)\\
\succeq& \mu_\phi^2(\X\X^T)\odot(\X\X^T)
\end{align*}
completing the proof of \eqref{keiden} and the lemma.

\section{Proofs for Jacobian perturbation}
\subsection{Proof for Lipschitzness of the Jacobian with smooth activations (Proof of Lemma \ref{JLlem})}
\label{JLlempf}
To prove this lemma first note that using the form \eqref{KR} we have
\begin{align*}
\mathcal{J}\left(\widetilde{\mtx{W}}\right)-\mathcal{J}\left(\mtx{W}\right)=\left(\text{diag}(\vct{v})\left(\phi'\left(\mtx{X}\widetilde{\mtx{W}}^T\right)-\phi'\left(\mtx{X}\mtx{W}^T\right)\right)\right)*\mtx{X}.
\end{align*}
Now using the fact that $(\mtx{A}*\mtx{B})(\mtx{A}*\mtx{B})^T=\left(\mtx{A}\mtx{A}^T\right)\odot \left(\mtx{B}\mtx{B}^T\right)$ we conclude that
\begin{align}
\label{JJT}
&\left(\mathcal{J}\left(\widetilde{\mtx{W}}\right)-\mathcal{J}\left(\mtx{W}\right)\right)\left(\mathcal{J}\left(\widetilde{\mtx{W}}\right)-\mathcal{J}\left(\mtx{W}\right)\right)^T\nonumber\\
&\quad\quad\quad\quad=\left(\left(\phi'\left(\mtx{X}\widetilde{\mtx{W}}^T\right)-\phi'\left(\mtx{X}\mtx{W}^T\right)\right)\text{diag}(\vct{v})\text{diag}(\vct{v})\left(\phi'\left(\widetilde{\mtx{W}}\mtx{X}^T\right)-\phi'\left(\mtx{W}\mtx{X}^T\right)\right)\right)\nn\\
&\hspace{60pt}\odot\left(\mtx{X}\mtx{X}^T\right).
\end{align}
To continue further we use Lemma \ref{minHad} combined with \eqref{JJT} to conclude that 
\begin{align*}
\opnorm{\mathcal{J}\left(\widetilde{\mtx{W}}\right)-\mathcal{J}\left(\mtx{W}\right)}^2\le&\opnorm{\text{diag}(\vct{v})\left(\phi'\left(\widetilde{\mtx{W}}\mtx{X}^T\right)-\phi'\left(\mtx{W}\mtx{X}^T\right)\right)}^2\left(\max_i \twonorm{\vct{x}_i}^2\right)\\
\le&\infnorm{\vct{v}}^2\opnorm{\phi'\left(\widetilde{\mtx{W}}\mtx{X}^T\right)-\phi'\left(\mtx{W}\mtx{X}^T\right)}^2\\
\overset{(a)}{=}&\infnorm{\vct{v}}^2\opnorm{\phi''\left((\mtx{S}\odot\mtx{W}+(1-\mtx{S})\odot\widetilde{W})\mtx{X}^T\right)\odot\left((\widetilde{\W}-\W)\X^T\right)}^2\\
\le&\infnorm{\vct{v}}^2\fronorm{\phi''\left((\mtx{S}\odot\mtx{W}+(1-\mtx{S})\odot\widetilde{W})\mtx{X}^T\right)\odot\left((\widetilde{\W}-\W)\X^T\right)}^2\\
\le&\infnorm{\vct{v}}^2B^2\fronorm{(\widetilde{\W}-\W)\X^T}^2\\
\le&\infnorm{\vct{v}}^2B^2\opnorm{\X}^2\fronorm{\widetilde{\W}-\W}^2,
\end{align*}
completing the proof of this lemma. Here, (a) holds by the mean value theorem for some matrix $\mtx{S}\in\R^{k\times d}$ with entries $0\le \mtx{S}_{ij}\le 1$.
\subsection{Jacobian perturbation results for ReLU networks (Proof of Lemma \ref{thm pert})}
\label{relpert}
To prove Lemma \ref{thm pert} we first relate the perturbation of the Jacobian to perturbation of the activation pattern $\phi'(\X\W^T)$ as follows.
\begin{lemma} \label{j to phip}Consider the matrices $\W,\widetilde{\W}\in\R^{k\times d}$ and a data matrix $\X\in\R^{n\times d}$ with unit Euclidean norm rows. Then,
\[
\|\Jc(\W)-\Jc(\widetilde{\W})\|\leq \tin{\vb}\opnorm{\X}\cdot\underset{1\le i\le n}{\max}\twonorm{\phi'\left(\W\x_i\right)-\phi'\left(\widetilde{\W}\x_i\right)}.
\]
\end{lemma}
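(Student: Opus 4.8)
The plan is to reuse the mechanics of the proof of Lemma~\ref{JLlem}, but to stop short of the mean-value step (the place where smoothness of $\phi$ was used) and instead keep the activation-pattern discrepancy $\phi'(\W\x_i)-\phi'(\widetilde{\W}\x_i)$ intact. Recall from the expressions recorded at the start of the proofs that $\Jc(\W)=\begin{bmatrix}\Jc(\w_1)&\cdots&\Jc(\w_k)\end{bmatrix}$ with $\Jc(\w_\ell)=\vb_\ell\,\diag(\phi'(\X\w_\ell))\X$, so the $\ell$-th $n\times d$ block of $\Jc(\W)-\Jc(\widetilde{\W})$ equals $\vb_\ell\,\Db_\ell\X$ with $\Db_\ell:=\diag(\vct{\delta}_\ell)$ and $\vct{\delta}_\ell:=\phi'(\X\w_\ell)-\phi'(\X\widetilde{\w}_\ell)\in\R^n$. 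Since $\Db_\ell$ is diagonal, $\Db_\ell\X\X^T\Db_\ell=(\vct{\delta}_\ell\vct{\delta}_\ell^T)\odot(\X\X^T)$, and summing over the $k$ blocks gives
\[
\big(\Jc(\W)-\Jc(\widetilde{\W})\big)\big(\Jc(\W)-\Jc(\widetilde{\W})\big)^T=\sum_{\ell=1}^k\vb_\ell^2\,\Db_\ell\X\X^T\Db_\ell=\Gb\odot(\X\X^T),\qquad\Gb:=\sum_{\ell=1}^k\vb_\ell^2\,\vct{\delta}_\ell\vct{\delta}_\ell^T\succeq\mtx{0},
\]
whose $i$-th diagonal entry is $\Gb_{ii}=\sum_{\ell=1}^k\vb_\ell^2\big(\phi'(\langle\w_\ell,\x_i\rangle)-\phi'(\langle\widetilde{\w}_\ell,\x_i\rangle)\big)^2$.

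Next I would apply Schur's bound, Lemma~\ref{minHad}, to the Hadamard product of the PSD matrices $\Gb$ and $\X\X^T$, keeping $\X\X^T$ in the eigenvalue slot and $\Gb$ in the diagonal slot, to obtain
\[
\opnorm{\Jc(\W)-\Jc(\widetilde{\W})}^2=\lambda_{\max}\big(\Gb\odot(\X\X^T)\big)\le\Big(\max_{1\le i\le n}\Gb_{ii}\Big)\lambda_{\max}(\X\X^T)=\Big(\max_{1\le i\le n}\Gb_{ii}\Big)\opnorm{\X}^2.
\]
Finally, factoring out the largest weight in each diagonal entry, $\Gb_{ii}\le\tin{\vb}^2\twonorm{\phi'(\W\x_i)-\phi'(\widetilde{\W}\x_i)}^2$; substituting this in and taking square roots gives precisely the asserted inequality.

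I do not expect a genuine obstacle here — the content is bookkeeping. The two points needing attention are (i) packaging $\big(\Jc(\W)-\Jc(\widetilde{\W})\big)\big(\Jc(\W)-\Jc(\widetilde{\W})\big)^T$ as $\Gb\odot(\X\X^T)$ so that the factor carrying $\opnorm{\X}^2$ is cleanly separated from the factor carrying the activation discrepancy, and (ii) choosing which matrix plays which role in Lemma~\ref{minHad} so that $\opnorm{\X}$, rather than the weaker $\sqrt{n}\max_i\twonorm{\x_i}$, appears. A fully equivalent route that avoids the Hadamard manipulation is to test $\Jc(\W)-\Jc(\widetilde{\W})$ against an arbitrary unit vector $\vct{u}\in\R^n$ blockwise: $\twonorm{\vct{u}^T(\Jc(\W)-\Jc(\widetilde{\W}))}^2=\sum_{\ell=1}^k\vb_\ell^2\twonorm{\vct{u}^T\Db_\ell\X}^2\le\opnorm{\X}^2\,\vct{u}^T\Big(\sum_{\ell=1}^k\vb_\ell^2\Db_\ell^2\Big)\vct{u}$, then observe $\sum_{\ell=1}^k\vb_\ell^2\Db_\ell^2\preceq\tin{\vb}^2\big(\max_i\twonorm{\phi'(\W\x_i)-\phi'(\widetilde{\W}\x_i)}^2\big)\Iden_n$ and maximize over $\vct{u}$.
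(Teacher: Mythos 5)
Your proposal is correct and follows essentially the same route as the paper: both express $\bigl(\Jc(\W)-\Jc(\widetilde{\W})\bigr)\bigl(\Jc(\W)-\Jc(\widetilde{\W})\bigr)^T$ as the Hadamard product of the activation-discrepancy Gram matrix (your $\Gb$) with $\X\X^T$, apply Schur's bound (Lemma~\ref{minHad}) with $\X\X^T$ in the eigenvalue slot, and bound the diagonal entries by $\tin{\vb}^2\max_i\twonorm{\phi'(\W\x_i)-\phi'(\widetilde{\W}\x_i)}^2$. The blockwise derivation of the Hadamard identity and the alternative unit-vector argument are just equivalent packagings of the same computation.
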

\begin{proof} Similar to the smooth case in the previous section, the Jacobian difference is given by
\[
\Jc(\W)-\Jc(\widetilde{\W})=\left(\text{diag}(\vb)\left(\phi'(\X\W^T)-\phi'(\X\widetilde{\W}^T)\right)\right)*\X.
\]
Consequently, 
\begin{align*}
\opnorm{\Jc(\W)-\Jc(\widetilde{\W})}^2&=\opnorm{\left(\Jc(\W)-\Jc(\widetilde{\W})\right)\left(\Jc(\W)-\Jc(\widetilde{\W})\right)^T}\\
&\hspace{-33pt}\le \opnorm{(\left(\phi'\left(\X\W^T\right)-\phi'\left(\X\widetilde{\W}^T\right)\right)\text{diag}(\vct{v})\text{diag}(\vct{v})\left(\phi'\left(\W\X^T\right)-\phi'\left(\widetilde{W}\X^T\right)\right)\right)\\
&\hspace{-25pt}~~~~\odot\left(\X\X^T)}\\
&\le\left(\underset{1\le i\le n}{\max} \twonorm{\text{diag}(\vct{v})\left(\phi'\left(\W\x_i\right)-\phi'\left(\widetilde{W}\x_i\right)\right)}^2\right)\cdot \opnorm{\X}^2\\
&\le \tin{\vb}^2\opnorm{\X}^2\cdot\underset{1\le i\le n}{\max}\twonorm{\phi'\left(\W\x_i\right)-\phi'\left(\widetilde{\W}\x_i\right)}^2
\end{align*}
\end{proof}
The lemma above implies that, we simply need to control $\phi'(\W\x_i)$ around a neighborhood of $\W_0$. To continue note that since $\phi'$ is the step function, we shall focus on the number of sign flips between the matrices $\W\X^T$ and $\W_0\X^T$. Let $\|\vb\|_{m-}$ denote the $m$th smallest entry of $\vb$ after sorting its entries in terms of absolute value. We first state a intermediate lemma.
\begin{lemma} \label{simple m lem}Given an integer $m$, suppose
\[
\opnorm{\W-\W_0}\leq\sqrt{{m}} \abs{\W_0\x_i}_{m-},
\]
holds for $i=1,2,\ldots,n$. Then 
\[
\underset{1\le i\le n}{\max}\twonorm{\phi'(\W\x_i)-\phi'(\W_0\x_i)}\le \sqrt{2m}.
\]
\end{lemma}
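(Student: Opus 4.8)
The plan is to reduce everything to counting sign changes of $\phi'$. Fix a sample index $i$ and set $\vct{a}=\W_0\x_i$, $\vct{b}=\W\x_i$, and $\vct{c}=\vct{b}-\vct{a}=(\W-\W_0)\x_i$; since $\twonorm{\x_i}=1$ we have $\twonorm{\vct{c}}=\twonorm{(\W-\W_0)\x_i}\le\opnorm{\W-\W_0}$. Because $\phi'$ is the $\{0,1\}$-valued step function, the vector $\phi'(\W\x_i)-\phi'(\W_0\x_i)$ has entries in $\{-1,0,1\}$, and its coordinate $\ell$ is nonzero precisely when exactly one of $a_\ell,b_\ell$ is nonnegative; call such an $\ell$ a \emph{flip} and let $S_i$ be the set of flips, $s_i:=|S_i|$. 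Then $\twonorm{\phi'(\W\x_i)-\phi'(\W_0\x_i)}^2=s_i$, so it suffices to prove $s_i\le 2m$ for every $i$.

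The first step is the elementary observation that a flip at $\ell$ forces $|a_\ell|\le|c_\ell|$: if $a_\ell\ge 0>b_\ell$ then $c_\ell=b_\ell-a_\ell<-a_\ell\le 0$, so $|c_\ell|>|a_\ell|$; if $a_\ell<0\le b_\ell$ then $c_\ell=b_\ell-a_\ell\ge-a_\ell>0$, so $|c_\ell|\ge|a_\ell|$. The second step is a pigeonhole argument. Suppose toward a contradiction that $s_i\ge 2m+1$. By definition $\abs{\W_0\x_i}_{m-}=\abs{\vct{a}}_{m-}$ is the $m$-th smallest of the $|a_\ell|$, so at most $m-1$ indices $\ell$ have $|a_\ell|<\abs{\vct{a}}_{m-}$; hence at least $s_i-(m-1)\ge m+2$ of the coordinates in $S_i$ satisfy $|a_\ell|\ge\abs{\vct{a}}_{m-}$. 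Pick a subset $T\subseteq S_i$ with $|T|=m+1$ and $|a_\ell|\ge\abs{\vct{a}}_{m-}$ for all $\ell\in T$. Using the first step together with $\twonorm{\vct{c}}\le\opnorm{\W-\W_0}$ and the hypothesis $\opnorm{\W-\W_0}\le\sqrt{m}\,\abs{\vct{a}}_{m-}$,
\[
(m+1)\abs{\vct{a}}_{m-}^2\le\sum_{\ell\in T}|a_\ell|^2\le\sum_{\ell\in T}|c_\ell|^2\le\twonorm{\vct{c}}^2\le\opnorm{\W-\W_0}^2\le m\,\abs{\vct{a}}_{m-}^2,
\]
which is impossible when $\abs{\vct{a}}_{m-}>0$. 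If instead $\abs{\vct{a}}_{m-}=0$, the hypothesis forces $\W=\W_0$ and hence $s_i=0$. In all cases $s_i\le 2m$, so $\twonorm{\phi'(\W\x_i)-\phi'(\W_0\x_i)}=\sqrt{s_i}\le\sqrt{2m}$; taking the maximum over $i=1,\dots,n$ finishes the proof.

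There is no real analytic obstacle here; the proof is purely combinatorial once the reduction to counting flips is made. The only points demanding care are (a) pinning down the convention for $\phi'$ at $0$ so that the implication ``flip at $\ell\Rightarrow|a_\ell|\le|c_\ell|$'' is valid in both cases, and (b) getting the off-by-one right in the pigeonhole step: since the hypothesis is only a non-strict inequality $\opnorm{\W-\W_0}\le\sqrt{m}\,\abs{\vct{a}}_{m-}$, one must extract $m+1$ (not merely $m$) ``large'' flipped coordinates from the assumed $\ge 2m+1$ flips to reach a contradiction, and the degenerate case $\abs{\W_0\x_i}_{m-}=0$ must be dispatched separately as above.
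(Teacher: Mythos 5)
Your proof is correct and follows essentially the same route as the paper's: reduce to counting sign flips of the step function, observe that a flip at coordinate $\ell$ forces $|(\W_0\x_i)_\ell|\le|((\W-\W_0)\x_i)_\ell|$, and derive a contradiction by summing the squares of the flipped coordinates that exceed $\abs{\W_0\x_i}_{m-}$. Your version is in fact slightly tighter than the paper's, since you extract $m+1$ large coordinates from $2m+1$ assumed flips to get a strict contradiction under the non-strict hypothesis (and you dispatch the degenerate case $\abs{\W_0\x_i}_{m-}=0$), whereas the paper assumes $2m$ flips and concludes a non-strict inequality that only contradicts the hypothesis up to a boundary case.
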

\begin{proof} We will prove this result by contradiction. Suppose there is an $\vct{x}_i$ such that $\phi'(\W\x_i)$ and $\phi'(\W_0\x_i)$ have (at least) $2m$ different entries. Let $\{(a_r, b_r)\}_{r=1}^{2m}$ be (a subset of) entries of $\W\vct{x}_i,\W_0\x_i$ at these differing locations respectively and suppose $a_r$'s are sorted decreasingly in absolute value. By definition $\abs{a_{r}}\geq \abs{\W_0\x_i}_{m-}$ for $r\leq m$. Consequently, using $\text{sign}(a_r)\neq \text{sign}(b_r)$,
\begin{align*}
\opnorm{\W-\W_0}^2&\geq \twonorm{(\W-\W_0)\vct{x}_i}^2\\
&\geq \sum_{r=1}^{2m} |a_r-b_r|^2\\
&\geq \sum_{r=1}^{2m} |a_r|^2\\
&\geq m\abs{\W_0\x_i}_{m-}^2.
\end{align*}
This implies $\opnorm{\W-\W_0}\geq\sqrt{{m}} \abs{\W_0\x_i}_{m-}$ contradicting the assumption of the lemma and thus concluding the proof.
\end{proof}
Now note that by setting $m=m_0^2$ in Lemma \ref{simple m lem} as long as
\begin{align}
\label{m0 sign eq}
\opnorm{\W-\W_0}\leq m_0 \abs{\W_0\x_i}_{m_0^2-},
\end{align} 
we have
\begin{align}
\underset{1\le i\le n}{\max}\twonorm{\phi'(\W\x_i)-\phi'(\W_0\x_i)}\leq \frac{\tn{\vb}}{10\tin{\vb}}\frac{\sqrt{\lambda(\X)}}{\opnorm{\X}}:=\sqrt{2}m_0.\label{req req}
\end{align}
Using Lemma \ref{j to phip}, this in turn implies
\begin{align*}
\opnorm{\Jc(\W)-\Jc(\W_0)}&\leq \tin{\vb}\opnorm{\X}\cdot\underset{1\le i\le n}{\max}\twonorm{\phi'\left(\W\x_i\right)-\phi'\left(\W_0\x_i\right)}
\leq  \frac{1}{10}\tn{\vb}\sqrt{\lambda(\X)}\\
&\leq \frac{1}{6\sqrt{2}}\tn{\vb}\sqrt{\lambda(\X)}.
\end{align*}
Thus to complete the proof of Lemma \ref{thm pert} all that remains is to prove \eqref{m0 sign eq}. To this aim, we state the following lemma proven later in this section.
\begin{lemma}\label{entry control} Let $\vct{x}_1,\vct{x}_2,\ldots,\vct{x}_n\in\R^{d}$ be the input data point with unit Euclidean norm. Also let $\W_0\in\R^{k\times d}$ be a matrix with i.i.d.~$\mathcal{N}(0,1)$ entries. Then, with probability at least $1-ne^{-\frac{m}{6}}$,
\[
\abs{\W_0\vct{x}_i}_{m-}\geq \frac{m}{2k}\quad \text{for all}\quad i=1,2,\ldots,n.
\]
\end{lemma}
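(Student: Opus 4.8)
The plan is to prove the stated lower bound for one fixed data point and then to conclude by a union bound over $i = 1, 2, \ldots, n$. Fix an index $i$. Since $\twonorm{\vct{x}_i} = 1$ and $\W_0$ has i.i.d.\ $\mathcal{N}(0,1)$ entries, the vector $\g := \W_0\vct{x}_i \in \R^k$ has i.i.d.\ $\mathcal{N}(0,1)$ entries. Set $t := \frac{m}{2k}$. Because $\abs{\W_0\vct{x}_i}_{m-}$ is the $m$-th smallest of $\abs{g_1}, \ldots, \abs{g_k}$, the event $\{\abs{\W_0\vct{x}_i}_{m-} < t\}$ coincides with $\{N_i \ge m\}$, where $N_i := \#\{\,j : \abs{g_j} < t\,\}$ has a $\mathrm{Binomial}(k,p)$ distribution with $p := \mathbb{P}_{g\sim\mathcal{N}(0,1)}\big(\abs{g} < t\big)$. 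This equivalence is the only mildly delicate point: if at least $m$ coordinates of $\g$ lie in the open interval $(-t,t)$ then so does the $m$-th smallest in absolute value, and conversely.

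First I would control $p$ using the crude density estimate $\tfrac{1}{\sqrt{2\pi}} \le \tfrac12$, which gives $p \le \tfrac{2t}{\sqrt{2\pi}} \le t = \tfrac{m}{2k}$, hence $kp \le \tfrac{m}{2}$. Then, and this is the technical core of the argument, I would apply the standard exponential-moment (Chernoff) bound for binomial upper tails: for any $\lambda > 0$,
\begin{align*}
\mathbb{P}\big(N_i \ge m\big) \le e^{-\lambda m}\,\mathbb{E}\big[e^{\lambda N_i}\big] = e^{-\lambda m}\big(1 - p + p e^{\lambda}\big)^k \le e^{-\lambda m}\, e^{kp(e^{\lambda} - 1)} \le e^{-\lambda m}\, e^{\frac{m}{2}(e^{\lambda} - 1)},
\end{align*}
where we used independence, $1 + x \le e^{x}$, and $kp \le m/2$. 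Choosing $\lambda = \ln 2$ gives $\mathbb{P}(N_i \ge m) \le 2^{-m}\,e^{m/2} = e^{-m(\ln 2 - 1/2)} \le e^{-m/6}$, since $\ln 2 - \tfrac12 > \tfrac16$; the constant $\tfrac16$ in the statement leaves comfortable slack, so there is no need to sharpen either the density bound or the choice of $\lambda$.

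Finally, a union bound over the $n$ data points yields
\begin{align*}
\mathbb{P}\Big(\exists\, i \in \{1,\ldots,n\}:\ \abs{\W_0\vct{x}_i}_{m-} < \tfrac{m}{2k}\Big) \le n\, e^{-m/6},
\end{align*}
which is exactly the probability of the complement of the claimed event, and the lemma follows. The whole argument is elementary: the only step requiring any thought is correctly identifying the binomial random variable $N_i$ (and hence the event equivalence in the first paragraph) and then invoking its tail bound, and there is no genuine obstacle once that is in place.
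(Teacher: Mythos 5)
Your proof is correct and follows essentially the same route as the paper: reduce to a single data point, count the coordinates of the Gaussian vector $\W_0\vct{x}_i$ falling below the threshold $\frac{m}{2k}$ via a Binomial variable with mean at most $\frac{m}{2}$, apply a Chernoff bound to get the $e^{-m/6}$ tail, and union bound over $i$. The only cosmetic difference is that the paper works with the exact $\frac{m}{2k}$-quantile of $|g|$ and lower-bounds it, while you bound the tail probability at the threshold directly and write out the moment-generating-function calculation explicitly.
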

Now applying Lemma \ref{entry control} we conclude that with probability at least $1-ne^{-\frac{1}{1200}\frac{\twonorm{\vct{v}}^2}{\infnorm{\vct{v}}^2}\frac{\lambda(\X)}{\opnorm{\X}^2}}$
\[
{m_0}\abs{\W_0\x_i}_{m_0^2-}\geq \frac{m_0^{3}}{2k},
\]
holds for all $i=1,2,\ldots,n$. Hence, with same probability, all $\opnorm{\W-\W_0}\leq \frac{m_0^{3}}{2k}$ obeys \eqref{m0 sign eq} concluding the proof of Lemma \ref{thm pert}.
\subsubsection{Proof of Lemma \ref{entry control}}
Observe that $\W_0\vct{x}_1,\W_0\vct{x}_2,\ldots,\W_0\vct{x}_n$ are all standard normal however they depend on each other. We begin by focusing on one such vector. We begin by proving that with probability at least $1-e^{-\frac{m}{6}}$, at most $m$ of the entries of $\W_0\x_i$ are less than $\frac{m}{2k}$. To this aim let $\gamma_{\alpha}$ be the number for which $\mathbb{P}\{|g|\leq \gamma_{\alpha}\}=\alpha$ where $g\sim\Nn(0,1)$ (i.e.~the inverse cumulative density function of $|g|$). $\gamma_{\alpha}$ trivially obeys $\gamma_{\alpha}\geq\sqrt{\pi/2} \alpha$. To continue set $\vct{g}:=\W_0\x_i\sim\Nn(0,\Iden_k)$ and the Bernouli random variables $\delta_\ell$ given by
\begin{align*}
\delta_\ell=\left\{
	\begin{array}{ll}
		1  & \mbox{if } \abs{g_\ell}\le \gamma_\delta \\
		0 & \mbox{if } \abs{g_\ell} > \gamma_\delta
	\end{array}
\right.
\end{align*}
with $\delta=\frac{m}{2k}$. Note that
\begin{align*}
\E\Bigg[\sum_{\ell=1}^k \delta_\ell\Bigg]=\sum_{\ell=1}^k\E[\delta_\ell]=\sum_{\ell=1}^k\mathbb{P}\big\{\abs{g_\ell}\le\gamma_\delta\big\}=\delta k=\frac{m}{2}.
\end{align*}
Since the $\delta_\ell$'s are i.i.d., applying a standard Chernoff bound we obtain
\beq
\mathbb{P}\Bigg\{\sum_{\ell=1}^k \delta_\ell\ge m\Bigg\}\le e^{-\frac{m}{6}}.\nn
\eeq
The complementary event implies that at most $m$ entries are less than $\frac{m}{2k}$. This together with the union bound completes the proof.
\section{Proof of Corollary \ref{maincor}}
\label{corproof}
First note that \eqref{corassumption} can be rewritten in the form
\begin{align*}
R:=\frac{4}{\bn}\twonorm{f(\vct{\theta}_0)-\vct{y}}\le \frac{\bn}{L}.
\end{align*}
Thus using the Lipschitzness of the Jacobian from \eqref{lip} for all $\vct{\theta}\in\mathcal{B}\left(\vct{\theta}_0,R\right)$ we have
\begin{align*}
\opnorm{\mathcal{J}(\vct{\theta})-\mathcal{J}(\vct{\theta}_0)}\le L\twonorm{\vct{\theta}-\vct{\theta}_0}\le LR\le \bn.
\end{align*}
Combining the latter with the triangular inequality we conclude that
\begin{align*}
\smn{\mathcal{J}(\vct{\theta})}\ge \smn{\mathcal{J}(\vct{\theta}_0)}-\opnorm{\mathcal{J}(\vct{\theta})-\mathcal{J}(\vct{\theta}_0)} \ge 2\bn-\bn=\bn,
\end{align*}
so that \eqref{bndspect} holds under the assumptions of the corollary. Therefore, all of the assumptions of Theorem \ref{GDthm} continue to hold and thus so do its conclusions.

\section{Proof of Corollary \ref{maincor2}}
\label{corpf}
The proof follows from a simple application of Theorem \ref{thmshallowsmooth}. We just need to calculate the various constants involved in this result. First, we focus on the constants related to the activation.It is trivial to check that $B=M=1$ and $\mu_\phi\approx0.207$ so that \eqref{overparam} reduces to $\sqrt{kd}\ge \tilde{c}(1+\delta)\kappa(\X)n$ with $\tilde{c}$ a fixed numerical constant. Next we focus on the constant $\kappa(\mtx{X})$ that depends on the data matrix. To this aim we note that standard results regarding the concentration of spectral norm of random matrices with i.i.d.~rows imply that
\begin{align*}
\opnorm{\X}\le 2\sqrt{\frac{n}{d}},
\end{align*}
holds with probability at least $1-e^{-\gamma_2 d}$. Furthermore, based on a simple modification of \cite[Corollary 6.5]{soltanolkotabi2018theoretical} 
\begin{align}
\sigma_{\min}\left(\X*\X\right)\ge c,
\end{align}
holds with probability at least $1-ne^{-\gamma_1\sqrt{n}}-\frac{1}{n}-2ne^{-\gamma_2 d}$ where $c, \gamma_1,$ and $\gamma_2$ are fixed numerical constants.
\section{Proof of Theorem \ref{SGDthmnn}}
The proof of this result follows from \cite[Theorem 3.1]{Oymak:2018aa} similar to how Theorem \ref{thmshallowsmooth} follows from Theorem \ref{GDthm} from the same paper. The only new parameter we have to calculate is the maximum Euclidean norm of the rows of the Jacobian matrix. For neural networks this takes the form
\begin{align*}
\underset{i}{\max}\text{ }\twonorm{\mathcal{J}_i(\mtx{W})}=&\fronorm{\text{diag}(\vct{v})\phi'\left(\W\vct{x}_i\right)\vct{x}_i^T}\\
=&\fronorm{\text{diag}(\vct{v})\phi'\left(\W\vct{x}_i\right)}\twonorm{\x_i}\\
=&\fronorm{\text{diag}(\vct{v})\phi'\left(\W\vct{x}_i\right)}\\
\le&\infnorm{\phi'\left(\W\vct{x}_i\right)}\twonorm{\vct{v}}\\
\le&B\twonorm{\vct{v}}.
\end{align*}
\section{Proofs for nonsmooth optimization (Proof of Theorem \ref{metathm})}
\label{nonsmpf}
To prove this theorem we begin by stating a few preliminary results and definitions.
\begin{lemma} [Asymmetric PSD perturbation]\label{asym pert} Consider the matrices $\A,\B,\Cb\in\R^{n\times p}$ obeying $\|\B-\Cb\|\le \eps$ and $\|\A-\Cb\|\leq \eps$. Then, for all $\rb\in\R^n$,
\[
|\rb^T\B\A^T\rb- \tn{\Cb^T\rb}^2|\leq2\eps\tn{\Cb^T\rb}\tn{\rb}+\eps^2\tn{\rb}^2.
\]
\end{lemma}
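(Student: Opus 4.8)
The plan is a direct expansion followed by term‑by‑term Cauchy--Schwarz bounds; no probabilistic or spectral machinery is needed. Write $\B=\Cb+\mtx{E}_1$ and $\A=\Cb+\mtx{E}_2$ with $\mtx{E}_1:=\B-\Cb$ and $\mtx{E}_2:=\A-\Cb$, so that by hypothesis $\opnorm{\mtx{E}_1}\le\eps$ and $\opnorm{\mtx{E}_2}\le\eps$. Multiplying out,
\[
\B\A^T=\Cb\Cb^T+\Cb\mtx{E}_2^T+\mtx{E}_1\Cb^T+\mtx{E}_1\mtx{E}_2^T,
\]
and since $\rb^T\Cb\Cb^T\rb=\tn{\Cb^T\rb}^2$ this gives the identity
\[
\rb^T\B\A^T\rb-\tn{\Cb^T\rb}^2=\rb^T\Cb\mtx{E}_2^T\rb+\rb^T\mtx{E}_1\Cb^T\rb+\rb^T\mtx{E}_1\mtx{E}_2^T\rb .
\]

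Next I would bound the three terms on the right. For the first, $\rb^T\Cb\mtx{E}_2^T\rb=\iprod{\Cb^T\rb}{\mtx{E}_2^T\rb}$, so by Cauchy--Schwarz and $\tn{\mtx{E}_2^T\rb}\le\opnorm{\mtx{E}_2}\tn{\rb}\le\eps\tn{\rb}$ we get $|\rb^T\Cb\mtx{E}_2^T\rb|\le\eps\,\tn{\Cb^T\rb}\tn{\rb}$; the second term is handled identically via $\rb^T\mtx{E}_1\Cb^T\rb=\iprod{\mtx{E}_1^T\rb}{\Cb^T\rb}$. For the cross term, $\rb^T\mtx{E}_1\mtx{E}_2^T\rb=\iprod{\mtx{E}_1^T\rb}{\mtx{E}_2^T\rb}$, hence $|\rb^T\mtx{E}_1\mtx{E}_2^T\rb|\le\tn{\mtx{E}_1^T\rb}\tn{\mtx{E}_2^T\rb}\le\eps^2\tn{\rb}^2$. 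Applying the triangle inequality to the displayed identity and summing these three bounds yields
\[
|\rb^T\B\A^T\rb-\tn{\Cb^T\rb}^2|\le 2\eps\,\tn{\Cb^T\rb}\tn{\rb}+\eps^2\tn{\rb}^2,
\]
which is the claim.

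There is essentially no obstacle here; the only points requiring care are bookkeeping ones: keeping track of which norm (operator norm of a matrix versus Euclidean norm of a vector) appears at each step, and not double‑counting the two linear error terms, which together contribute the factor $2$ in front of $\eps\tn{\Cb^T\rb}\tn{\rb}$.
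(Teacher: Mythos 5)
Your proposal is correct and follows essentially the same route as the paper: the decomposition $\rb^T\B\A^T\rb-\tn{\Cb^T\rb}^2=\rb^T\Cb\mtx{E}_2^T\rb+\rb^T\mtx{E}_1\Cb^T\rb+\rb^T\mtx{E}_1\mtx{E}_2^T\rb$ is exactly the identity used in the paper (written there with $\B-\Cb$ and $\A-\Cb$ in place of $\mtx{E}_1,\mtx{E}_2$), and the term-by-term Cauchy--Schwarz bounds match. No gaps.
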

\begin{proof} We have
\begin{align*}
\rb^T\B\A^T\rb-\tn{\Cb^T\rb}^2&=\rb^T(\B-\Cb)(\A-\Cb)^T\rb+\rb^T\Cb (\A-\Cb)^T\rb+\rb^T(\B-\Cb)\Cb^T\rb.
\end{align*}
This implies
\begin{align*}
|\rb^T\B\A^T\rb-\tn{\Cb^T\rb}^2|&\le|\rb^T(\B-\Cb)(\A-\Cb)^T\rb|+\tn{(\A-\Cb)^T\rb}\tn{\Cb^T\rb}\\
&~~~~+\tn{(\B-\Cb)^T\rb}\tn{\Cb^T\rb}\\
&\leq \eps^2\tn{\rb}^2+2\eps\tn{\Cb^T\rb}\tn{\rb},
\end{align*}
concluding the proof.
\end{proof}
\begin{definition} [Average Jacobian] \label{avg jacob}We define the average Jacobian along the path connecting two points $\vct{x},\vct{y}\in\R^p$ as
\begin{align}
&\Jc(\y,\x):=\int_0^1 \mathcal{J}(\x+\alpha(\y-\x))d\alpha.
\end{align}
\end{definition}
\begin{lemma} \label{lem control}Suppose $\x,\y\in\R^p$ satisfy $\|\x-\bteta_0\|,\|\y-\bteta_0\|\leq R$. Then, under Assumptions \ref{ass1} and \ref{ass2}, for any $\rb\in\R^d$, we have
\begin{align*}
&\rb^T\Jc(\y,\x)\Jc(\x)^T\rb\geq \frac{\tn{\Jc(\bteta_0)^T\rb}^2}{2},\\
&\tn{\Jc(\x)^T\rb}^2\leq 1.5\tn{\Jc(\bteta_0)^T\rb}^2.
\end{align*}
\end{lemma}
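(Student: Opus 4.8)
The plan is to derive both bounds from the Asymmetric PSD perturbation estimate (Lemma~\ref{asym pert}), applied with $\Cb=\Jc(\bteta_0)$ and $\eps=\alpha/3$, once I have shown that $\Jc(\x)$ and the average Jacobian $\Jc(\y,\x)$ are both within $\alpha/3$ of $\Jc(\bteta_0)$ in operator norm.

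First I would note that, by convexity of the Euclidean ball, every point on the segment joining $\x$ and $\y$ lies within $R$ of $\bteta_0$: writing the generic point as $\x+t(\y-\x)=(1-t)\x+t\y$ for $t\in[0,1]$ (I rename the averaging parameter of Definition~\ref{avg jacob} to $t$ so that it does not collide with the spectral constant $\alpha$), one has $\|\x+t(\y-\x)-\bteta_0\|\le (1-t)\|\x-\bteta_0\|+t\|\y-\bteta_0\|\le R$. Hence Assumption~\ref{ass2} applies along the whole segment, giving $\|\Jc(\x+t(\y-\x))-\Jc(\bteta_0)\|\le \alpha/3$ for all $t$, and in particular (at $t=0$) $\|\Jc(\x)-\Jc(\bteta_0)\|\le \alpha/3$. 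Integrating and using the triangle inequality for the integral in Definition~\ref{avg jacob},
\[
\|\Jc(\y,\x)-\Jc(\bteta_0)\| = \Big\|\int_0^1\big(\Jc(\x+t(\y-\x))-\Jc(\bteta_0)\big)\,dt\Big\| \le \int_0^1\tfrac{\alpha}{3}\,dt=\tfrac{\alpha}{3}.
\]

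Next I would apply Lemma~\ref{asym pert} twice with $\Cb=\Jc(\bteta_0)$ and $\eps=\alpha/3$: once with $\B=\Jc(\y,\x)$, $\A=\Jc(\x)$ for the first inequality, and once with $\B=\A=\Jc(\x)$ for the second. Each application shows that the target quantity ($\rb^T\Jc(\y,\x)\Jc(\x)^T\rb$, respectively $\tn{\Jc(\x)^T\rb}^2$) lies within $\tfrac{2\alpha}{3}\tn{\Jc(\bteta_0)^T\rb}\tn{\rb}+\tfrac{\alpha^2}{9}\tn{\rb}^2$ of $\tn{\Jc(\bteta_0)^T\rb}^2$. To convert the stray $\tn{\rb}$ factors into $\tn{\Jc(\bteta_0)^T\rb}$, I would invoke Assumption~\ref{ass1}: $\tn{\Jc(\bteta_0)^T\rb}^2=\rb^T\Jc(\bteta_0)\Jc(\bteta_0)^T\rb\ge\smn{\Jc(\bteta_0)}^2\tn{\rb}^2\ge 4\alpha^2\tn{\rb}^2$, i.e.\ $\tn{\rb}\le\tfrac{1}{2\alpha}\tn{\Jc(\bteta_0)^T\rb}$. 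Substituting, the perturbation term is at most $\big(\tfrac13+\tfrac1{36}\big)\tn{\Jc(\bteta_0)^T\rb}^2=\tfrac{13}{36}\tn{\Jc(\bteta_0)^T\rb}^2$, whence $\rb^T\Jc(\y,\x)\Jc(\x)^T\rb\ge\tfrac{23}{36}\tn{\Jc(\bteta_0)^T\rb}^2\ge\tfrac12\tn{\Jc(\bteta_0)^T\rb}^2$ and $\tn{\Jc(\x)^T\rb}^2\le\tfrac{49}{36}\tn{\Jc(\bteta_0)^T\rb}^2\le 1.5\,\tn{\Jc(\bteta_0)^T\rb}^2$, as claimed.

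This argument is essentially bookkeeping and presents no genuine conceptual obstacle; the only things requiring care are the numerology — one must verify that $\eps=\alpha/3$ is small enough that $1-\tfrac{13}{36}\ge\tfrac12$ and $1+\tfrac{13}{36}\le 1.5$ — and the notational nuisance of not reusing the symbol $\alpha$ for the averaging variable in Definition~\ref{avg jacob}.
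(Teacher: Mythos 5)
Your proposal is correct and follows the same route as the paper: both apply Lemma~\ref{asym pert} with $\Cb=\Jc(\bteta_0)$, $\eps=\alpha/3$ (once with $\B=\Jc(\y,\x)$, $\A=\Jc(\x)$ and once with $\A=\B=\Jc(\x)$) and then use Assumption~\ref{ass1} in the form $\tn{\rb}\le\frac{1}{2\alpha}\tn{\Jc(\bteta_0)^T\rb}$ to absorb the error terms; your constant $\frac{13}{36}$ is just a slightly different bookkeeping of the paper's bound $\alpha\tn{\Jc(\bteta_0)^T\rb}\tn{\rb}\le\frac12\tn{\Jc(\bteta_0)^T\rb}^2$. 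Your explicit convexity argument showing the whole segment stays in the ball, so that Assumption~\ref{ass2} controls the average Jacobian, is a step the paper leaves implicit and is worth stating.
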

\begin{proof} Under Assumptions \ref{ass1} and \ref{ass2}, applying Lemma \ref{asym pert} with $\A=\Jc(\x)$, $\B=\Jc(\y,\x)$, $\Cb=\Jc(\bteta_0)$, and $\eps=\alpha/3$, we conclude that
\begin{align*}
\rb^T\Jc(\y,\x)\Jc(\x)^T\rb-\tn{\Jc(\bteta_0)^T\rb}^2&\geq -\left(\frac{2\alpha}{3} \tn{\Jc(\bteta_0)^T\rb}\tn{\rb}+\frac{\alpha^2}{9}\tn{\rb}^2\right)\\
&\geq -\left(\frac{2\alpha}{3} \tn{\Jc(\bteta_0)^T\rb}\tn{\rb}+\frac{\alpha}{18}\tn{\Jc(\bteta_0)^T\rb}\tn{\rb}\right)\\
&\geq -\alpha  \tn{\Jc(\bteta_0)^T\rb}\tn{\rb}\\
&\geq - \frac{\tn{\Jc(\bteta_0)^T\rb}^2}{2}.
\end{align*}
This implies $\rb^T\Jc(\y,\x)\Jc(\x)^T\rb\geq \frac{\tn{\Jc(\bteta_0)^T\rb}^2}{2}$. The upper bound similarly follows from Lemma \ref{asym pert} by setting $\A=\B=\Jc(\x)$ and observing that the deviation is again upper bounded by $\frac{\tn{\Jc(\bteta_0)^T\rb}^2}{2}$.
\end{proof}
\begin{lemma}\label{lem 1} Suppose Assumptions \ref{ass1} and \ref{ass2} hold. Consider two consequent iterative updates $\vct{\theta}_\tau$ and $\vct{\theta}_{\tau+1}$ which by definition obey
\begin{align*}
\vct{\theta}_{\tau+1}:=\vct{\theta}_\tau-\eta \mathcal{J}^T(\vct{\theta}_\tau)\left(f(\vct{\theta}_\tau)-\y\right),
\end{align*}
with $\eta\le \frac{1}{3\beta^2}$. Also, denote the corresponding residuals by $\vct{r}_{\tau+1}:=f(\vct{\theta}_{\tau+1})-\y$ and $\vct{r}_{\tau}:=f(\vct{\theta}_{\tau})-\y$. Finally, assume $\bteta_{\tau},\bteta_{\tau+1}$ satisfy $\|\bteta_{\tau+1}-\bteta_0\|,\|\bteta_{\tau}-\bteta_0\|\leq R$. Then
\[
\tn{\rb_{\tau+1}}\leq \tn{\rb_{\tau}}-\frac{\eta}{4}   \frac{ \tn{\Jc(\bteta_0)^T\rb}^2}{\tn{\rb}}.
\]
\end{lemma}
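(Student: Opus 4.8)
The plan is to work with squared residual norms and reduce everything to the two estimates of Lemma~\ref{lem control} together with the global spectral bound in Assumption~\ref{ass3}. Write the gradient step as $\bteta_{\tau+1}-\bteta_\tau = -\eta\,\Jc(\bteta_\tau)^T\rb_\tau$, and use the average-Jacobian identity of Definition~\ref{avg jacob}, namely $f(\bteta_{\tau+1})-f(\bteta_\tau) = \Jc(\bteta_{\tau+1},\bteta_\tau)\,(\bteta_{\tau+1}-\bteta_\tau)$, to obtain
\[
\rb_{\tau+1} \;=\; \rb_\tau - \eta\,\Jc(\bteta_{\tau+1},\bteta_\tau)\Jc(\bteta_\tau)^T\rb_\tau .
\]
Expanding $\tn{\rb_{\tau+1}}^2$ then produces a cross term $-2\eta\,\rb_\tau^T\Jc(\bteta_{\tau+1},\bteta_\tau)\Jc(\bteta_\tau)^T\rb_\tau$ and a quadratic term $\eta^2\tn{\Jc(\bteta_{\tau+1},\bteta_\tau)\Jc(\bteta_\tau)^T\rb_\tau}^2$, and the whole argument amounts to bounding these two quantities.

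Since $\bteta_\tau,\bteta_{\tau+1}\in\mathcal{B}(\bteta_0,R)$ by hypothesis, I would invoke Lemma~\ref{lem control} with $\x=\bteta_\tau$, $\y=\bteta_{\tau+1}$, $\rb=\rb_\tau$ to lower bound the cross term by $\tfrac12\tn{\Jc(\bteta_0)^T\rb_\tau}^2$ and to bound $\tn{\Jc(\bteta_\tau)^T\rb_\tau}^2\le \tfrac32\tn{\Jc(\bteta_0)^T\rb_\tau}^2$; combined with $\|\Jc(\bteta_{\tau+1},\bteta_\tau)\|\le\beta$ (the average Jacobian is an average of operators of norm $\le\beta$ by Assumption~\ref{ass3}), the quadratic term is at most $\tfrac32\eta^2\beta^2\tn{\Jc(\bteta_0)^T\rb_\tau}^2$. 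This yields
\[
\tn{\rb_{\tau+1}}^2 \;\le\; \tn{\rb_\tau}^2 - \eta\Big(1-\tfrac32\eta\beta^2\Big)\tn{\Jc(\bteta_0)^T\rb_\tau}^2 \;\le\; \tn{\rb_\tau}^2 - \frac{\eta}{2}\tn{\Jc(\bteta_0)^T\rb_\tau}^2 ,
\]
where the last step uses the step-size choice $\eta\le\frac{1}{3\beta^2}$, which is exactly what forces $1-\tfrac32\eta\beta^2\ge\tfrac12$.

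The final step is to convert this squared-norm contraction into the stated form. I would use the elementary inequality $\sqrt{a-b}\le\sqrt a-\frac{b}{2\sqrt a}$ valid for $0\le b\le a$ (square the right side: $a-b+\tfrac{b^2}{4a}\ge a-b$), applied with $a=\tn{\rb_\tau}^2$ and $b=\tfrac{\eta}{2}\tn{\Jc(\bteta_0)^T\rb_\tau}^2$; the admissibility $b\le a$ is automatic because $\tn{\rb_{\tau+1}}^2\ge 0$ (or, directly, because $b\le\tfrac{\eta}{2}\beta^2\tn{\rb_\tau}^2\le\tfrac16\tn{\rb_\tau}^2$). This gives exactly $\tn{\rb_{\tau+1}}\le\tn{\rb_\tau}-\frac{\eta}{4}\,\tfrac{\tn{\Jc(\bteta_0)^T\rb_\tau}^2}{\tn{\rb_\tau}}$, as claimed. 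There is no genuine obstacle here — the content lives entirely in Lemma~\ref{lem control} — and the only points requiring a little care are keeping the two $\beta$-dependent constants straight when expanding the norm and justifying the square-root manipulation at the end.
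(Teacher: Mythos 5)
Your proposal is correct and follows essentially the same route as the paper: write $\rb_{\tau+1}=(\Iden-\eta\,\Jc(\bteta_{\tau+1},\bteta_\tau)\Jc(\bteta_\tau)^T)\rb_\tau$ via the average Jacobian, expand the square, control the cross term and the quadratic term with Lemma~\ref{lem control} together with $\|\Jc\|\le\beta$, and use $\eta\le\frac{1}{3\beta^2}$ to get the squared-norm decrease $\tn{\rb_{\tau+1}}^2\le\tn{\rb_\tau}^2-\frac{\eta}{2}\tn{\Jc(\bteta_0)^T\rb_\tau}^2$. The only difference is that you explicitly justify the final passage from squared norms to norms with the inequality $\sqrt{a-b}\le\sqrt a-\frac{b}{2\sqrt a}$, which the paper states as an implication without proof; your justification is valid.
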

\begin{proof} For this proof we use the short-hand $\Jc_{\tau+1,\tau}:=\mathcal{J}(\vct{\theta}_\tau,\vct{\theta}_\tau)$ and $\mathcal{J}_\tau:=\mathcal{J}(\vct{\theta}_\tau)$. We expand the residual at $\bteta_{\tau+1}$ using Lemma \ref{lem control} as follows
\begin{align*}
\tn{\rb_{\tau+1}}^2&=\tn{(\Iden-\eta\Jc_{\tau+1,\tau}\Jc_{\tau}^T)\rb_{\tau}}^2\\
&=\tn{\rb_{\tau}}^2-2\eta \rb_{\tau}^T\Jc_{\tau+1,\tau}\Jc_{\tau}^T\rb_{\tau}+\eta^2\tn{\Jc_{\tau+1,\tau}\Jc_{\tau}^T\rb_{\tau}}^2\\
&\leq \tn{\rb_{\tau}}^2-\eta  \tn{\Jc(\bteta_0)^T\rb}^2+\eta^2\bp^2\tn{\Jc_{\tau}^T\rb_{\tau}}^2\\
&\leq \tn{\rb_{\tau}}^2-\eta  \tn{\Jc(\bteta_0)^T\rb}^2+\frac{3}{2}\eta^2\bp^2\tn{\Jc(\bteta_0)^T\rb_{\tau}}^2
\end{align*}
Using the fact that $\eta\leq \frac{1}{3\bp^2}$, we conclude that
\[
\tn{\rb_{\tau+1}}^2\leq \tn{\rb_{\tau}}^2-\frac{\eta}{2}  \tn{\Jc(\bteta_0)^T\rb}^2\implies \tn{\rb_{\tau+1}}\leq \tn{\rb_{\tau}}-\frac{\eta}{4} \frac{ \tn{\Jc(\bteta_0)^T\rb}^2}{\tn{\rb}}.
\]
\end{proof}

\subsection{Completing the proof of Theorem \ref{metathm}}
With these lemmas in place we are now ready to complete the proof of Theorem \ref{metathm}. To this aim suppose the conclusions hold until iteration $\tau>0$. We shall show the result for iteration $\tau+1$. We first prove that iterates still stays inside the region $\|\bteta-\bteta_0\|\leq R$. To this aim first note that by the induction hypothesis we know that
\[
\|\bteta_{\tau}-\bteta_0\|\leq R-\frac{3\tn{\y-f(\bteta_{\tau})}}{\alpha}.
\]
Combining this with the gradient update rule, $\eta\leq 1/\beta^2$ and $\|\Jc\|\leq\beta$ yields
\begin{align*}
\|\bteta_{\tau+1}-\bteta_0\|&\leq \|\bteta_{\tau}-\bteta_0\|+\eta \|\Jc(\bteta_\tau)\rb_\tau\|\\
&\leq \|\bteta_{\tau}-\bteta_0\|+\eta \twonorm{\Jc(\bteta_\tau)\rb_\tau}\\
&\leq R-\frac{3\tn{\y-f(\bteta_{\tau})}}{\alpha}+\eta \tn{\Jc(\bteta_\tau)\rb_\tau}\\
&\leq R-\frac{3\tn{\y-f(\bteta_{\tau})}}{\alpha}+\frac{1}{\beta} \tn{\rb_\tau}\\
&\le R.
\end{align*}
Now that we have shown $\|\bteta_{\tau+1}-\bteta_0\|\leq R$, we can apply Lemma \ref{lem 1} to conclude that
\begin{align}
\tn{\rb_{\tau+1}}\leq \tn{\rb_{\tau}}-\frac{\eta}{4}   \frac{ \tn{\Jc(\bteta_0)^T\rb}^2}{\tn{\rb}}\leq  \tn{\rb_{\tau}}-\frac{\eta\alpha}{2} \tn{\Jc(\bteta_0)^T\rb}.\label{rhss}
\end{align}
Next, we complement this by using Lemma \ref{lem control} to control the increase in the distance of the iterates to the initial point. This allows us to conclude that
\begin{align*}
\|\bteta_{\tau+1}-\bteta_0\|&\leq \|\bteta_{\tau}-\bteta_0\|+\eta \|\grad{\bteta_\tau}\|,\\
\|\bteta_{\tau+1}-\bteta_0\|&\leq \|\bteta_{\tau}-\bteta_0\|+\eta \tn{\grad{\bteta_\tau}},\\
&\leq \|\bteta_{\tau}-\bteta_0\|+\eta \tn{\Jc^T(\bteta_{\tau})\rb_{\tau}},\\
&\leq \|\bteta_{\tau}-\bteta_0\|+1.25\eta \tn{\Jc^T(\bteta_{0})\rb_{\tau}}.
\end{align*}
Adding the latter two identities, we obtain
\[
\tn{\rb_{\tau+1}}+\frac{\alpha}{3}\|\bteta_{\tau+1}-\bteta_0\|\leq \tn{\rb_{\tau}}+\frac{\alpha}{3}\|\bteta_{\tau}-\bteta_0\|\leq \tn{\rb_0},
\]
completing the proof of \eqref{conc2}. Finally, the convergence rate guarantee \eqref{conc1} follows from \eqref{rhss} can be upper bounded by $(1-\eta\alpha^2)\tn{\rb_{\tau}}$.
\section{Lower bounds on the minimum eigenvalue of covariance matrices}
\label{hermitesec}
In this section we discuss lower bounds on the minimum eigenvalue of the neural network and output feature covariance matrices which involve higher order Khatri-Rao products. This results involve the Hermite expansion of the activation and its derivatives. For any $\phi$ with bounded Gaussian meaure i.e.~$\frac{1}{\sqrt{2\pi}}\int_{-\infty}^{+\infty} \phi^2(g)e^{-\frac{g^2}{2}}dg<\infty$ the Hermite coefficients $\{\mu_r(\phi)\}_{r=0}^{+\infty}$ associated to $\phi$ are defined as 
\begin{align*}
\mu_r(\phi):=\frac{1}{\sqrt{2\pi}}\int_{-\infty}^{+\infty} \phi(g)h_r(g)e^{-\frac{g^2}{2}}dg,
\end{align*}
where $h_r(g)$ is the normalized probabilists' Hermite polynomial defined by 
\begin{align*}
h_r(x):=\frac{1}{\sqrt{r!}}(-1)^r e^{\frac{x^2}{2}}\frac{d^r}{dx^r}e^{-\frac{x^2}{2}}.
\end{align*}
Using these expansions we prove the following simple lemma. The first one is a generalization of the reduction to quadratic activation Lemma (Lemma \ref{reduct}). We note that Lemma \ref{reduct} is a special case as $\widetilde{\mu}_\phi=\mu_0(\phi)$ and $\mu_\phi=\mu_1(\phi)$.
\begin{lemma}\label{reduct2} For an activation $\phi:\R\mapsto \R$ and a data matrix $\X\in\R^{n\times d}$ with unit Euclidean norm rows the neural network covariance matrix and eigenvalue obey
\begin{align}
\label{keidenh}
\mtx{\Sigma}\left(\X\right)=& \left(\mu_0^2(\phi')\vct{1}\vct{1}^T+\sum_{r=1}^{+\infty}\mu_r^2(\phi)\left(\X\X^T\right)^{\odot r}\right)\odot\left(\X\X^T\right)\succeq\mu_r^2(\phi')\left(\X\X^T\right)^{\odot(r+1)},\\
\lambda\left(\X\right)\ge&\mu_r^2(\phi')\sigma_{\min}^2\left(\X^{*(r+1)}\right)\quad\text{ for any }r=0,1,2,\ldots.
\label{keidenh2}
\end{align}
As a reminder, for a matrix $\mtx{A}\in\R^{n\times n}$, $\mtx{A}^{\odot r}\in\R^{n\times n}$ is defined inductively via $\mtx{A}^{\odot r}=\mtx{A}\odot \left(\mtx{A}^{\odot (r-1)}\right)$ with $\mtx{A}^{\odot 0}=\vct{1}\vct{1}^T$. Similarly, for a matrix $\X\in\R^{n\times d}$ with rows given by $\vct{x}_i\in\R^d$ we define the matrix $\X^{*r}\in\R^{n\times d^r}$ as
\begin{align*}
\big[\X^{*r}\big]_i=\left(\underbrace{\vct{x}_i\otimes \vct{x}_i\otimes \ldots \otimes \vct{x}_i}_{r} \right)^T 
\end{align*}
\end{lemma}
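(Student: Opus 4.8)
The plan is to replace the ad hoc ``complete the square'' computation behind Lemma~\ref{reduct} by a direct Hermite expansion of $\phi'$, which produces the full series in \eqref{keidenh} in one shot. Since $\X\X^T$ is deterministic, linearity of expectation gives $\mtx{\Sigma}(\X)=\big(\E_{\w\sim\mathcal{N}(\vct{0},\mtx{I}_d)}[\phi'(\X\w)\phi'(\X\w)^T]\big)\odot(\X\X^T)$, so it is enough to evaluate $\mtx{\Phi}:=\E_{\w}[\phi'(\X\w)\phi'(\X\w)^T]$ entrywise. First I would use that every row of $\X$ has unit norm: for each pair $(i,j)$ the pair $(\langle\x_i,\w\rangle,\langle\x_j,\w\rangle)$ is bivariate Gaussian with standard normal marginals and correlation exactly $(\X\X^T)_{ij}=\langle\x_i,\x_j\rangle$, whose absolute value is at most $1$ by Cauchy--Schwarz.

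Next I would expand $\phi'=\sum_{r\ge 0}\mu_r(\phi')\,h_r$ in $L^2$ of the standard Gaussian (legitimate since $\phi'$ has finite Gaussian second moment, e.g. it is bounded, including the ReLU step) and invoke the classical identity $\E[h_r(u)h_s(v)]=\delta_{rs}\,\rho^r$ for a bivariate standard normal pair $(u,v)$ of correlation $\rho$. Evaluating $\mtx{\Phi}_{ij}$ term by term yields $\mtx{\Phi}_{ij}=\sum_{r\ge 0}\mu_r^2(\phi')\,(\X\X^T)_{ij}^{\,r}$, the series converging absolutely because $\sum_r\mu_r^2(\phi')=\|\phi'\|_{L^2}^2<\infty$; in matrix form $\mtx{\Phi}=\sum_{r\ge 0}\mu_r^2(\phi')(\X\X^T)^{\odot r}$. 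Hadamard-multiplying by $\X\X^T$ and splitting off the $r=0$ term (where $(\X\X^T)^{\odot 0}=\vct{1}\vct{1}^T$) gives \eqref{keidenh}. Keeping only the $r=0,1$ terms recovers Lemma~\ref{reduct}, since $\mu_0(\phi')=\E[\phi'(g)]$ and $\mu_1(\phi')=\E[g\phi'(g)]$ are exactly the quantities $\widetilde{\mu}_\phi,\mu_\phi$.

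For the semidefinite bound I would argue that each $(\X\X^T)^{\odot r}$ is PSD --- repeated application of the Schur product theorem, which is subsumed by the nonnegativity part of Lemma~\ref{minHad} --- so every finite partial sum of $\mtx{\Phi}$ dominates the single summand $\mu_r^2(\phi')(\X\X^T)^{\odot r}$ in the Loewner order, and closedness of the PSD cone lets me pass to the limit: $\mtx{\Phi}\succeq\mu_r^2(\phi')(\X\X^T)^{\odot r}$. Hadamard-multiplying both sides by the PSD matrix $\X\X^T$ preserves the order, because $(\mtx{A}-\mtx{B})\odot\mtx{C}\succeq\mtx{0}$ whenever $\mtx{A}\succeq\mtx{B}$ and $\mtx{C}\succeq\mtx{0}$, giving $\mtx{\Sigma}(\X)\succeq\mu_r^2(\phi')(\X\X^T)^{\odot(r+1)}$. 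Finally, the $r$-way Khatri--Rao identity $(\X\X^T)^{\odot(r+1)}=\X^{*(r+1)}(\X^{*(r+1)})^T$ from the notations section gives $\lambda_{\min}\!\big((\X\X^T)^{\odot(r+1)}\big)=\sigma_{\min}^2(\X^{*(r+1)})$, and \eqref{keidenh2} follows from $\lambda(\X)=\lambda_{\min}(\mtx{\Sigma}(\X))$.

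I expect the only genuine obstacle to be the bookkeeping for the infinite Hermite series --- convergence of the expansion in $L^2$, justifying the termwise evaluation of $\E[h_r(u)h_s(v)]$ together with the interchange of summation and expectation, and verifying that the Loewner lower bound survives the limit. The remaining ingredients --- the unit-norm reduction to a correlation, Schur-product positivity, and the Khatri--Rao identity --- are all immediate consequences of facts already stated in the paper.
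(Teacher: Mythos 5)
Your proposal is correct and follows essentially the same route as the paper: Hermite-expand $\phi'$, use $\E[h_r(u)h_s(v)]=\delta_{rs}\rho^r$ to get the entrywise series $\sum_r \mu_r^2(\phi')(\X\X^T)^{\odot r}$, drop all but one nonnegative (PSD) term, and Hadamard-multiply by $\X\X^T$ using the Schur product theorem, with the Khatri--Rao factorization $(\X\X^T)^{\odot(r+1)}=\X^{*(r+1)}(\X^{*(r+1)})^T$ giving \eqref{keidenh2}. The only (cosmetic) differences are that the paper certifies positivity of each summand directly via the Gram factorization $\mu_r^2(\phi')\X^{*r}(\X^{*r})^T$ rather than iterating Schur, and that you are more explicit about the $L^2$ convergence bookkeeping that the paper leaves implicit.
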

\begin{proof} To prove this result note that by the properties of Hermite expansions we have
\begin{align*}
\Big[\E[\phi'\left(\mtx{X}\vct{w}\right)\phi'\left(\mtx{X}\vct{w}\right)^T]\Big]_{ij}=&\E[\phi'(\vct{x}_i^T\vct{w})\phi'(\vct{x}_j^T\vct{w})]\\
=&\sum_{r=0}^\infty \mu_r^2(\phi')(\vct{x}_i^T\vct{x}_j)^r
\end{align*}
Thus
\begin{align*}
\mtx{\Sigma}\left(\X\right)=&\left(\sum_{r=0}^\infty \mu_r^2(\phi')\left(\mtx{X}\mtx{X}^T\right)^{\odot r}\right)\odot\left(\X\X^T\right).
\end{align*}
Furthermore,
\begin{align*}
\sum_{r=0}^\infty \mu_r^2(\phi')\left(\mtx{X}\mtx{X}^T\right)^{\odot r}=\sum_{r=0}^\infty \left(\mu_r(\phi')\mtx{X}^{*r}\right)\left(\mu_r(\phi')\mtx{X}^{*r}\right)^T\succeq \mu_r^2(\phi')\left(\mtx{X}^{*r}\right)\left(\mtx{X}^{*r}\right)^T=\mu_r^2(\phi')\left(\X\X^T\right)^{\odot r}.
\end{align*}
Using the latter combined with the fact that the Hadamard product of two PSD matrices are PSD we arrive at \eqref{keidenh}. The latter also implies \eqref{keidenh2}.
\end{proof}
Similarly, it is also easy to prove the following result about the output feature covariance.
\begin{lemma}\label{reduct2} For an activation $\phi:\R\mapsto \R$ and a data matrix $\X\in\R^{n\times d}$ with unit Euclidean norm rows the output feature covariance matrix and eigenvalue obey
\begin{align}
\label{sigphi}
\widetilde{\mtx{\Sigma}}\left(\X\right)=& \left(\mu_0^2(\phi)\vct{1}\vct{1}^T+\sum_{r=1}^{+\infty}\mu_r^2(\phi)\left(\X\X^T\right)^{\odot r}\right)\succeq\mu_r^2(\phi)\left(\X\X^T\right)^{\odot(r)},\\
\widetilde{\lambda}\left(\X\right)\ge&\mu_r^2(\phi)\sigma_{\min}^2\left(\X^{*r}\right)\quad\text{ for any }r=1,2,\ldots.
\label{sigphi2}
\end{align}
\end{lemma}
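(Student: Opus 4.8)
The plan is to follow the proof of the preceding lemma (the companion statement for the neural network covariance $\mtx{\Sigma}(\X)$) essentially verbatim, with $\phi'$ replaced by $\phi$ and without the extra Hadamard factor $\X\X^T$ that appears there. First I would diagonalize the kernel via Hermite polynomials. Since each row satisfies $\twonorm{\vct{x}_i}=1$, for $\w\sim\mathcal{N}(\vct{0},\mtx{I}_d)$ the pair $(\vct{x}_i^T\w,\vct{x}_j^T\w)$ is jointly Gaussian with standard marginals and correlation $\vct{x}_i^T\vct{x}_j$. Expanding $\phi$ in the orthonormal probabilists' Hermite basis, $\phi=\sum_{r\ge 0}\mu_r(\phi)h_r$, and using the orthogonality relation $\E[h_r(\vct{x}_i^T\w)h_s(\vct{x}_j^T\w)]=(\vct{x}_i^T\vct{x}_j)^r\delta_{rs}$, one obtains
\begin{align*}
\big[\widetilde{\mtx{\Sigma}}(\X)\big]_{ij}=\E\big[\phi(\vct{x}_i^T\w)\phi(\vct{x}_j^T\w)\big]=\sum_{r=0}^{\infty}\mu_r^2(\phi)\,(\vct{x}_i^T\vct{x}_j)^r,
\end{align*}
so that $\widetilde{\mtx{\Sigma}}(\X)=\sum_{r=0}^{\infty}\mu_r^2(\phi)\left(\X\X^T\right)^{\odot r}$. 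Peeling off the $r=0$ term and recalling $\left(\X\X^T\right)^{\odot 0}=\vct{1}\vct{1}^T$ then gives the asserted identity in \eqref{sigphi}. The only analytic point is convergence of the series, which holds because $\sum_{r}\mu_r^2(\phi)=\E_{g\sim\mathcal{N}(0,1)}[\phi^2(g)]=\frac{1}{\sqrt{2\pi}}\int_{-\infty}^{+\infty}\phi^2(g)e^{-g^2/2}\,dg<\infty$ by hypothesis.

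Next I would derive the semidefinite lower bound exactly as in the companion lemma: each summand is a Gram matrix, since $\left(\X\X^T\right)^{\odot r}=\X^{*r}(\X^{*r})^T$ yields $\mu_r^2(\phi)\left(\X\X^T\right)^{\odot r}=\big(\mu_r(\phi)\X^{*r}\big)\big(\mu_r(\phi)\X^{*r}\big)^T\succeq\mtx{0}$. Hence, for any fixed $r\ge 1$, discarding all other (PSD) terms of the series gives $\widetilde{\mtx{\Sigma}}(\X)\succeq\mu_r^2(\phi)\left(\X\X^T\right)^{\odot r}$, which is the second inequality in \eqref{sigphi}. Taking minimum eigenvalues of both sides, together with $\widetilde{\lambda}(\X)=\lambda_{\min}(\widetilde{\mtx{\Sigma}}(\X))$ and $\lambda_{\min}\big(\X^{*r}(\X^{*r})^T\big)=\sigma_{\min}^2(\X^{*r})$, produces \eqref{sigphi2}.

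I do not expect a real obstacle here: the argument is the $\phi$-analogue of the already-established reduction lemma for $\mtx{\Sigma}(\X)$, and the only point worth flagging is the Hermite product formula for the Gaussian kernel (equivalently, the fact that the normalized Hermite polynomials $\{h_r\}$ form an orthonormal eigenbasis that diagonalizes the bilinear form $(f,g)\mapsto\E[f(\vct{x}_i^T\w)g(\vct{x}_j^T\w)]$). I would invoke this as a standard fact about Hermite expansions rather than re-derive it, precisely as is done in the proof of the preceding lemma, and then the remaining steps are the same elementary PSD manipulations.
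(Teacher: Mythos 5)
Your proposal is correct and follows the paper's own argument essentially verbatim: the Hermite expansion gives $\big[\widetilde{\mtx{\Sigma}}(\X)\big]_{ij}=\sum_{r\ge 0}\mu_r^2(\phi)(\vct{x}_i^T\vct{x}_j)^r$, each summand is the PSD Gram matrix $\big(\mu_r(\phi)\X^{*r}\big)\big(\mu_r(\phi)\X^{*r}\big)^T$, and dropping all but one term yields \eqref{sigphi} and hence \eqref{sigphi2}. The added remarks on series convergence and the orthogonality relation are fine but not needed beyond what the paper already invokes as standard facts about Hermite expansions.
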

\begin{proof} To prove this result note that by the properties of Hermite expansions we have
\begin{align*}
\Big[\E[\phi\left(\mtx{X}\vct{w}\right)\phi\left(\mtx{X}\vct{w}\right)^T]\Big]_{ij}=&\E[\phi(\vct{x}_i^T\vct{w})\phi(\vct{x}_j^T\vct{w})]\\
=&\sum_{r=0}^\infty \mu_r^2(\phi)(\vct{x}_i^T\vct{x}_j)^r
\end{align*}
Thus
\begin{align*}
\widetilde{\mtx{\Sigma}}\left(\X\right)=&\sum_{r=0}^\infty \mu_r^2(\phi)\left(\mtx{X}\mtx{X}^T\right)^{\odot r}\succeq \mu_r^2(\phi)\left(\X\X^T\right)^{\odot(r)},
\end{align*}
concluding the proof of \eqref{sigphi}. This in turn also implies \eqref{sigphi2}.
\end{proof}
\section{Proofs for datasets with $\delta$-separation (Proof of Theorem \ref{reluthmsep})}\label{sec sep}
We begin by stating a result regarding the covariance of the indicator mapping. Below we use $\mathcal{I}$ to denote the indicator mapping i.e.~$\mathcal{I}(z)=\mathbb{1}_{\{z\ge 0\}}$.
\begin{theorem} \label{sep thm main}Let $\x_1,\dots,\x_n$ be points in $\R^d$ with unit Euclidian norm and $\w\sim\Nn(0,\Iden_d)$. Form the matrix $\X\in\R^{n\times d}=[\x_1~\dots~\x_n]^T$. Suppose there exists $\delta>0$ such that for every $1\leq i\neq j\leq n$ we have that
\[
\min(\tn{\x_i-\x_j},\tn{\x_i+\x_j})\geq \delta.
\]
Then, the covariance of the vector $\ind{\X\w}$ obeys
\begin{align}\label{require}
\E[\ind{\X\w}\ind{\X\w}^T]\succeq \frac{\delta}{100n^2}.
\end{align}
\end{theorem}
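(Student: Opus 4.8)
The claimed bound $\E[\ind{\X\w}\ind{\X\w}^T]\succeq\frac{\delta}{100n^2}\Iden_n$ is equivalent, since the matrix is symmetric positive semidefinite, to showing $\E\big[(\vct{a}^T\ind{\X\w})^2\big]\ge \frac{\delta}{100n^2}$ for every unit vector $\vct{a}\in\R^n$; throughout I take $n\ge 2$ (the case $n=1$ being immediate). Fix such an $\vct{a}$ and pick an index $i$ with $a_i^2\ge 1/n$. Since $\w\sim\Nn(0,\Iden_d)$, I would decompose $\w=g\,\x_i+\tilde{\w}$, where $g:=\langle\x_i,\w\rangle\sim\Nn(0,1)$ and $\tilde{\w}$ is the projection of $\w$ onto the orthogonal complement of $\x_i$; the key point is that $g$ and $\tilde{\w}$ are \emph{independent}. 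Writing $\theta_{ij}\in[0,\pi]$ for the angle between $\x_i$ and $\x_j$ and $\x_j^\perp$ for the component of $\x_j$ orthogonal to $\x_i$ (so $\tn{\x_j^\perp}=\sin\theta_{ij}$), one has $\langle\x_j,\w\rangle=g\cos\theta_{ij}+\langle\x_j^\perp,\tilde{\w}\rangle$ for $j\ne i$, while $\langle\x_i,\w\rangle=g$. Thus the sign governing the $i$-th indicator lives entirely in $g$, which is independent of everything the remaining indicators depend on.

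Here is where the separation hypothesis enters: $\min(\tn{\x_i-\x_j},\tn{\x_i+\x_j})\ge\delta$ forces $\abs{\langle\x_i,\x_j\rangle}\le 1-\delta^2/2$, and hence $\tn{\x_j^\perp}^2=\sin^2\theta_{ij}=(1-\abs{\cos\theta_{ij}})(1+\abs{\cos\theta_{ij}})\ge\delta^2/2$. Set $\eps:=\delta/(3n)$, which satisfies $\eps\le 1$ since the $\x_i$ are unit vectors (so $\delta\le2$), and consider the event $\Omega:=\{\abs g\le\eps\}\cap G$, where $G:=\{\tilde{\w}:\ \abs{\langle\x_j^\perp,\tilde{\w}\rangle}>\eps\ \text{for all }j\ne i\}$. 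On $\Omega$, for each $j\ne i$ the term $g\cos\theta_{ij}$ has magnitude $\le\eps<\abs{\langle\x_j^\perp,\tilde{\w}\rangle}$, so $\ind{\langle\x_j,\w\rangle}=\ind{\langle\x_j^\perp,\tilde{\w}\rangle}$ is (a.s.) a function of $\tilde{\w}$ alone, whereas $\ind{\langle\x_i,\w\rangle}=\ind{g}$. Since $g\perp\tilde{\w}$ and the conditional law of $g$ given $\{\abs g\le\eps\}$ is symmetric, conditionally on $\{\tilde{\w}\in G\}$ the variable $\ind{g}$ is Bernoulli$(1/2)$ and independent of $c(\tilde{\w}):=\sum_{j\ne i}a_j\ind{\langle\x_j^\perp,\tilde{\w}\rangle}$. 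Consequently,
\begin{align*}
\E\big[(\vct{a}^T\ind{\X\w})^2\big]\ \ge\ \E\big[(\vct{a}^T\ind{\X\w})^2\mathbb{1}_\Omega\big]
&=\E_{\tilde{\w}}\Big[\mathbb{1}_{\tilde{\w}\in G}\,\E_g\big[(a_i\ind{g}+c(\tilde{\w}))^2\mathbb{1}_{\abs g\le\eps}\big]\Big]\\
&\ge\ \Pro(\abs g\le\eps)\,\Pro(\tilde{\w}\in G)\,\frac{a_i^2}{4},
\end{align*}
where the final step uses that, for fixed $\tilde{\w}\in G$, $c(\tilde{\w})$ is a constant and $\ind{g}\mid\{\abs g\le\eps\}$ is Bernoulli$(1/2)$, so $\E_g[(a_i\ind{g}+c(\tilde{\w}))^2\mathbb{1}_{\abs g\le\eps}]\ge\Pro(\abs g\le\eps)\cdot a_i^2/4$.

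It then remains to bound the two probabilities by elementary Gaussian estimates. First, $\Pro(\abs g\le\eps)\ge\frac{2\eps}{\sqrt{2\pi}}e^{-\eps^2/2}\ge\frac{2\eps}{5}$ for $\eps\le1$. Second, since $\langle\x_j^\perp,\tilde{\w}\rangle\sim\Nn(0,\tn{\x_j^\perp}^2)$ with $\tn{\x_j^\perp}\ge\delta/\sqrt2$, a union bound together with $\Pro_{h\sim\Nn(0,1)}(\abs h\le t)\le\frac{2t}{\sqrt{2\pi}}$ gives $\Pro(\tilde{\w}\notin G)\le (n-1)\frac{2\eps\sqrt2/\delta}{\sqrt{2\pi}}<\frac{2n\eps}{\sqrt\pi\,\delta}=\frac{2}{3\sqrt\pi}<\frac25$, so $\Pro(\tilde{\w}\in G)>\frac35$. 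Plugging these in together with $a_i^2\ge1/n$ and $\eps=\delta/(3n)$,
\[
\E\big[(\vct{a}^T\ind{\X\w})^2\big]\ \ge\ \frac{1}{n}\cdot\frac14\cdot\frac{2\eps}{5}\cdot\frac35\ =\ \frac{\delta}{50n^2}\ \ge\ \frac{\delta}{100n^2},
\]
which is the desired bound. I expect the crux to be the construction in the second paragraph — showing that $\eps$ can be chosen small enough that, near the hyperplane $\{\,\w:\langle\x_i,\w\rangle=0\,\}$, the sign of $\langle\x_i,\w\rangle$ cannot be entangled with \emph{any} of the other $n-1$ sign patterns — which is also the sole place the $\delta$-separation assumption is used (through $\tn{\x_j^\perp}\ge\delta/\sqrt2$); the probability estimates afterwards are routine anti-concentration.
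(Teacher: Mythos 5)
Your proof is correct and follows essentially the same route as the paper's: reduce to a quadratic-form lower bound $\E[(\vct{a}^T\ind{\X\w})^2]\ge\delta/(100n^2)$ for a unit vector $\vct{a}$ with $a_i^2\ge 1/n$, decompose $\w$ along $\x_i$ into independent components, and use the $\delta$-separation (via $\tn{\x_j^\perp}\ge\delta/\sqrt{2}$) to build an event of probability $\gtrsim\delta/n$ on which the $i$-th indicator is an independent fair coin while the remaining indicators are frozen functions of the orthogonal part. The only cosmetic difference is the final step: the paper converts the conditional symmetry into a small-ball bound $\Pro(|\vct{a}^T\ind{\X\w}|\ge\tin{\vct{a}}/2)\ge c\,\delta/n$ via $\max(|a|,|b|)\ge|a-b|/2$ and then passes to the second moment, whereas you compute the conditional second moment $\E[(a_iB+c)^2]\ge a_i^2/4$ for $B\sim\mathrm{Bernoulli}(1/2)$ directly --- equivalent in substance and, if anything, slightly cleaner.
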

\begin{proof} Fix a unit length vector $\ab\in\R^n$. Suppose there exists constants $c_1,c_2$ such that
\begin{align}\label{needed}
\Pro(|\ab^T\ind{\X\w}|\geq c_1\tin{\ab})\geq \frac{c_2\delta }{n}.
\end{align}
This would imply that 
\[
\E[(\ab^T\ind{\X\w})^2]\geq \E[|\ab^T\ind{\X\w}|]^2\geq c_1^2\tin{\ab}^2\frac{c_2\delta}{n}\geq c_1^2c_2\frac{\delta}{n^2}.
\]
Since this is true for all $\ab$, we find \eqref{require} with $c_1^2c_2=\frac{1}{100}$ by choosing $c_1=1/2,c_2=1/25$ as described later. Hence, our goal is proving \eqref{needed}. For the most part, our argument is based on exploiting independence of orthogonal decomposition associated with Gaussian vectors and we will refine the argument of \cite{zou2018stochastic}. Without losing generality, assume $|a_1|=\tin{\ab}$ and construct an orthonormal basis $\Qb$ in $\R^d$ where the first column is equal to $\x_1$ and $\Qb=[\x_1~\bar{\Qb}]$. Note that $\g=\Qb^T\w\sim\Nn(0,\Iden_d)$ and we have
\[
\w=\Qb\g=g_1\x_1+\bar{\Qb}\bar{\g}.
\]
For $0\leq \gamma\leq 1/2$, Gaussian small ball guarantees
\[
\Pro( |g_1|\leq \gamma)\geq \frac{7\gamma}{10}.
\]
Next, we argue that $\z_i=\li{\bar{\Qb}\bar{\g}},\x_i\ri$ is small for all $i\neq 1$. For a fixed $i\geq 2$, observe that
\[
\z_i\sim\Nn(0,1-(\x_1^T\x_i)^2).
\]
Note that 
\[
1-|\x_1^T\x_i|=\frac{\min(\tn{\x_1-\x_i}^2,\tn{\x_1+\x_i}^2)}{2}\geq \frac{\delta^2}{2}.
\]
Hence $1-(\x_1^T\x_i)^2\geq \delta^2/2$. From Gaussian small ball and variance bound on $\z_i$, we have
\[
\Pro(|\z_i|\leq \gamma)\leq \sqrt{\frac{{2}}{{\pi}}}\frac{\gamma}{\sqrt{1-(\x_1^T\x_i)^2}}\leq\frac{2\gamma}{\delta\sqrt{\pi}}
\]
Union bounding, we find that, with probability $1-\frac{2n\gamma}{\sqrt{\pi}\delta}$, we have that, $|\z_i|> \gamma$ for all $i\geq 2$. Since $\bar{\g}$ is independent of $g_1$, setting $\gamma=\frac{\delta}{2\sqrt{2}n}$ (which is at most $1/2$ since $\delta\leq \sqrt{2}$), 
\[
\Pro(E):=\Pro(|g_1|\leq \gamma,~|\z_i|> \gamma~\forall~i\geq 2)\geq (1-\frac{2n\gamma}{\sqrt{\pi}\delta})\frac{2\gamma}{5}\geq \frac{\delta}{12n}.
\]
To proceed, note that
\[
f(\g):=\ab^T\ind{\X\w}=a_1\ind{g_1}+\sum_{i= 2}^n(a_i\times\ind{\x_i^T\x_1g_1+\x_i^T\bar{\Qb}\bar{\g}})
\]
On the event $E$, we have that $\ind{\x_i^T\x_1g_1+\x_i^T\bar{\Qb}\bar{\g}}=\ind{\x_i^T\bar{\Qb}\bar{\g}}$ since $|g_1|\leq \gamma\leq |\x_i^T\bar{\Qb}\bar{\g}|$. Hence, on $E$, 
\[
f(\g)=a_1\ind{g_1}+\text{rest}(\bar{\g}),
\]
where $\text{rest}(\bar{\g})=\sum_{i= 2}^n(a_i\times\ind{\x_i^T\bar{\Qb}\bar{\g}})$. Furthermore, conditioned on $E$, $g_1,\bar{\g}$ are independent as $\z_i$'s are function of $\bar{\g}$ alone hence, $E$ can be split into two equally likely events that are symmetric with respect to $g_1$ i.e.~$g_1\geq 0$ and $g_1<  0$. Consequently, 
\begin{align}
\Pro(|f(\g)|\geq\max(|a_1\ind{g_1}+\text{rest}(\bar{\g})|,|a_1\ind{-g_1}+\text{rest}(\bar{\g})|) \bgl E)\geq 1/2
\end{align}
Now, using $\max(|a|,|b|)\geq |a-b|/2$, we find
\[
\Pro(|f(\g)|\geq|a_1||\ind{g_1}-\ind{-g_1}|/2 \bgl E)=\Pro(|f(\g)|\geq|a_1| /2\bgl E)=\Pro(|f(\g)|\geq\tin{\ab} /2\bgl E)\geq 1/2.
\]
This yields $\Pro(|f(\g)|\geq\tin{\ab}/2)\geq \Pro(E)/2\geq \delta/24n$, concluding the proof by using $c_1=1/2,~c_2=1/25$.
\end{proof}
\begin{corollary}[Covariance of ReLU Jacobian] \label{sep cov}Let $\x_1,\dots,\x_n$ be points in $\R^d$ with unit Euclidian norm and $\w\sim\Nn(0,\Iden_d)$. Form the matrix $\X\in\R^{n\times d}=[\x_1~\dots~\x_n]^T$. Suppose there exists $\delta>0$ such that for every $1\leq i\neq j\leq n$, the input sample pairs have $\delta$ distance i.e.
\[
\min(\tn{\x_i-\x_j},\tn{\x_i+\x_j})\geq \delta.
\]
Then, using Lemma \ref{minHad} and Theorem \ref{sep thm main}
\begin{align}\label{require}
\E[\ind{\X\w}\ind{\X\w}^T \odot\X\X^T]\succeq \frac{\delta}{100n^2}.
\end{align}
\end{corollary}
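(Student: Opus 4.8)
The plan is short: the statement follows at once from Theorem \ref{sep thm main} and Schur's product theorem (Lemma \ref{minHad}). First I would observe that $\X\X^T$ is deterministic, so by linearity of the Hadamard product in each argument together with linearity of expectation,
\[
\E\big[\ind{\X\w}\ind{\X\w}^T\odot \X\X^T\big]=\Big(\E\big[\ind{\X\w}\ind{\X\w}^T\big]\Big)\odot\big(\X\X^T\big).
\]
Writing $\M:=\E[\ind{\X\w}\ind{\X\w}^T]$, this matrix is PSD as a second-moment matrix, and Theorem \ref{sep thm main} gives $\M\succeq \frac{\delta}{100n^2}\Iden_n$. Moreover $\X\X^T$ is PSD and, since the samples have unit Euclidean norm, its diagonal entries are $(\X\X^T)_{ii}=\tn{\x_i}^2=1$.

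Next I would apply Lemma \ref{minHad} with $\A=\M$ and $\B=\X\X^T$ to get
\[
\lambda_{\min}\big(\M\odot \X\X^T\big)\ge \Big(\min_i (\X\X^T)_{ii}\Big)\lambda_{\min}(\M)=\lambda_{\min}(\M)\ge \frac{\delta}{100n^2},
\]
which combined with the previous display is exactly the claimed bound $\E[\ind{\X\w}\ind{\X\w}^T\odot \X\X^T]\succeq \frac{\delta}{100n^2}\Iden_n$.

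There is no real obstacle here; essentially all the work sits in Theorem \ref{sep thm main}, whose proof is the delicate anti-concentration argument for $\ind{\X\w}$ that exploits the independence in the orthogonal decomposition of a Gaussian vector together with the $\delta$-separation of the samples. The only points to be careful about are that the deterministic factor $\X\X^T$ can be pulled through the Hadamard product and out of the expectation, and that the diagonal entry appearing in Schur's bound is exactly $1$ because of the unit-norm normalization, so the lower bound is not degraded by any dimension-dependent factor.
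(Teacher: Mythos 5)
Your proof is correct and is exactly the argument the paper intends: pull the deterministic factor $\X\X^T$ out of the expectation, invoke Theorem \ref{sep thm main} for the lower bound on $\E[\ind{\X\w}\ind{\X\w}^T]$, and apply Lemma \ref{minHad} using that the diagonal entries of $\X\X^T$ equal $1$ by the unit-norm normalization. The paper states the corollary without writing out these steps, and your proposal supplies precisely the missing details.
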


\paragraph*{Proof of Theorem \ref{reluthmsep}}

\begin{proof} For proof, we wish to apply the Meta-Theorem \ref{reluthmg} with proper value of $\la(\X)$. Under Assumption \ref{ass sep}, using Corollary \ref{sep cov}, we have that
\[
\la(\X)\geq \frac{\delta}{100 n^2}.
\]
Substituting this $\la(\X)$ value results in the advertised result $k\geq \order{(1+\nu)^2 n^9\|\X\|^6/\delta^4}$ and the associated learning rate. 
\end{proof}

\end{document}